\newcommand{\bea}{\begin{equation}}
\newcommand{\eea}{\end{equation}}
\newcommand{\bes}{\begin{split}}
\newcommand{\ees}{\end{split}}
\newcommand{\benum}{\begin{enumerate}}
\newcommand{\eenum}{\end{enumerate}}
\newcommand{\bigO}{\ensuremath{O}}
\newcommand{\bigOtilde}{\ensuremath{\widetilde{O}}}
\newcommand{\cX}{{\cal X}}
\newcommand{\myref}[1]{(\ref{#1})}
\DeclareMathOperator{\argmax}{argmax}
\DeclareMathOperator{\argmin}{argmin}
\newtheorem{exm}{Example}}
\newtheorem{defn}{Definition}}
\newtheorem{asmp}{Assumption}}
        \newtheorem{theo}{Theorem}
        \newtheorem{lemm}{Lemma}
\newcommand{\qed}{\nolinebreak[1]~~~\hspace*{\fill} \rule{5pt}{5pt}\vspace*{\parskip}\vspace*{1ex}}
\newcommand {\commentout}[1] {}
\newcommand{\defeq}{\vcentcolon=}
\newcommand{\subgnorm}[1]{\ensuremath |\!\|#1\|\!|_{\psi_2}}
\newcommand{\trace}{\text{trace}}
\newcommand{\betaval}{\ensuremath C\psi(E_{r,\max})(w(\Omega_R)+\sqrt{\gamma^2+\log{T}}\phi(\Omega_R)/2)}
\newcommand{\lambdaval}[1]{\ensuremath c_0 (w(\Omega_R)+\sqrt{\gamma^2+\log{T}})/\sqrt{#1}}
\newenvironment{theorepeat}[1]{\textbf{\@begintheorem{#1}}{\unskip}\itshape}{\@endtheorem}
\begin{document}

\title{Structured Stochastic Linear Bandits}

\author{Nicholas Johnson, Vidyashankar Sivakumar, Arindam Banerjee \vspace{5pt} \\
\{njohnson,sivakuma,banerjee@cs.umn.edu\} \vspace{5pt} \\
Department of Computer Science and Engineering \\
University of Minnesota, Twin Cities}

\maketitle

\begin{abstract}
The stochastic linear bandit problem proceeds in rounds where at each round the
algorithm selects a vector from a decision set after which it receives
a noisy linear loss parameterized by an unknown vector. The goal in such a
problem is to minimize the (pseudo) regret which is the difference between
the total expected loss of the algorithm and the total expected loss of the
best fixed vector in hindsight. In this paper, we consider settings where
the unknown parameter has structure, e.g., sparse,
group sparse, low-rank, which can be captured by a norm, e.g., $L_1$, $L_{(1,2)}$,
nuclear norm. We focus on constructing confidence ellipsoids which contain the unknown
parameter across all rounds with high-probability. We show the radius of such
ellipsoids depend on the Gaussian width of sets associated with the
norm capturing the structure.
Such characterization leads to tighter confidence ellipsoids
and, therefore, sharper regret bounds compared to bounds in the existing literature
which are based on the ambient dimensionality.


\end{abstract}

\section{Introduction} \label{sec:intro}
We consider the stochastic linear bandit problem~\cite{dahk08,abps11} which proceeds
in rounds $t=1, \dots, T$ where at each round $t$ the algorithm selects a vector
$x_t$ from some decision set $\mathcal{X} \subset \mathbb{R}^p$
and receives a noisy loss defined as
$\ell_t(x_t) = \langle x_t, \theta^* \rangle + \eta_t$ where $\theta^*$ is
an unknown parameter and $\eta_t$ is martingale noise. The algorithm observes only
$\ell_t(x_t)$ at each round $t$ and its goal is to minimize the cumulative loss.
We measure its performance by the (pseudo) regret~\cite{buce12} defined as
\begin{equation} \label{eq:pseudo_regret:sec:intro}
R_T = \sum_{t=1}^T \langle x_t, \theta^* \rangle - \underset{x^* \in \mathcal{X}}{\argmin} \sum_{t=1}^T \langle x^*, \theta^* \rangle~.
\end{equation}

The stochastic linear bandit can be used to model problems in several real-world
applications ranging from recommender systems to medical treatments to
network security. Frequently, in such applications, one has knowledge of the
structure of the unknown parameter $\theta^*$, for example, $\theta^*$ may be
sparse, group sparse, or low-rank. Previous works~\cite{dahk08,abps11} either
made no structural assumptions on $\theta^*$
and proved regret bounds\footnote{The $\bigOtilde(\cdot)$ notation selectively
hides constants and log terms.} of the form
$\bigOtilde(p\sqrt{T})$ or assumed $\theta^*$ was $s$-sparse ($s$ non-zero
elements) and showed~\cite{abps12} the regret sharpens to
$\bigOtilde(\sqrt{spT})$.
In this paper, we consider the setting where $\theta^*$ is any
generally structure vector (sparse, group sparse, low-rank, etc.)
such that the structure can be captured by some norm
 ($L_1$, $L_{(1,2)}$, nuclear norm, etc.).

Our approach follows previous works~\cite{dahk08,abps11,abps12}
which use the \emph{optimism-in-the-face-of-uncertainty} principle~\cite{buce12}
to design a class of algorithms which construct a confidence ellipsoid $C_t$
such that $\theta^* \in C_t$ across all rounds with high-probability. After
which, the algorithm selects a single $x_{t+1}$ by solving a bilinear
optimization problem with respect to parameters $x \in \mathcal{X}$ and $\theta \in C_t$.

Our algorithm differs from previous algorithms~\cite{dahk08,abps11,abps12} in
two key ways. First, for the initial rounds, we select random samples from the
decision set $\mathcal{X}$ to compute an estimate of $\theta^*$ such that the
estimate is statistically consistent. The length of
the random estimation rounds is dependent on the structure, where, for example,
if $\theta^*$ is $s$-sparse
scales like $s\log{p}$ and if $\theta^*$ is unstructured
scales like $p$.
Second, after the random estimation rounds, we select samples uniformly at random
from specific subsets of $\mathcal{X}$.
More specifically, previous works selected a sample by solving a bilinear optimization problem
however, such an approach only gives one, possibly unique, solution. We build on
such works by solving a similar bilinear optimization problem but then
center two $L_2$ balls of suitable radii over the parameters $x$ and $\theta$
and select $x_{t+1}$
uniformly at random using such balls.

\textbf{Overview of Results.} The main technical challenge in previous
works~\cite{dahk08,abps11,abps12} is constructing confidence ellipsoids which
contain $\theta^*$ across all rounds with high-probability.
The focus of our work is again to construct confidence ellipsoids such that
$\theta^* \in C_t$ across all rounds with high-probability but which are general
enough to hold for any norm structured $\theta^*$. Moreover, we desire that the
ellipsoids are tighter than previous works in order to provide sharper regret bounds.
Previous works~\cite{dahk08,abps11} constructed the confidence ellipsoids by
solving a ridge regression problem to compute an estimate $\hat{\theta}_t$
and centered an ellipsoid over the estimate. We generalize such an approach
by instead solving a norm regularized regression problem, e.g., Lasso, given
the structure of $\theta^*$.
We show our construction of $C_t$ contains $\theta^*$ across all rounds
with high-probability by extending recent results in structured
estimation~\cite{cata07,crpw12,newa11,bcfs14} which rely on i.i.d. samples
to active sampling.

The main technical result we show is that the radius of our confidence
ellipsoids depend on the Gaussian width\footnote{The Gaussian width is a
geometric characterization of the size of a set and the definition is presented
in Section~\ref{sec:appendix_defns}.}
of sets associated with the structure of $\theta^*$ which leads to tighter
confidence ellipsoids than previous works~\cite{dahk08,abps11} when $\theta^*$ is
structured. For example, with an $s$-sparse $\theta^*$ the radius of the
confidence ellipsoid scales as $\bigO(\sqrt{s\log{p}})$ compared to
$\bigO(\sqrt{p})$ in the unstructured settings considered in~\cite{dahk08,abps11}.

The regret bounds for our algorithm follow from the analysis in~\cite{dahk08} and depend on the
radius of the confidence ellipsoid therefore, our regret bounds scale with
the structure of $\theta^*$ as measured by the Gaussian width which leads to
sharper regret bounds when $\theta^*$ is generally structured and matches
existing bounds when $\theta^*$ is $s$-sparse~\cite{abps12} or
unstructured~\cite{dahk08}.

\subsection{Previous Works}
Multiarmed bandits have a mature and active literature~\cite{buce12}.
One popular algorithm based on the OFU principle and upper confidence bounds
is UCB~\citep{aucf02} used for the $K$-arm stochastic bandit problem where the
algorithm selects a decision $k \in \{1,\dots,K\}$ and receives a stochastic
loss drawn i.i.d. from the $k$th decision's distribution.
The regret was shown to be $\bigO(\frac{K}{\Delta}\log{T})$ where $\Delta$ is
the gap in performance between the best and second best decisions.

A similar problem has been considered~\cite{auer02,lcls10,clrs11} when a
$p$-dimensional feature vector is provided for each of the $K$ decisions and
the expected loss is a linear function of the feature vector and an unknown
parameter. For such a problem, a regret bound of
$\bigO(\log^{3/2}(K) \sqrt{p} \sqrt{T})$ was shown. However,
dependence on the number of decisions $K$ can be problematic when it is large or infinite.

For such settings, \cite{dahk08} studied the stochastic linear bandit problem where
the decision set is an arbitrary compact set in $\mathbb{R}^p$. They
presented the algorithm ConfidenceBall2 based on the OFU principle which
computes an estimate $\hat{\theta}_t$ of the unknown parameter $\theta^*$ using
ridge regression and constructs an ellipsoidal confidence set around
$\hat{\theta}_t$ with radius $\sqrt{\beta_t}$. After which
the vectors $x_t$ and $\widetilde{\theta}_t$ are selected optimistically from
the decision set and confidence ellipsoid respectively, such that $\langle x_t, \widetilde{\theta}_t \rangle$
is minimized. They showed how to set $\beta_t$ such that $\theta^*$
stays within the ellipsoid with high-probability for all $t$ and showed a problem independent
regret\footnote{The $\widetilde{\bigO}(\cdot)$ notation selectively hides
constant and log factors.} of $\widetilde{\bigO}(p\sqrt{T})$ and a problem
dependent regret of $\bigOtilde(\frac{p^2}{\Delta} \log^3{T})$ where $\Delta>0$
is the gap between the best and second best extremal points.
Note, the regret depends on the ambient dimensionality $p$ and not the
number of decisions. Further,~\cite{ruts10,abps11} showed how to
construct tighter confidence ellipsoids, specifically,~\cite{abps11} used a
self-normalized tail inequality for vector-valued martingales which decreased
the regret by a $\sqrt{\log{T}}$ multiplicative factor.

Building on such works which had considered the problem without structural
assumptions on $\theta^*$, two papers published
simultaneously~\citep{camu12,abps12} considered the problem where $\theta^*$ is
$s$-sparse.~\cite{abps12} followed the same
problem setting as~\cite{dahk08} and presented a method which can use the
predictions of any full information online algorithm with an upper bound on its
regret to construct confidence sets. When constructing confidence sets using the
algorithm SeqSEW~\citep{gerc11} they showed a problem independent regret of
$\widetilde{\bigO}(\sqrt{spT})$ and a problem dependent regret of
$\widetilde{\bigO}(\frac{sp}{\Delta}\log^2{T})$.

\cite{camu12} uses initial rounds to estimate the sparse structure similar to our
work however, they do not consider the standard stochastic linear bandit problem.
Specifically, they define the loss as
$\ell_t(x_t) = \langle x_t, \theta^* \rangle + \langle x_t, \eta_t \rangle$
where $\eta_t$ is i.i.d. whitenoise (not martingale) and they assume the decision
set is the unit $L_2$ ball. During the random estimation rounds, they used
techniques from compressed sensing to identify the subspace where $\theta^*$
lives then ran ConfidenceBall2 where the decision set is a subset of the subspace
and showed a problem independent regret of $\widetilde{\bigO}(s\sqrt{T})$.

Such papers show that sharper regret bounds can be obtained when $\theta^*$ is
structured however, only for a sparse $\theta^*$. Such results motivate
our work to study the regret for any generally norm structured $\theta^*$.
We organize the paper as follows. In Section~\ref{sec:est} we give background
on high-dimensional structured estimation which our analysis builds on. Section~\ref{sec:setting}
we present the problem setting and algorithm. Section~\ref{sec:results} we present
our main regret bounds from a high-level and provide examples of popular types of structure.
Section~\ref{sec:analysis} we present the main technical results of the analysis
and point to the detailed proofs in the appendix. Finally, we conclude in
Section~\ref{sec:conc}.

\section{Background: High-Dimensional Structured Estimation} \label{sec:est}
We rely on recent developments in the analysis of non-asymptotic
bounds for structured estimation in high-dimensional statistics. In this
section, we will discuss the main results needed for our analysis which can be
found in the following papers~\citep{cata07,birt09,crpw12,nrwy12,vers12,bolm13,bcfs14,bcfs15}.

In high-dimensional structured estimation, one is concerned with settings in
which the dimension $p$ of the parameter $\theta^*$ to be estimated is
significantly larger than the sample size $n$, i.e., $p \gg n$. It is known
that for $n$ i.i.d. Gaussian samples, one can compute an estimate $\hat{\theta}_n$
using least squares regression which converges to $\theta^*$ at a rate of
$\bigO\left(\sqrt{\frac{p}{n}}\right)$. The convergence rate can be improved
when $\theta^*$ is structured which is usually characterized as having a small value
according to some norm $R(\cdot)$. For such problems, estimation
is performed by solving a norm regularized regression problem
\begin{equation} \label{eq:norm_reg:sec:est}
\hat{\theta}_n \defeq \underset{\theta \in \mathbb{R}^p}{\argmin} \; \mathcal{L}(\theta, Z_n) + \lambda_n R(\theta)
\end{equation}
where $\mathcal{L}(\cdot, \cdot)$ is a convex loss function\footnote{We drop the
second argument when it is clear from the context.}, $Z_n$ is a
dataset consisting of i.i.d. pairs $\left\{(x_i, y_i)\right\}_{i=1}^n$ where
$x_i \in \mathbb{R}^p$ is a sample, $y_i \in \mathbb{R}$ is the response, and
$\lambda_n$ is the regularization parameter.

For such problems, let $\hat{\theta}_n-\theta^*$ be the
estimation error vector, then for a suitably large $\lambda_n$, \cite{bcfs14}
showed the error vector deterministically belongs to the restricted error set
\begin{equation} \label{eq:E_r:sec:est}
E_{r,n} = \left\{\hat{\theta}_n-\theta^* \in \mathbb{R}^p: R(\hat{\theta}_n) \leq R(\theta^*) + \frac{1}{\rho} R(\hat{\theta}_n-\theta^*) \right\}
\end{equation}
where $\rho > 1$ is a constant which we fix as $\rho=2$ for ease of exposition.
For such a $\rho$, $E_{r,n}$ is a restricted set
of directions, in particular, the error vector $\hat{\theta}_n-\theta^*$ cannot
be in the direction of $\theta^*$.
Using the restricted error set, bounds on the estimation error can be
established which hold with high-probability under two assumptions. First, the
regularization parameter $\lambda_n$ must satisfy the inequality
\begin{equation} \label{eq:lambda_n:sec:est}
\lambda_n \geq 2 R^*(\nabla \mathcal{L}(\theta^*, Z_n))
\end{equation}
where $R^*(\cdot)$ is the dual norm of $R(\cdot)$.
Second, the loss function must satisfy the restricted strong convexity (RSC) condition
in the restricted error set $E_{r,n}$ as illustrated in~\citep{nrwy12}. Specifically,
there exists a $\kappa>0$ such that
\begin{equation}
\mathcal{L}(\hat{\theta}_n) - \mathcal{L}(\theta^*) - \langle \nabla \mathcal{L}(\theta^*), \hat{\theta}_n-\theta^* \rangle \geq \kappa \|\hat{\theta}_n-\theta^*\|_2^2 \quad \forall \hat{\theta}_n-\theta^* \in E_{r,n}~.
\end{equation}
For the squared loss, the RSC condition simplifies to the
restricted eigenvalue (RE) condition
\begin{equation} \label{eq:re_eq:sec:est}
\frac{1}{n} \|X_n (\hat{\theta}_n-\theta^*)\|_2^2 \geq \kappa \|\hat{\theta}_n-\theta^*\|_2^2 \quad \forall \hat{\theta}_n-\theta^* \in E_{r,n}
\end{equation}
where $X_n \in \mathbb{R}^{n \times p}$ is the design matrix~\citep{crpw12}.
Under such conditions,
the following bound holds with high-probability~\citep{nrwy12,bcfs14}
\begin{equation} \label{eq:delta_n:sec:est}
\|\hat{\theta}_n-\theta^*\|_2 \leq c \psi(E_{r,n}) \frac{\lambda_n}{\kappa}
\end{equation}
where $\psi(E_{r,n}) = \sup_{u \in E_{r,n}} \frac{R(u)}{\|u\|_2}$ is the norm compatibility
constant and $c > 0$ is a constant.
For an $s$-sparse $\theta^*$, one obtains
$\|\hat{\theta}_n-\theta^*\|_2 \leq \bigO\left(\sqrt{\frac{s\log{p}}{n}}\right)$ and
for a group sparse $\theta^*$, one obtains 
$\|\hat{\theta}_n-\theta^*\|_2 \leq \bigO\left(\sqrt{\frac{s_\mathcal{G}(m+\log{K})}{n}}\right)$
where $K$ is the number of groups, $m$ is the maximum group size, and $s_\mathcal{G}$
is the group sparsity level. Similar bounds can be computed for other types of
structure including low-rank.

\section{Structured Bandits: Problem and Algorithm} \label{sec:setting}
Here, we will formally define the problem, mention the
assumptions under which our analysis works, and present our algorithm. The results
and analysis are presented in subsequent sections.

\subsection{Problem Setting} \label{ssec:problem}
We consider the stochastic linear bandit problem~\cite{dahk08,abps11} where in
each round $t=1,\dots,T$ the algorithm selects a $p$-dimensional vector $x_t$ from the
decision set $\mathcal{X}$ 
and receives a loss of $\ell_t(x_t) = \langle x_t, \theta^* \rangle + \eta_t$.
Our focus is on settings where the unknown parameter $\theta^* \in \mathbb{R}^p$
is structured which we characterize as having a small value according to some norm
$R(\cdot)$.

The goal of the algorithm is to minimize its cumulative
loss $\sum_t \ell_t(x_t)$ and we measure the performance of the algorithm in terms
of the fixed cumulative (pseudo) regret defined as
\begin{equation} \label{eq:regret}
R_T = \sum_{t=1}^T \langle x_t, \theta^* \rangle - \underset{x^* \in \mathcal{X}}{\min} \sum_{t=1}^T \langle x^*, \theta^* \rangle~.
\end{equation}
We require that the algorithm's regret grows sub-linearly in $T$, i.e.,
$R_T = o(T)$, and desire it grows with the structure of $\theta^*$ rather
than the ambient dimensionality $p$ with high-probability. The following
assumptions under which our analysis holds are standard in the literature~\cite{dahk08,abps11,abps12}.

\subsection{Assumptions and Definitions} \label{ssec:assumptions}
\begin{asmp} \label{asmp:decision_set:sec:asmps_defns}
The decision set $\mathcal{X} \subset \mathbb{R}^p$ is a compact (closed and
bounded) convex set with non-empty interior.
For ease of exposition, we assume $\mathcal{X} \subseteq \bar{B}_2^p$, the
(closed) unit $L_2$ ball defined as $\bar{B}_2^p = \{x \in \mathbb{R}^p: \|x\|_2 \leq 1\}$,
to avoid scaling factors.

\end{asmp}
\vspace{-15pt}
\begin{asmp} \label{asmp:noise:sec:asmps_defns}
The noise is a bounded martingale difference sequence (MDS), i.e.,
$|\eta_t| \leq B, \mathbb{E}[\eta_t] < \infty, \mathbb{E}[\eta_t | F_{t-1}] = 0 \; \forall t$
where $F_t = \{x_1, \dots, x_{t+1}, \eta_1, \dots, \eta_t\}$ is a filtration
(sequence of $\sigma$-algebras).
We assume bounded noise for simplicity however, the results hold for any
sub-Gaussian noise (refer to Section~\ref{sec:appendix_defns} for definitions
of sub-Gaussian and related quantities).
\end{asmp}
\vspace{-15pt}
\begin{asmp} \label{asmp:structure:sec:asmps_defns}
We assume the unknown parameter $\theta^*$ is fixed for all rounds, the
structure is known, for example, for an $s$-sparse
$\theta^*$ the value of $s$ is known,
and $\|\theta^*\|_2 = 1$.
\end{asmp}
\vspace{-15pt}
\begin{asmp} \label{asmp:T:sec:asmps_defns}
The number of rounds $T$ is known a priori.
\end{asmp}

\begin{defn} \label{defn:gaussian_width:sec:asmps_defns}
The Gaussian width~\cite{crpw12} of a set $A$ is defined as
$w(A) = \mathbb{E}\left[\sup_{u \in A} \langle g, u \rangle \right]$
where the expectation is over $g$ which is a zero mean, unit variance
Gaussian random variable.
\end{defn}
\vspace{-15pt}
\begin{defn} \label{defn:diameter:sec:asmps_defns}
For a set $A$, $\phi(A) = \sup_{u,v \in A} \|u-v\|_2 = \sup_{u \in A} \|u\|_2$
measures the diameter of $A$.
\end{defn}
\vspace{-15pt}
\begin{defn} \label{defn:E_r:sec:asmps_defns}
The restricted error set is defined as
$E_{r,t} \defeq \left\{\hat{\theta}_t - \theta^* \in \mathbb{R}^p: R(\hat{\theta}_t) \leq R(\theta^*) + \frac{1}{2}R(\hat{\theta}_t-\theta^*)\right\}$
and the largest such error set is
$E_{r,\max} = \argmax_{E_r \in \{E_{r,1}, \dots, E_{r,T}\}} w(E_r)$.
\end{defn}
\vspace{-15pt}
\begin{defn} \label{defn:A:sec:asmps_defns}
The set $A_t$ is a spherical cap constructed as
$A_t \defeq \text{cone}(E_{r,t}) \cap S^{p-1}$ where $S^{p-1}$ is the unit sphere
in $p$-dimensions and $A_{\max} \defeq \text{cone}(E_{r,\max}) \cap S^{p-1}$
is the largest such cap.
\end{defn}
\vspace{-15pt}
\begin{defn} \label{defn:subg_norm_constant:sec:asmps_defns}
Each $x_t$ has sub-Gaussian norm (refer to Section~\ref*{sec:appendix_defns} for
the definition of sub-Gaussian norm)
satisfying $\subgnorm{x_t} \leq K$ for some
absolute constant $K$. This follows from Assumption~\ref{asmp:decision_set:sec:asmps_defns}.
\end{defn}
\vspace{-15pt}
\begin{defn} \label{defn:Omega_R:sec:asmps_defns}
The unit norm $R(\cdot)$ ball is $\Omega_R \defeq \{u \in \mathbb{R}^p: R(u) \leq 1 \}$.
The norm compatibility constant with respect to vectors in the restricted error set
at round $t$ is $\psi(E_{r,t}) = \sup_{u \in E_{r,t}} \frac{R(u)}{\|u\|_2}$.
\end{defn}


\subsection{Algorithm} \label{sec:alg}
For the initial $t=1,\dots,n=c'w^2(A_{\max})(\epsilon^2+\log{T})$ rounds where
$c'>0$ is a constant, our algorithm selects vectors
$x_{1:n} \defeq \{x_1, \dots, x_n\}$ uniformly at random from $\mathcal{X}$ and
receives the corresponding losses
$\ell_{1:n} \defeq \{\ell_1(x_1), \dots, \ell_n(x_n)\}$. The length of such
random estimation rounds depends on the Gaussian width of the largest
spherical cap induced by the structure of $\theta^*$ and a parameter $\epsilon>0$ which
controls the success probability and will become clear in the analysis in
Section~\ref{sec:analysis}. The random estimation rounds can be considered the
``burn-in'' period similar to the use of a barycentric spanner or identity
matrix as in~\citep{dahk08,abps11}.

After the loss $\ell_n(x_n)$ is received in round $n$, the algorithm
constructs an $(n\times p)$-dimensional design matrix $X_n=[x_1 \dots x_n]^\top$,
a sample covariance matrix $D_n = X_n^\top X_n$,
and an $n$-dimensional response vector $y_n=[\ell_1(x_1) \dots \ell_n(x_n)]^\top$.
The algorithm then computes an estimate $\hat{\theta}_n$ by solving a norm
regularized regression problem, constructs a confidence ellipsoid using the
Mahalanobis distance defined as
$\|\theta - \hat{\theta}_n\|_{2,D_n} = \sqrt{(\theta-\hat{\theta}_n)^\top D_n (\theta-\hat{\theta}_n)}$,
then selects a sample to play. Specifically, the algorithm performs the
following four main steps sequentially in each round thereafter.

For each $t=n,\dots,T$:
\begin{align}
&\text{1. Compute an estimate:} \hspace{72pt}\hat{\theta}_t \defeq \underset{\theta \in \mathbb{R}^p}{\argmin} \; \frac{1}{t} \|y_t-X_t\theta\|_2^2 + \lambda_t R(\theta) \label{alg:compute_theta_hat:sec:alg} &~~~~~& \\[0.25ex]
&\text{2. Construct a confidence ellipsoid:} \hspace{20pt} C_t \defeq \left\{ \theta \in \mathbb{R}^p : \|\theta - \hat{\theta}_t\|_{2,D_t} \leq \beta \right\} \label{alg:construct_C_t:sec:alg} &~~~~~& \\[1ex]
&\text{3. Compute an optimal solution:} \hspace{43pt} (x_{t+1}', \theta_{t+1}') \defeq \underset{\substack{x \in \mathcal{X} \\ \theta \in C_t \cap S^{p-1}}}{\argmin} \; \langle x, \theta \rangle \label{eq:arm_select_round_t:sec:alg} &~~~~~& \\
&\text{4. Play vector} \; x_{t+1} \sim \text{Uniform}(\mathcal{X} \cap \bar{B}_2^p(x_{t+1}',\|x_{t+1}'\|_2/2)) \; \text{and receive loss} \; \ell_{t+1}(x_{t+1}) \notag &~~~~~&
\end{align}
where $\bar{B}_2^p(x_{t+1}',\|x_{t+1}'\|_2/2))$ is a closed $L_2$ ball centered
at $x_{t+1}'$ with radius $\|x_{t+1}'\|_2/2$. After receiving the loss
$\ell_{t+1}(x_{t+1})$, the design matrix $X_{t+1}$ and response vector $y_{t+1}$
are updated with $x_{t+1}$ and $\ell_{t+1}(x_{t+1})$ respectively. Then, the sample
covariance matrix $D_{t+1} = X_{t+1}^\top X_{t+1}$ is recomputed and the regularization
parameter $\lambda_{t+1}$ is updated.

\begin{algorithm}[t!]
\caption{\small{Structured Stochastic Linear Bandit}}
\label{alg:gen_struct_bandit:sec:alg}
\begin{algorithmic}[1]
\small
\STATE Input: $p$, $\mathcal{X}$, $R(\cdot)$, $T$, $E_{r,\max}$,$A_{\max}$, $\Omega_R$, $\gamma$, $\epsilon$, $c_0, c', C$
\STATE Set $\beta = \betaval$ \hfill (\ref{eq:beta_val:sec:analysis})
\STATE Play $n=c'w^2(A_{\max})(\epsilon^2+\log{T})$ uniform i.i.d. random vectors $x_{1:n} \in \mathcal{X}$ and receive losses $\ell_{1:n}$
\STATE For $t=n,\dots,T$
\STATE ~~ Compute $X_t=[x_1 \dots x_t]^\top$, $y_t=[\ell_1(x_1) \dots \ell_t(x_t)]^\top$, and $D_t=X_t^\top X_t$
\STATE ~~ Set $\lambda_t = \lambdaval{t}$ \hfill (\ref{eq:lambda_val:sec:analysis})
\STATE ~~ Compute $\hat{\theta}_t = \underset{\theta \in \mathbb{R}^p}{\argmin} \; \frac{1}{t}\|y_t-X_t\theta\|_2^2 + \lambda_t R(\theta)$
\STATE ~~ Construct $C_t \defeq \{\theta: \|\theta-\hat{\theta}_t\|_{2,D_t} \leq \beta \}$
\STATE ~~ Compute $(x_{t+1}', \theta_{t+1}') \defeq \argmin_{x \in \mathcal{X}, \, \theta \in C_t \cap S^{p-1}} \; \langle x, \theta \rangle$
\STATE ~~ Play $x_{t+1} \sim \text{Uniform}\left(\mathcal{X} \cap \bar{B}_2^p(x_{t+1}',\|x_{t+1}'\|_2/2)\right)$ and receive loss $\ell_{t+1}(x_{t+1})$
\STATE End For
\end{algorithmic}
\end{algorithm}

\subsubsection{Discussion}
\textbf{Step 1.} An estimate is computed by solving a norm regularized regression problem
following existing results discussed in Section~\ref{sec:est}. This generalizes
previous works~\cite{dahk08,abps11} which only consider computing an estimate
by solving the ridge regression problem. 

\textbf{Step 2.} A confidence ellipsoid is constructed in order to allow the algorithm to
explore in certain directions. Since the confidence ellipsoid is defined as
$C_t = \{\theta \in \mathbb{R}^p: \|\theta - \hat{\theta}_t\|_{2,D_t} \leq \beta\}$,
we focus on bounds for $\|\hat{\theta}_t - \theta^*\|_{2,D_t}$. Extending
the results in Section~\ref{sec:est}, we will show in Section~\ref{sec:analysis}
high-probability bounds on the estimation error of the form 
$\|\hat{\theta}_t - \theta^*\|_{2,D_t} \leq c \psi(E_{r,t})\frac{\lambda_t}{\kappa}\sqrt{t}$.
Therefore, setting $\beta$ to the right hand side will give bounds such that
$\theta^* \in C_t$ with high-probability. The value of $\beta$ then depends on two
key terms: the regularization parameter $\lambda_t$ and the restricted
eigenvalue (RE) constant $\kappa$ detailed in (\ref{eq:re_eq:sec:est}).
The value of $\lambda_t$ is set by the user
and we will provide an explicit characterization of its value in
Section~\ref{sec:analysis}. Moreover, the estimation error bound will
only hold when the RE constant $\kappa$ is positive and we will show in
Section~\ref{sec:analysis} that after a suitable number of random estimation
rounds and by selecting samples via Step 3, it will be positive for all rounds.

\textbf{Steps 3 and 4.} These steps are motivated from the regret analysis established in~\cite{dahk08} and
the need to satisfy the RE condition. Let the instantaneous regret at round $t+1$ be defined as
$r_{t+1} = \langle x_{t+1}, \theta^* \rangle - \langle x^*, \theta^* \rangle$
where $x^* = \argmin_{x \in \mathcal{X}} \langle x, \theta^* \rangle$. As shown
in~\cite{dahk08}, by selecting an $x_{t+1}$ and $\widetilde{\theta}_{t+1}$ via
\begin{equation} \label{eq:bilinear_opt:sec:alg}
(x_{t+1}, \widetilde{\theta}_{t+1}) \defeq \underset{\substack{x \in \mathcal{X} \\ \theta \in C_t}}{\argmin} \; \langle x, \theta \rangle
\end{equation}
the instantaneous regret can be upper bounded as
$r_{t+1} = \langle x_{t+1}, \theta^* \rangle - \langle x^*, \theta^* \rangle 
\leq \langle x_{t+1}, \theta^* \rangle - \langle x_{t+1}, \widetilde{\theta}_{t+1} \rangle$
because we optimize over both $x$ and $\theta$. Therefore, one obtains the following
inequality
$\langle x_{t+1}, \widetilde{\theta}_{t+1} \rangle \leq \langle x^*, \theta^* \rangle$
on which the entire regret analysis relies. We will use the regret analysis
from~\cite{dahk08} therefore, we need to select an $x_{t+1}$ and $\widetilde{\theta}_{t+1}$
such that the above inequality holds.

Additionally, recall the RE condition in (\ref{eq:re_eq:sec:est})
\begin{equation*}
\frac{1}{t} \|X_t(\hat{\theta}_t - \theta^*)\|_2^2 \geq \kappa \|\hat{\theta}_t-\theta^*\|_2^2 \quad \forall \hat{\theta}_t-\theta^* \in E_{r,t}~.
\end{equation*}
We must have $\kappa>0$ for the estimation error bound used to compute $\beta$
to hold. Therefore, in order to show such a $\kappa$ exists, we need samples
which are not too correlated otherwise the design matrix will be
ill-conditioned.

To use the regret analysis and satisfy the RE condition, we cannot exactly
follow existing work~\cite{dahk08,abps11,abps12} and select an $x_{t+1}$ by
solving (\ref{eq:bilinear_opt:sec:alg}) since we may obtain a
single unique solution and the rows of the design matrix will be too correlated.
Instead, we select samples uniformly at random from specific subsets of $\mathcal{X}$
which spreads the samples out enough to show the RE condition holds. Moreover,
as we will show in Section~\ref{sec:appendix_alg}, for any random sample
$x_{t+1}$ we select, we can deterministically compute a
$\widetilde{\theta}_{t+1} \in C_t$ such that the inequality
$\langle x_{t+1}, \widetilde{\theta}_{t+1} \rangle \leq \langle x^*, \theta^* \rangle$
holds.

Steps 1, 2, and 4 can be performed efficiently, in particular, there are several
efficient methods for computing the estimate in
(\ref{alg:compute_theta_hat:sec:alg}) for common regularizers, e.g., $L_1$,
$L_{(1,2)}$, nuclear norm, etc.~\citep{dadd04,pabo14,bpcp10}. Step 3 is
computationally difficult in general (similar to all previous work) however, for 
simple decision sets such as the unit the $L_2$ ball, a solution
can be computed efficiently by solving
the corresponding quadratically constrained quadratic program.
Our algorithm for structured stochastic linear bandits is presented in
Algorithm~\ref{alg:gen_struct_bandit:sec:alg}.

\section{Regret Bound for Structured Bandits} \label{sec:results}
Here, we present the main result which is a high-probability bound
on the regret of Algorithm~\ref{alg:gen_struct_bandit:sec:alg}
and show examples for popular types of structure.
The analysis of the bound is presented in Section~\ref{sec:analysis}. 

First, we review some of the assumptions from Section~\ref{ssec:assumptions}.
We assume $\mathcal{X}$ is a compact convex set with non-empty interior.
Further, we assume $\cX \subseteq B^p_2$, the unit $L_2$ ball, for ease of
exposition. Examples of such decision sets include:
$L_p$ balls for $1 \leq p < \infty$,
ellipsoids, polytopes, norm cones, and hypercubes.
We assume the noise $\eta_t$ is a bounded MDS where
$|\eta_t| \leq B, \forall t$ and the number of rounds $T$ is known a priori.

Further, we recall a few definitions introduced in
Section~\ref{ssec:assumptions} which will help interpret the main result. We
define the set $\Omega_R$ as the unit norm $R(\cdot)$ ball and
$A_{\max} \subset S^{p-1}$ as the spherical cap of the largest restricted error
set. For such sets, $w(\Omega_R)$ and $w(A_{\max})$ are the Gaussian widths.
Moreover, we define $\phi(\Omega_R)$ to be the diameter of the set $\Omega_R$,
$E_{r,\max}$ as the largest restricted error set, and $\psi(E_{r,\max})$ as the
norm compatibility constant of the largest restricted error set.

Under such assumptions, we present the main result in a high-level form, which
hides the exact nature of the constants involved. A more explicit form of the
constants is presented in the appendix.

The main result consists of two theorems for
the problem independent and problem dependent settings~\cite{dahk08}. Let $\mathcal{E}$
be the set of all extremal points. The problem independent
setting occurs when the difference between the expected loss of the best extremal
point $x^*$ and the expected loss of the second best extremal point is zero, i.e.,
$\Delta = \inf_{x \in \mathcal{E}} \langle x, \theta^* \rangle - \langle x^*, \theta^* \rangle = 0$.
Such a setting occurs, for example, when the decision set is the unit $L_2$ ball.
The problem dependent setting occurs when $\Delta > 0$, for example, when the
decision set is a polytope.

\begin{theo}[Problem Independent Regret Bound] \label{theo:main_regret_bound:sec:results}
For any $\epsilon,\gamma > 0$, choose the radius of the ellipsoid in
Algorithm~\ref{alg:gen_struct_bandit:sec:alg} as
\bea
\beta = c_0 \psi(E_{r,\max}) \left( w(\Omega_R) + \sqrt{\gamma^2 + \log{T}}\frac{\phi(\Omega_R)}{2} \right)~.
\eea
Then, for any $T > c' w^2(A_{\max})(\epsilon^2+\log{T})$, with probability at
least $1 - c_1 \exp(-w^2(A_{\max})\epsilon^2)-c_2\exp(-\gamma^2)$, the fixed
cumulative regret of Algorithm~\ref{alg:gen_struct_bandit:sec:alg} is at most
\bea
R_T \leq O\left( \psi(E_{r,\max}) \left( w(\Omega_R) + \sqrt{\gamma^2 + \log{T}} \right) \sqrt{p} \sqrt{T\log{T}} \right)~,
\eea
where $c', c_0, c_1, c_2 > 0$ are constants.
\end{theo}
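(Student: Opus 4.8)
The plan is to reduce the regret bound to the confidence-ellipsoid machinery of~\cite{dahk08}, with the only new ingredients being (i) the structured estimation error bound in the Mahalanobis norm and (ii) controlling the extra cost incurred by the random estimation rounds and the random perturbation in Step~4. First I would split the total regret as $R_T = \sum_{t=1}^{n} r_t + \sum_{t=n+1}^{T} r_t$, where $n = c' w^2(A_{\max})(\epsilon^2+\log T)$ is the burn-in length. Since $\cX \subseteq \bar B_2^p$ and $\|\theta^*\|_2 = 1$, each instantaneous regret satisfies $r_t \le 2$, so the burn-in contributes at most $O(w^2(A_{\max})(\epsilon^2+\log T)) = O(w^2(A_{\max})\log T)$ when $\gamma,\epsilon$ are treated as fixed; this is lower order than the claimed $\sqrt{T\log T}$ bound as long as $T$ is large enough, which is exactly the hypothesis $T > c' w^2(A_{\max})(\epsilon^2+\log T)$ (one may need a slightly stronger $T$ regime, or fold $w^2(A_{\max})$ into the $O(\cdot)$, to make this clean).

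Next I would handle the adaptive rounds $t = n+1,\dots,T$. The key structural claim, to be proved in Section~\ref{sec:analysis}, is that with probability at least $1 - c_1\exp(-w^2(A_{\max})\epsilon^2) - c_2\exp(-\gamma^2)$ the RE constant $\kappa$ is bounded below by an absolute constant for all these rounds (this is where the random sampling in Steps~3--4 and the burn-in length enter: the design matrix stays well-conditioned on the restricted cone $A_{\max}$), and simultaneously $\lambda_t$ as set in~(\ref{eq:lambda_val:sec:analysis}) satisfies the dual-norm condition~(\ref{eq:lambda_n:sec:est}) via a bound $R^*(\nabla\cL(\theta^*)) = O((w(\Omega_R) + \sqrt{\gamma^2+\log T}\,\phi(\Omega_R))/\sqrt t)$, which comes from a generic chaining / Gaussian-width bound on the empirical gradient process. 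Granting these, the argument in Section~\ref{sec:est} (specifically~(\ref{eq:delta_n:sec:est})) upgrades to the Mahalanobis norm and yields $\|\hat\theta_t - \theta^*\|_{2,D_t} \le c\,\psi(E_{r,t})\frac{\lambda_t}{\kappa}\sqrt t \le \beta$ with the stated $\beta$, so $\theta^* \in C_t$ for every $t$ on this high-probability event.

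Then I invoke the optimism argument. By the discussion following~(\ref{eq:bilinear_opt:sec:alg}), for the optimal pair $(x'_{t+1},\theta'_{t+1})$ from Step~3 we have $\langle x'_{t+1}, \theta'_{t+1}\rangle \le \langle x^*,\theta^*\rangle$ on the event $\theta^* \in C_t$, and — as promised to be shown in Section~\ref{sec:appendix_alg} — for the actually played $x_{t+1} \sim \mathrm{Uniform}(\cX \cap \bar B_2^p(x'_{t+1},\|x'_{t+1}\|_2/2))$ there is a matching $\widetilde\theta_{t+1} \in C_t$ with $\langle x_{t+1},\widetilde\theta_{t+1}\rangle \le \langle x^*,\theta^*\rangle$. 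Hence $r_{t+1} \le \langle x_{t+1}, \theta^* - \widetilde\theta_{t+1}\rangle \le \|x_{t+1}\|_{D_t^{-1}}\,\|\theta^* - \widetilde\theta_{t+1}\|_{D_t} \le 2\beta\,\|x_{t+1}\|_{D_t^{-1}}$ by Cauchy--Schwarz in the $D_t$-geometry and the triangle inequality inside $C_t$. Summing and applying Cauchy--Schwarz over $t$ gives $\sum_{t>n} r_{t+1} \le 2\beta\sqrt{T}\sqrt{\sum_{t>n}\|x_{t+1}\|_{D_t^{-1}}^2}$, and the standard ``elliptical potential'' / determinant-telescoping lemma from~\cite{dahk08} bounds $\sum_t \|x_{t+1}\|_{D_t^{-1}}^2 = O(p\log T)$ since $\cX \subseteq \bar B_2^p$. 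Combining, $\sum_{t>n} r_{t+1} = O(\beta\sqrt{pT\log T})$, and substituting the value of $\beta$ produces exactly the claimed bound; adding the lower-order burn-in term completes the proof.

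The main obstacle I expect is Step~(i): establishing that the RE/restricted-eigenvalue constant $\kappa$ stays uniformly positive across all adaptive rounds despite the data being collected adaptively rather than i.i.d. The i.i.d. structured-estimation results of~\cite{crpw12,bcfs14} do not directly apply, so one has to argue that the burn-in of length $\Theta(w^2(A_{\max})\log T)$ uniform samples already forces $\lambda_{\min}(D_t)$ to grow linearly on the cone $A_{\max}$ (a matrix-concentration / Gaussian-width lower-isometry argument), and then that the Step~4 random perturbations can only help — never hurt — this lower bound in subsequent rounds. Making the probabilities come out as $c_1\exp(-w^2(A_{\max})\epsilon^2)$ and the union bound over $t$ absorb into the $\sqrt{\gamma^2+\log T}$ factor is the delicate bookkeeping here; everything downstream is the by-now-standard linear bandit calculation.
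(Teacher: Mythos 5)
Your proposal is correct and follows essentially the same route as the paper: show $\theta^* \in C_t$ uniformly with high probability by combining the estimation bound of Theorem~\ref{theo:ellipsoid_bound:sec:analysis} with the $\lambda_t$ and RE results (Theorems~\ref{theo:lambda_t:sec:analysis} and~\ref{theo:re:sec:analysis}), then invoke the confidence-ellipsoid regret analysis of~\cite{dahk08} and Cauchy--Schwarz to get $R_T = O(\beta\sqrt{pT\log T})$ with the stated $\beta$. The only cosmetic differences are that you unpack the bound the paper cites as~\cite[Theorem~6]{dahk08} via the elliptical-potential argument and explicitly account for the burn-in rounds' $O(n)$ regret, which the paper leaves implicit.
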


%
%

\begin{theo}[Problem Dependent Regret Bound] \label{theo:main_regret_bound_dep:sec:results}
For any $\epsilon,\gamma > 0$, choose the radius of the ellipsoid in
Algorithm~\ref{alg:gen_struct_bandit:sec:alg} as
\bea
\beta = c_0 \psi(E_{r,\max}) \left( w(\Omega_R) + \sqrt{\gamma^2 + \log{T}}\frac{\phi(\Omega_R)}{2} \right)~.
\eea
Then, for any $T > c' w^2(A_{\max})(\epsilon^2+\log{T})$, with probability at
least $1 - c_1 \exp(-w^2(A_{\max})\epsilon^2)-c_2\exp(-\gamma^2)$, the fixed
cumulative regret of Algorithm~\ref{alg:gen_struct_bandit:sec:alg} with a decision
set which has non-zero gap $\Delta>0$ is at most
\bea
R_T \leq O\left( \psi^2(E_{r,\max}) \left( w(\Omega_R) + \sqrt{\gamma^2 + \log{T}} \right)^2 p \log{T} / \Delta \right)~,
\eea
where $c', c_0, c_1, c_2 > 0$ are constants.
\end{theo}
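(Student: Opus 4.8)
The plan is to reuse, essentially verbatim, the high-probability scaffolding built for Theorem~\ref{theo:main_regret_bound:sec:results} and to replace only the last summation step by the gap-based counting argument of~\cite{dahk08}. So I would first record the good event of Section~\ref{sec:analysis}: with probability at least $1 - c_1\exp(-w^2(A_{\max})\epsilon^2) - c_2\exp(-\gamma^2)$, after the $n = c'w^2(A_{\max})(\epsilon^2+\log T)$ random estimation rounds the restricted eigenvalue constant $\kappa$ is positive and $D_t$ is invertible for every $t\ge n$, and, with the stated $\beta$, the confidence ellipsoid obeys $\theta^*\in C_t$ for all $t$. Conditioning on this event, Step~3 together with the deterministic choice of $\widetilde\theta_{t+1}\in C_t$ from Section~\ref{sec:appendix_alg} gives $\langle x_{t+1},\widetilde\theta_{t+1}\rangle \le \langle x^*,\theta^*\rangle$, hence the per-round bound
\bea
r_{t+1} \le \langle x_{t+1}, \theta^* - \widetilde\theta_{t+1}\rangle \le \|x_{t+1}\|_{2,D_t^{-1}}\,\|\theta^*-\widetilde\theta_{t+1}\|_{2,D_t} \le 2\beta\,\|x_{t+1}\|_{2,D_t^{-1}},
\eea
the last inequality because both $\theta^*$ and $\widetilde\theta_{t+1}$ lie in $C_t$, so each is within $\beta$ of $\hat\theta_t$ in the $D_t$-metric.

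Next I would import the elliptical potential estimate exactly as in~\cite{dahk08}: since $\cX\subseteq\bar{B}_2^p$ and $D_t$ is invertible after burn-in, $\sum_{t=n}^{T}\min(1,\|x_{t+1}\|_{2,D_t^{-1}}^2) = O(p\log T)$, so combining with $r_{t+1}\le\min(2,\,2\beta\|x_{t+1}\|_{2,D_t^{-1}})$ yields $\sum_{t=n}^{T} r_{t+1}^2 = O(\beta^2 p\log T)$. The new ingredient is the gap dichotomy: in the problem-dependent regime $\langle x,\theta^*\rangle - \langle x^*,\theta^*\rangle \ge \Delta$ for every extremal point $x\ne x^*$, and since the optimistic point $x_{t+1}'$ of Step~3 is an extremal point of $\cX$, one shows (following~\cite{dahk08}, with the bookkeeping of Section~\ref{sec:appendix_alg} to transfer the statement to the randomly played $x_{t+1}$) that $r_{t+1}>0$ forces $r_{t+1}\ge\Delta$. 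Therefore $r_{t+1}\le r_{t+1}^2/\Delta$ on every round of nonzero instantaneous regret, and
\bea
R_T \le 2n + \sum_{t=n}^{T} r_{t+1} \le 2n + \frac{1}{\Delta}\sum_{t=n}^{T} r_{t+1}^2 = O\!\left(\beta^2 p\log T/\Delta\right),
\eea
where the $2n$ accounts for the burn-in rounds (on which $|r_t|\le 2$) and is of lower order. Substituting $\beta = c_0\psi(E_{r,\max})(w(\Omega_R) + \sqrt{\gamma^2+\log T}\,\phi(\Omega_R)/2)$ and folding $\phi(\Omega_R)$ into the constant gives $R_T = O\big(\psi^2(E_{r,\max})(w(\Omega_R)+\sqrt{\gamma^2+\log T})^2\,p\log T/\Delta\big)$, as claimed.

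I expect the gap dichotomy to be the main obstacle. Unlike in~\cite{dahk08}, Step~4 does not play the extremal optimizer $x_{t+1}'$ but a uniformly random point of $\cX\cap\bar{B}_2^p(x_{t+1}',\|x_{t+1}'\|_2/2)$, so the clean partition ``$r_{t+1}\in\{0\}\cup[\Delta,\infty)$'' is not inherited automatically; recovering it (or, failing that, separately bounding the number of rounds with small positive regret) requires exploiting the relation between $x_{t+1}$ and $x_{t+1}'$ together with the inequality $\langle x_{t+1},\widetilde\theta_{t+1}\rangle\le\langle x^*,\theta^*\rangle$ established in Section~\ref{sec:appendix_alg}. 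A secondary point is verifying that $D_t$ is well-conditioned enough for all $t\ge n$ — not merely that the restricted-set RE bound holds — so that the potential sum $\sum_t\min(1,\|x_{t+1}\|_{2,D_t^{-1}}^2)=O(p\log T)$ is valid; this was already needed for Theorem~\ref{theo:main_regret_bound:sec:results}, so I would simply cite it rather than reprove it.
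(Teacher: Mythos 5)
Your proposal follows essentially the same route as the paper: the paper conditions on $\theta^* \in C_t$ for all rounds (guaranteed with the stated probability by the choice of $\beta$ via Theorems~\ref{theo:ellipsoid_bound:sec:analysis}, \ref{theo:lambda_t:sec:analysis}, and~\ref{theo:re:sec:analysis}), cites \cite[Theorem 6]{dahk08} for $\sum_t r_t^2 \leq 8\beta^2 p \log T$, and then obtains the problem-dependent bound exactly as in (\ref{eq:regret_bound_cs_dep:eq:analysis}) by writing $R_T \leq \sum_t r_t^2/\Delta \leq 8p\beta^2\log T/\Delta$ before substituting $\beta$ --- you merely unroll the \cite{dahk08} machinery (the $D_t$-metric Cauchy--Schwarz step and the elliptical potential) instead of citing it wholesale. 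The gap-dichotomy transfer to the randomly played $x_{t+1}$ that you flag as the main obstacle is a legitimate concern, but the paper does not address it either; it simply asserts $r_t \leq r_t^2/\Delta$ following the proof of \cite[Theorem 1]{dahk08}, so your treatment is at least as careful as the paper's own.
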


\subsection{Examples}
We present the problem independent regret of popular types of structured $\theta^*$
using Theorem~\ref{theo:main_regret_bound:sec:results} and the values of
$\psi(E_{r,\max})$ and $w(\Omega_R)$ from~\citep{crpw12,bcfs15,chba15b}.
The problem dependent regret can be similarly computed. Only unstructured
and sparse structures have been considered~\cite{dahk08,abps11,abps12,camu12}.
No previous works have considered any other types of structure including group sparse and low-rank.

\begin{exm}[Unstructured]  \label{exm:ridge:sec:examples}
For problems where $\theta^*$ is not structured, we simply use
$R(\theta) = \|\theta\|_2^2$ and solve the ridge regression problem
\begin{align} \label{eq:ridge:sec:examples}
\hat{\theta}_t &= \underset{\theta \in \mathbb{R}^p}{\argmin} \; \frac{1}{t} \|y_t-X_t\theta\|_2^2 + \lambda_t \|\theta\|_2^2 
= \left(X_t^\top X_t + \lambda_t \mathbb{I}_{p \times p}\right)^{-1} X_t^\top y_t.
\end{align}

We compute the regret by plugging in the values $\psi(E_{r,\max}) = \bigO(1)$ and
$w(\Omega_R) = \bigO(\sqrt{p})$ to obtain a regret of
$\widetilde{\bigO}(p \sqrt{T})$. Such a regret
matches~\citep{dahk08,abps11} up to log and constant factors.
\end{exm}

\begin{exm}[Sparse] \label{exm:lasso:sec:examples}
For problems where $\theta^*$ is $s$-sparse ($s$ non-zeros), one common regularizer to induce
sparse solutions is $R(\theta) = \|\theta\|_1$. With such a regularizer, we
solve the Lasso problem
\begin{equation} \label{eq:lasso:sec:examples}
\hat{\theta}_t = \underset{\theta \in \mathbb{R}^p}{\argmin} \; \frac{1}{t} \|y_t-X_t\theta\|_2^2 + \lambda_t \|\theta\|_1~.
\end{equation}

We compute the regret by plugging in the values
$\psi(E_{r,\max}) = \bigO(\sqrt{s})$ and $w(\Omega_R) = \bigO(\sqrt{\log{p}})$ to
obtain a regret of $\widetilde{\bigO}(\sqrt{s\log{p}} \sqrt{p} \sqrt{T})$ which
matches~\citep{abps12} up to log and constant factors. Note, it is worse than
the regret from~\citep{camu12} which is
$\bigOtilde(s\sqrt{T})$ however, they consider a different noise model in
the loss function.
\end{exm}

\begin{exm}[Group Sparse] \label{exm:group_lasso:sec:examples}
Let $\{1,\dots,p\}$ be an index set of $\theta^*$,
$\mathcal{G} = \{\mathcal{G}_1, \dots, \mathcal{G}_K\}$ be a known set of $K$
groups which define a disjoint partitioning of the index set.
For group sparse problems, one common regularizer is
$R(\theta) = \sum_{i=1}^K \|\theta_{\mathcal{G}_i}\|_2$
where $\theta_{\mathcal{G}_i}$ is a vector with elements equal to $\theta$ for
indices in $\mathcal{G}_i$ and 0 otherwise. With such a regularizer, we
solve the group lasso problem
\begin{equation} \label{eq:group_lasso:sec:examples}
\hat{\theta}_t = \underset{\theta \in \mathbb{R}^p}{\argmin} \; \frac{1}{t} \|y_t-X_t\theta\|_2^2 + \lambda_t \sum_{i=1}^K \|\theta_{\mathcal{G}_i}\|_2~.
\end{equation}

With maximum group size $m = \max_i |\mathcal{G}_i|$ and subset
$\mathcal{S}_\mathcal{G} \subset \{1, \dots, K\}$ of the groups with cardinality
$s_\mathcal{G}$ which denotes the number of active groups, we compute the regret by
plugging in the values $\psi(E_{r,\max}) = \bigO(\sqrt{s_\mathcal{G}})$ and
$w(\Omega_R) = \bigO(\sqrt{m+\log{K}})$ to obtain a regret of
$\widetilde{\bigO}(\sqrt{s_\mathcal{G}(m + \log{K})} \sqrt{p}\sqrt{T})$.
\end{exm}

\begin{exm}[Low-Rank] \label{exm:low_rank:sec:examples}
Let $\Theta^* \in \mathbb{R}^{d \times p}$ be a matrix with rank $r$ and we
select the matrix $X_t \in \mathbb{R}^{d \times p}$ at each round. Define the loss we
receive as $\ell_t(X_t) = \trace(X_t^\top\Theta^*) + \eta_t$. For problems where
the rank of $\Theta^*$ is small, for example, $r \leq \min(d,p)$, one common
regularizer to use is the nuclear norm
$R(\Theta) = \|\Theta\|_* = \sum_{j=1}^{\min\{d,p\}} \sigma_j(\Theta)$ where
$\sigma_j(\Theta)$ are the singular values of the $\Theta$. With such a regularizer,
we solve the trace-norm regularized least squares problem
\begin{equation} \label{eq:low_rank:sec:examples}
\hat{\Theta}_t = \underset{\Theta \in \mathbb{R}^{d \times p}}{\argmin} \; \frac{1}{t} \sum_{i=1}^t \left(y_i-\trace(X_i^\top\Theta)\right)^2 + \lambda_t \|\Theta\|_*~.
\end{equation}

We compute the regret by plugging in the values
$\psi(E_{r,\max}) = \bigO(\sqrt{r})$ and $w(\Omega_R) = \bigO(\sqrt{d+p})$ from~\citep{newa11}
to obtain a regret of $\widetilde{\bigO}(\sqrt{r(d + p)} \sqrt{p} \sqrt{T})$.
\end{exm}

\section{Overview of the Analysis} \label{sec:analysis}
The analysis starts from a regret result established in~\cite{dahk08}.
Let $r_t {=} \langle x_t, \theta^* \rangle - \langle x^*, \theta^* \rangle$ denote
the instantaneous regret acquired by the algorithm on round $t$
where $x^* {=} \argmin_{x \in \mathcal{X}} \langle x, \theta^* \rangle$ is the
optimal vector.
Then for Algorithm~\ref{alg:gen_struct_bandit:sec:alg}, as long as we have
$\theta^* {\in} C_t$ over all rounds $t$,~\cite[Theorem 6]{dahk08} shows that
$\sum_{t=1}^T r_t^2 \leq 8\beta^2 p \log{T}$.
Then, to establish a problem independent regret bound
we directly apply the Cauchy-Schwarz inequality to get
\begin{align} \label{eq:regret_bound_cs:eq:analysis}
R_T &= \sum_{t=1}^T r_t \leq \left( T \sum_{t=1}^T r_t^2 \right)^{1/2} \leq \beta \sqrt{8pT\log{T}}~,
\end{align}
which holds conditioned on $\theta^* \in C_t$ over all rounds $t$.
Moreover, for a problem dependent regret bound, we follow the proof
of~\cite[Theorem 1]{dahk08} which shows
\begin{align} \label{eq:regret_bound_cs_dep:eq:analysis}
R_T = \sum_{t=1}^T r_t \leq \sum_{t=1}^T \frac{r_t^2}{\Delta} \leq \frac{8p\beta^2 \log{T}}{\Delta}~,
\end{align}
which holds conditioned on $\theta^* \in C_t$ over all rounds $t$.

The focus of our analysis is then to choose a $\beta$ such that the condition
holds with high-probability uniformly over all rounds.
From Algorithm~\ref{alg:gen_struct_bandit:sec:alg}, since
$C_t := \{ \theta : \| \theta - \hat{\theta}_t \|_{2,D_t} \leq \beta \}$ and we
want to have $\theta^* \in C_t$, we focus on bounds for
$\| \hat{\theta}_t - \theta^* \|_{2,D_t}$, the instantaneous estimation error.
Building on ideas for high-dimensional structured estimation as discussed in
Section~\ref{sec:est}, deterministic bounds on the instantaneous estimation
error can be obtained under two assumptions. First, we need to choose the
regularization parameter $\lambda_t$
such that
\bea
\lambda_t \geq 2 R^*\left(\frac{1}{t} X_t^\top (y_t - X_t \theta^*) \right)~.
\label{eq:asm1}
\eea
Second, for all $\hat{\theta}_t-\theta^* \in E_{r,t}$, we need to have the
restricted eigenvalue (RE) condition for constant $\kappa > 0$
\bea
\inf_{\hat{\theta}_t-\theta^* \in E_{r,t}} \frac{1}{t} \| X_t (\hat{\theta}_t-\theta^*) \|_2^2 \geq \kappa \| \hat{\theta}_t-\theta^* \|_2^2~.
\label{eq:asm2}
\eea
Under these two assumptions, following existing analysis for high-dimensional
estimation, we have the following theorem (refer to
Section~\ref{sec:appendix_ellipsoid} for the proof).
\begin{theo} \label{theo:ellipsoid_bound:sec:analysis}
Assume that the RE condition is satisfied in the set $E_{r,t}$ with parameter $\kappa$
and $\lambda_t$ is suitably large. Then for any norm $R(\cdot)$, we have for
constant $c>0$
\bea \label{eq:ellipsoidal_bound:sec:analysis}
\| \hat{\theta}_t - \theta^* \|_{2,D_t} \leq c \psi(E_{r,t}) \frac{\lambda_t}{\kappa} \sqrt{t}~.
\eea
\end{theo}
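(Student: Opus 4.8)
The plan is to mimic the standard deterministic argument for norm-regularized least squares (as in \cite{nrwy12,bcfs14}), but to track the Mahalanobis norm $\|\cdot\|_{2,D_t}$ rather than the Euclidean norm throughout. Write $\widehat{\Delta} = \hat{\theta}_t - \theta^*$ for the error vector. Since $\hat{\theta}_t$ minimizes $\frac{1}{t}\|y_t - X_t\theta\|_2^2 + \lambda_t R(\theta)$, the basic inequality obtained by comparing the objective value at $\hat{\theta}_t$ to that at $\theta^*$ gives, after expanding the quadratic term,
\bea
\frac{1}{t}\|X_t \widehat{\Delta}\|_2^2 \leq \frac{2}{t}\langle X_t^\top(y_t - X_t\theta^*), \widehat{\Delta}\rangle + \lambda_t\bigl(R(\theta^*) - R(\hat{\theta}_t)\bigr)~.
\eea
Noting $\frac{1}{t}\|X_t\widehat{\Delta}\|_2^2 = \frac{1}{t}\widehat{\Delta}^\top D_t \widehat{\Delta} = \frac{1}{t}\|\widehat{\Delta}\|_{2,D_t}^2$, this already puts the Mahalanobis norm on the left-hand side, which is what we ultimately want to bound.

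Next I would control the cross term by Hölder's inequality for the dual norm pair $(R, R^*)$: $\frac{2}{t}\langle X_t^\top(y_t - X_t\theta^*), \widehat{\Delta}\rangle \leq 2 R^*\bigl(\frac{1}{t}X_t^\top(y_t - X_t\theta^*)\bigr) R(\widehat{\Delta}) \leq \lambda_t R(\widehat{\Delta})$, using assumption \eqref{eq:asm1}. Combining with the triangle-inequality-type bound $R(\theta^*) - R(\hat{\theta}_t) \leq R(\widehat{\Delta}) - \tfrac{1}{\rho}R(\widehat{\Delta})$ — which is exactly the decomposition that places $\widehat{\Delta}$ in the restricted error set $E_{r,t}$ of Definition~\ref{defn:E_r:sec:asmps_defns} with $\rho = 2$ — yields $\frac{1}{t}\|\widehat{\Delta}\|_{2,D_t}^2 \leq \lambda_t\bigl(R(\widehat{\Delta}) + R(\widehat{\Delta}) - \tfrac12 R(\widehat{\Delta})\bigr) = \tfrac{3}{2}\lambda_t R(\widehat{\Delta})$ (the constant gets absorbed into $c$). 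Since $\widehat{\Delta} \in E_{r,t}$, the norm compatibility constant gives $R(\widehat{\Delta}) \leq \psi(E_{r,t})\|\widehat{\Delta}\|_2$, so $\frac{1}{t}\|\widehat{\Delta}\|_{2,D_t}^2 \leq c'\lambda_t \psi(E_{r,t})\|\widehat{\Delta}\|_2$.

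The last step converts the $\|\widehat{\Delta}\|_2$ on the right back into $\|\widehat{\Delta}\|_{2,D_t}$. This is where the RE condition \eqref{eq:asm2} enters: for $\widehat{\Delta} \in E_{r,t}$ we have $\frac{1}{t}\|\widehat{\Delta}\|_{2,D_t}^2 = \frac{1}{t}\|X_t\widehat{\Delta}\|_2^2 \geq \kappa\|\widehat{\Delta}\|_2^2$, hence $\|\widehat{\Delta}\|_2 \leq \frac{1}{\sqrt{\kappa t}}\|\widehat{\Delta}\|_{2,D_t}$. Substituting, $\frac{1}{t}\|\widehat{\Delta}\|_{2,D_t}^2 \leq c'\lambda_t\psi(E_{r,t})\frac{1}{\sqrt{\kappa t}}\|\widehat{\Delta}\|_{2,D_t}$; cancelling one factor of $\|\widehat{\Delta}\|_{2,D_t}$ and rearranging gives $\|\widehat{\Delta}\|_{2,D_t} \leq c'\psi(E_{r,t})\frac{\lambda_t}{\sqrt{\kappa}}\sqrt{t}$. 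Up to redefining the constant and the exact power of $\kappa$ (the statement writes $\lambda_t/\kappa$, which is the conventional form; a $1/\sqrt{\kappa}$ versus $1/\kappa$ discrepancy would be reconciled by whether one bounds $\|\widehat{\Delta}\|_{2,D_t}$ or $\|\widehat{\Delta}\|_2^2$, and in any case is swallowed by $c$), this is \eqref{eq:ellipsoidal_bound:sec:analysis}. The one genuinely delicate point, and the part I would be most careful about, is the bookkeeping in passing between the two norms: one must never divide by $\kappa$ or invoke the compatibility constant outside $E_{r,t}$, and one must check that the error vector indeed lands in $E_{r,t}$ before using either — that membership is secured by the basic inequality together with the choice of $\lambda_t$ in \eqref{eq:asm1}, so the argument is self-contained once those two assumptions are in force.
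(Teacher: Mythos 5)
Your proof is sound and arrives at the stated bound, but it closes the argument by a somewhat different route than the paper. The paper's proof combines convexity of the loss, the dual-norm bound $R^*(\nabla\mathcal{L}(\theta^*))\le\lambda_t/\rho$, and the compatibility constant to get $\frac{1}{t}\|X_t(\hat{\theta}_t-\theta^*)\|_2^2 \le \frac{\lambda_t}{\rho}\psi(E_{r,t})\|\hat{\theta}_t-\theta^*\|_2$, and then simply plugs in the pre-established $L_2$ error bound $\|\hat{\theta}_t-\theta^*\|_2 \le c\,\psi(E_{r,t})\lambda_t/\kappa$ from (\ref{eq:delta_n:sec:est}) as a black box; you instead start from the basic optimality inequality for $\hat{\theta}_t$ (which is the cleaner way to handle the $\mathcal{L}(\hat{\theta}_t)-\mathcal{L}(\theta^*)$ terms) and invoke the RE condition \eqref{eq:asm2} directly to convert $\|\hat{\theta}_t-\theta^*\|_2$ into $\frac{1}{\sqrt{\kappa t}}\|\hat{\theta}_t-\theta^*\|_{2,D_t}$ and cancel, making the proof self-contained rather than dependent on the earlier estimation-error result. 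Two small points. First, your intermediate inequality $R(\theta^*)-R(\hat{\theta}_t)\le R(\hat{\theta}_t-\theta^*)-\frac{1}{\rho}R(\hat{\theta}_t-\theta^*)$ is not valid as stated (e.g.\ $\hat{\theta}_t=0$ violates it); the correct and sufficient step is the plain triangle inequality $R(\theta^*)-R(\hat{\theta}_t)\le R(\hat{\theta}_t-\theta^*)$, which only changes your absorbed constant from $3/2$ to $2$. Second, as you note, your derivation produces $\lambda_t/\sqrt{\kappa}$ rather than $\lambda_t/\kappa$; the paper's own chain of inequalities has the same feature after taking the square root, and since $\mathcal{X}\subseteq\bar{B}_2^p$ forces $\kappa\le 1$, one has $1/\sqrt{\kappa}\le 1/\kappa$, so your bound in fact implies the stated one and the discrepancy is harmless. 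Your remark that membership of the error vector in $E_{r,t}$ follows from the basic inequality and the choice of $\lambda_t$ matches how the paper handles it (by appeal to the results recalled in Section~\ref{sec:est}), up to the usual factor-of-$\rho$ bookkeeping.
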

In the Sections~\ref{sec:appendix_lambda_t} and~\ref{sec:appendix_re}, we show
that the two assumptions in fact hold with high-probability. In
particular, for the assumption in \eqref{eq:asm1}, we show the following result.
\begin{theo} \label{theo:lambda_t:sec:analysis}
For any $\gamma > 0$ and for absolute constant $L>0$, with probability at
least $1 - L \exp(-\gamma^2)$, the following bound holds 
uniformly for all rounds $t=1,\dots,T$:
\begin{equation}
R^*\left(\frac{1}{t} X_t^\top (y_t - X_t \theta^*) \right) \leq 2LKB \frac{\left( w(\Omega_R) + \sqrt{\gamma^2 + \log{T}} \frac{\phi(\Omega_R)}{2} \right)}{\sqrt{t}}~.
\end{equation}
\end{theo}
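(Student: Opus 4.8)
The plan is to rewrite the dual-norm quantity as the supremum of a martingale-indexed process and then control that supremum by generic chaining, using the Gaussian width $w(\Omega_R)$ as the complexity parameter, exactly as the i.i.d.\ results of Section~\ref{sec:est} do but with martingale concentration in place of the i.i.d.\ machinery. Since $y_t-X_t\theta^*=(\eta_1,\dots,\eta_t)^\top$, the quantity to be bounded is
\[
R^*\!\left(\frac1t X_t^\top(y_t-X_t\theta^*)\right)=\sup_{u\in\Omega_R}\Big\langle u,\ \frac1t\sum_{i=1}^t\eta_i x_i\Big\rangle=\frac1t\sup_{u\in\Omega_R}\langle S_t,u\rangle,\qquad S_t\defeq\sum_{i=1}^t\eta_i x_i.
\]
The first observation is that $S_t$ is an $\R^p$-valued martingale for the filtration $\{F_t\}$: by Assumption~\ref{asmp:decision_set:sec:asmps_defns} each $x_i$ is $F_{i-1}$-measurable and by Assumption~\ref{asmp:noise:sec:asmps_defns} $\E[\eta_i\mid F_{i-1}]=0$, so $\E[\eta_i x_i\mid F_{i-1}]=0$, and its increments satisfy $\|\eta_i x_i\|_2\le B$ because $|\eta_i|\le B$ and $\mathcal{X}\subseteq\bar{B}_2^p$. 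The whole statement thus reduces to controlling $\sup_{u\in\Omega_R}\langle S_t,u\rangle$ uniformly over $t=1,\dots,T$.

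For a fixed $u\in\Omega_R$, the scalar process $i\mapsto\langle S_i,u\rangle=\sum_{j\le i}\eta_j\langle x_j,u\rangle$ is a martingale whose $j$-th increment has absolute value at most $B\,|\langle x_j,u\rangle|\le B\|x_j\|_2\|u\|_2\le B\,\phi(\Omega_R)$ (Definition~\ref{defn:diameter:sec:asmps_defns}), and more precisely $\sum_{j\le t}\langle x_j,u-u'\rangle^2=\|X_t(u-u')\|_2^2\le t\|u-u'\|_2^2$. Hence Azuma--Hoeffding shows that, unconditionally, each increment $\langle S_t,u-u'\rangle$ is sub-Gaussian with variance proxy $O\!\left(tB^2\|u-u'\|_2^2\right)$; equivalently, the process $\{\langle S_t,u\rangle\}_{u\in\Omega_R}$ has sub-Gaussian increments with respect to the metric $\rho(u,u')\defeq cB\sqrt{t}\,\|u-u'\|_2$.

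Next I invoke a generic-chaining tail inequality for processes with sub-Gaussian increments, together with the majorizing-measure bound $\gamma_2(\Omega_R,\|\cdot\|_2)\le c\,w(\Omega_R)$ (Definition~\ref{defn:gaussian_width:sec:asmps_defns}) and $\mathrm{diam}_{\|\cdot\|_2}(\Omega_R)=\phi(\Omega_R)$: there are universal constants such that for each $t$ and every $s>0$,
\[
\mathbb{P}\!\left(\frac1t\sup_{u\in\Omega_R}\langle S_t,u\rangle\ \ge\ \frac{C B}{\sqrt{t}}\big(w(\Omega_R)+s\,\phi(\Omega_R)\big)\right)\ \le\ \exp(-s^2).
\]
Choosing $s^2=\gamma^2+\log T$ makes the right-hand side at most $e^{-\gamma^2}/T$, so a union bound over $t=1,\dots,T$ gives the estimate simultaneously for all rounds with failure probability at most a constant times $e^{-\gamma^2}$; renaming the universal constants as $2LK$ (with $K=O(1)$ coming from Definition~\ref{defn:subg_norm_constant:sec:asmps_defns}) and rescaling $s$ to $s/2$ produces exactly the displayed bound with the factor $\phi(\Omega_R)/2$. (For a general sub-Gaussian decision set, where $\|x_i\|_2$ is not bounded a.s., the same route works with a Freedman/Bernstein-type martingale tail bound replacing Azuma, and then $K$ enters genuinely.)

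The main obstacle is this chaining step in the non-i.i.d.\ setting: the structured-estimation bounds of Section~\ref{sec:est} use concentration tools (bounded differences, Talagrand's inequality) that require i.i.d.\ samples, whereas here the design vectors $x_i$ form an adaptively chosen sequence. The resolution is to push everything through the martingale structure of $S_t$ --- Azuma for the one-dimensional marginals, then a chaining tail bound that needs only sub-Gaussian increments --- so that $w(\Omega_R)$ still governs the supremum. By comparison, the union bound over all $T$ rounds is cheap, since $\log T$ enters only inside the deviation term $s$ and hence only under a square root, which is why it shows up as $\sqrt{\gamma^2+\log T}$ rather than as a multiplicative $\log T$.
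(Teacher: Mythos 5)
Your proposal follows essentially the same route as the paper's proof: reduce the dual norm to a supremum of the MDS-indexed process $\langle X_t^\top\omega_t,u\rangle$ over $\Omega_R$, establish sub-Gaussian increments via Azuma--Hoeffding, control the supremum by generic chaining with $\gamma_2(\Omega_R,\|\cdot\|_2)\lesssim w(\Omega_R)$ and diameter $\phi(\Omega_R)$, and finish with the choice $s^2=\gamma^2+\log T$ and a union bound over $t=1,\dots,T$. The argument is correct; the only differences are cosmetic (you absorb the sub-Gaussian norm constant $K$ into universal constants, while the paper carries it explicitly and spells out the chaining step through Talagrand's Theorems 2.1.5, 2.2.27, and 2.4.1).
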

Then, from \eqref{eq:asm1}, for $c_0 = 4LKB$ we set $\lambda_t$ as
\bea \label{eq:lambda_val:sec:analysis}
\lambda_t \geq c_0 \frac{\left( w(\Omega_R) + \sqrt{\gamma^2+\log{T}}\frac{\phi(\Omega_R)}{2} \right)}{\sqrt{t}}~.
\eea

As for the assumption in \eqref{eq:asm2}, we show the following result.
\begin{theo} \label{theo:re:sec:analysis}
For constants $c_0,c_1,c_2,c_3,c_4,c_5,c_6,c_7>0$ and any $\epsilon > 0$, with
probability at least $1-c_0\exp(-w^2(A_{\max})\epsilon^2)$ the following holds
uniformly for all rounds $t=1,\dots,T$:
\begin{equation} \label{eq:re_bound:sec:re}
\inf_{\hat{\theta}_t-\theta^* \in E_{r,t}} \frac{1}{t} \|X_t (\hat{\theta}_t-\theta^*)\|_2^2 
\geq c_1\left(1-c_2\frac{w(A_{\max})\sqrt{c_3\epsilon^2 + c_4\log{T}}}{\sqrt{t}}\right)
- c_5\frac{w(A_{\max})\left(2+\sqrt{c_6\epsilon^2+c_7\log{T}}\right)}{\sqrt{t}}~. \notag
\end{equation}
After $t \geq c'w^2(A_{\max})(\epsilon^2+\log{T})$, the quantity will be positive for some constant $c'$.
\end{theo}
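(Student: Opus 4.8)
\textbf{Proof plan for Theorem~\ref{theo:re:sec:analysis}.}
The plan is to recast the restricted eigenvalue inequality \eqref{eq:asm2} as a uniform-in-$t$ lower bound on a data-dependent quadratic form over a \emph{fixed} spherical cap, and then to prove it by a martingale-robust version of Mendelson's small-ball method, which is the natural way to extend the i.i.d.\ Gaussian-width restricted-eigenvalue results recalled in Section~\ref{sec:est} to adaptively chosen rows. First I would use homogeneity: the defining inequality of $E_{r,t}$ in Definition~\ref{defn:E_r:sec:asmps_defns} involves only $R(\cdot)$ and $\theta^*$, so $E_{r,t}=E_{r,\max}$ and $\text{cone}(E_{r,t})\cap S^{p-1}=A_{\max}$ for every $t$; since $\frac{1}{t}\|X_t v\|_2^2/\|v\|_2^2$ depends on $v$ only through its direction, it suffices to lower bound $\inf_{u\in A_{\max}}\frac1t\sum_{i=1}^t\langle x_i,u\rangle^2$ uniformly over $t=1,\dots,T$ and to match it with the two-term expression in the statement.

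\textbf{Small-ball reduction and the conditional signal.} Fix a threshold $\xi>0$ and a $\tfrac1\xi$-Lipschitz surrogate $\varphi$ with $\1[|z|\geq 2\xi]\leq\varphi(z)\leq\1[|z|\geq\xi]$, so that $\langle x_i,u\rangle^2\geq\xi^2\varphi(\langle x_i,u\rangle)$ and it is enough to lower bound $\inf_{u\in A_{\max}}\frac1t\sum_i\varphi(\langle x_i,u\rangle)$. I would write this as the conditional-mean part $\frac1t\sum_i\E[\varphi(\langle x_i,u\rangle)\mid F_{i-1}]$ minus the fluctuation $\sup_{u\in A_{\max}}\big|\frac1t\sum_i\big(\varphi(\langle x_i,u\rangle)-\E[\varphi(\langle x_i,u\rangle)\mid F_{i-1}]\big)\big|$. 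The conditional-mean part is handled by showing the conditional small-ball probabilities are uniformly bounded below: for $i\leq n$ the vector $x_i$ is uniform on $\mathcal{X}$, and for $n<i\leq t$ it is, conditionally on $F_{i-1}$, uniform on $\mathcal{X}\cap\bar{B}_2^p(x_i',\|x_i'\|_2/2)$; in both cases Assumption~\ref{asmp:decision_set:sec:asmps_defns} (a convex body with non-empty interior) forces this conditional law to be uniform on a convex set containing a Euclidean ball of radius bounded below by an absolute $r_0>0$ --- this is exactly why Step~4 of Algorithm~\ref{alg:gen_struct_bandit:sec:alg} re-centers a ball of radius $\|x_i'\|_2/2$ rather than playing the single optimizer $x_i'$. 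Hence for a suitable $\xi$ there is an absolute $q_0>0$ with $\E[\varphi(\langle x_i,u\rangle)\mid F_{i-1}]\geq\P(|\langle x_i,u\rangle|\geq 2\xi\mid F_{i-1})\geq q_0$ for all $u\in S^{p-1}$ and all $i$, so the conditional-mean part is $\geq q_0$ uniformly in $u$ and $t$.

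\textbf{Controlling the fluctuation and assembling the bound.} The fluctuation is the supremum over the index class $A_{\max}$ of a sum of bounded martingale differences. I would bound its expectation by martingale symmetrization, apply the Ledoux--Talagrand contraction inequality to the $\tfrac1\xi$-Lipschitz $\varphi$ to pass to $\frac{1}{\xi t}\E_\sigma\sup_{u\in A_{\max}}\sum_i\sigma_i\langle x_i,u\rangle$, and then use a generic-chaining / Gaussian-width estimate for $A_{\max}\subset S^{p-1}$ together with $\subgnorm{x_i}\leq K$ from Definition~\ref{defn:subg_norm_constant:sec:asmps_defns} to obtain a bound of order $w(A_{\max})/\sqrt t$. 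A bounded-differences (Azuma/McDiarmid-type) inequality for martingales turns this into the high-probability tail $c_0\exp(-w^2(A_{\max})\epsilon^2)$ with an additional additive term of order $\epsilon\,w(A_{\max})/\sqrt t$, and a union bound (or peeling) over $t=1,\dots,T$ makes the estimate simultaneous over all rounds at the cost of the $\sqrt{\log T}$ inside the square roots, with the crude constant bound on the residual expectation term accounting for the ``$2+\sqrt{\cdot}$'' factor. Collecting terms gives $\inf_{u\in A_{\max}}\frac1t\sum_i\langle x_i,u\rangle^2\geq\xi^2\big(q_0-c\,w(A_{\max})\sqrt{\epsilon^2+\log T}/\sqrt t\big)$, which is the asserted two-term lower bound after renaming constants and is manifestly positive once $t\geq c'w^2(A_{\max})(\epsilon^2+\log T)$.

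\textbf{Main obstacle.} Classical restricted-eigenvalue arguments assume i.i.d.\ rows, whereas here the rows $x_i$ for $i>n$ are chosen adaptively; the crux is therefore (i) keeping the conditional small-ball probabilities uniformly bounded below despite the data-dependent sampling regions --- which rests on the geometric fact, forced by Step~4, that those regions always contain a non-degenerate ball --- and (ii) replacing the usual i.i.d.\ symmetrization and concentration by their martingale counterparts without losing the $w(A_{\max})$ scaling. Everything else reduces to bookkeeping of constants and the union bound over $t$.
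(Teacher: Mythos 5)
Your plan is sound and reaches the same conclusion, but by a genuinely different route from the paper. The paper does not use the small-ball/truncation device at all: it decomposes $\tfrac1t\|X_t u\|_2^2$ around the conditional means $\mu_i=\mathbf{E}[x_i\mid F_{i-1}]$ into a centered quadratic term plus a linear cross term, lower-bounds the quadratic term by a martingale extension of the anisotropic sub-Gaussian restricted-eigenvalue result (Mendelson--Pajor--Tomczak-Jaegermann Theorem D, via Theorem 12 of the cited structured-estimation paper), and controls the linear term by Azuma--Hoeffding plus generic chaining over $A_{\max}$, with a union bound over $t$; the ``signal'' is $\lambda_{\min}(\Sigma_{1:t})>0$, argued from the fact that the conditional sampling region $\mathcal{X}\cap\bar{B}_2^p(x_t',\|x_t'\|_2/2)$ spans $\mathbb{R}^p$. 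Your version replaces the covariance lower bound by a conditional small-ball probability $q_0$ and replaces the two-term decomposition by a single Lipschitz-truncated empirical process; this is arguably cleaner (no cross term, no need to import and re-prove the anisotropic design theorem for martingales), and it isolates the same geometric crux — Step 4's re-centered ball is exactly what keeps the conditional law non-degenerate — which is the heart of the paper's argument as well. What the paper's route buys is that its ``constant'' $c_1$ is explicitly the restricted minimum conditional eigenvalue, and its fluctuation control needs only linear martingale processes, for which Azuma plus Talagrand chaining applies verbatim.

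Two points in your sketch deserve tightening. First, classical symmetrization and Ledoux--Talagrand contraction are i.i.d.\ tools; for adaptively chosen rows you should either invoke sequential symmetrization or, more simply, note that the increments of your truncated process are themselves bounded martingale difference sums (since $\varphi$ is $\tfrac1\xi$-Lipschitz, each increment is at most $\tfrac1\xi|\langle x_i,u-v\rangle|\le\tfrac1\xi\|u-v\|_2$), apply Azuma to get sub-Gaussian increments, and chain directly — exactly the mechanism the paper uses for its linear term — which preserves the $w(A_{\max})/\sqrt t$ scaling without any symmetrization step. Second, the claim of an \emph{absolute} inradius $r_0$ and small-ball constant $q_0$ is optimistic: the ball radius is $\|x_{t+1}'\|_2/2$, which is not bounded below by a universal constant, and marginals of a uniform law on a convex subset of the unit ball can concentrate, so $q_0$ (like the paper's $\lambda_{\min}(\Sigma_{1:t})$, which has the same character and is likewise not made quantitative) should be treated as a problem-dependent positive constant rather than an absolute one. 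Neither issue changes the structure or the conclusion of your argument.
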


Theorem~\ref{theo:re:sec:analysis} shows that after round $t$ crosses a
suitably scaled version of $w^2(A_{\max})$ then there exists a constant $\kappa$
such that the RE assumption holds with high-probability. Note that this
requirement implies a phase shift at which point the estimator starts to work
and forms the basis of sampling the arms i.i.d. for the initial set of
rounds in Algorithm~\ref{alg:gen_struct_bandit:sec:alg}.

For a bound on the instantaneous ellipsoidal estimation error, we plug in the
value of $\lambda_t$ from (\ref{eq:lambda_val:sec:analysis})
into (\ref{eq:ellipsoidal_bound:sec:analysis}) and use the norm compatibility
constant of the largest restricted error set to obtain
\begin{equation*}
\|\hat{\theta}_t-\theta^*\|_{2,D_t} \leq C \psi(E_{r,\max}) \left( w(\Omega_R) + \sqrt{\gamma^2+\log{T}}\frac{\phi(\Omega_R)}{2} \right)
\end{equation*}
where $C = c_0c/\kappa$ is a constant which holds with high-probability across
all rounds $t=1,\dots,T$. Therefore, if we set
\begin{equation} \label{eq:beta_val:sec:analysis}
\beta = C \psi(E_{r,\max}) \left( w(\Omega_R) + \sqrt{\gamma^2+\log{T}}\frac{\phi(\Omega_R)}{2} \right)
\end{equation}
the confidence ellipsoid $C_t$ will contain $\theta^*$ across all rounds
with high-probability. Substituting our $\beta$ into the
regret bounds in (\ref{eq:regret_bound_cs:eq:analysis}) and
(\ref{eq:regret_bound_cs_dep:eq:analysis}) gives our main result in
Theorem~\ref{theo:main_regret_bound:sec:results} and
Theorem~\ref{theo:main_regret_bound_dep:sec:results}.


\section{Conclusions} \label{sec:conc}
We studied the stochastic linear bandit problem under structural assumptions on
$\theta^*$ and focused on constructing confidence ellipsoids which contain the unknown
parameter $\theta^*$ across all rounds with high-probability. We showed how to
construct such confidence ellipsoids which are general enough to hold for any
norm structured $\theta^*$ and are tighter than previous works leading to
sharper regret bounds
which, for the problem independent regret scales as
$\bigOtilde(\psi(E_{r,\max}) w(\Omega_R) \sqrt{p} \sqrt{T})$ and for the problem dependent
regret scales as $\bigOtilde(\psi^2(E_{r,\max}) w^2(\Omega_R) p \log{T}/\Delta)$.
For unstructured and $s$-sparse $\theta^*$, such regret bounds match existing
results on the standard stochastic linear bandit problem. For all other types
of structured $\theta^*$ including group
sparse and low-rank, the bounds are sharper.
%

\vspace*{3mm}
{\bf Acknowledgements:} The research was supported by NSF grants IIS-1447566, IIS-1422557, CCF-1451986,
CNS-1314560, IIS-0953274, IIS-1029711, and by NASA grant NNX12AQ39A. The authors
also acknowledge support from Adobe, IBM, and Yahoo.
%


\newpage
\appendix
{\LARGE\textbf{Appendix}}
\section{Definitions and Background} \label{sec:appendix_defns}
The following definitions and lemmas can be found in~\cite{bcfs14,bcfs15,vers12}.

\begin{defn} \label{defn:sub-gaussian:sec:appendix}
A random variable $x$ is sub-Gaussian if the moments satisfies
\begin{equation}
\left[ \mathbb{E} |x|^p \right]^{\frac{1}{p}} \leq K \sqrt{p}
\end{equation}
for any $p \geq 1$ with constant $K$. The minimum value of $K$ is called the
sub-Gaussian norm of $x$ and denoted by $\subgnorm{x}$.

Additionally, every sub-Gaussian random variable satisfies
\begin{equation} \label{eq:subg_prob:sec:appendix}
P(|x| > t) \leq \exp\left(1-c\frac{t^2}{\|x\|_{\psi_2}^2}\right)
\end{equation}
for all $t \geq 0$.
\end{defn}

\begin{defn} \label{defn:subg_norm:sec:appendix}
A random vector $X \in \mathbb{R}^p$ is sub-Gaussian if the one-dimensional
marginals $\langle X, x \rangle$ are sub-Gaussian random variables for all
$x \in \mathbb{R}^p$. The sub-Gaussian norm of $X$ is defined as
\begin{equation}
\subgnorm{X} = \underset{x \in S^{p-1}}{\sup} \; \|\langle X, x \rangle \|_{\psi_2}
\end{equation}
\end{defn}

\begin{defn} \label{defn:subg_width:sec:appendix}
For any set $A \in \mathbb{R}^p$, the Gaussian width of the set $A$ is defined as
\begin{equation}
w(A) = \mathbb{E}\left[ \sup_{u \in A} \langle g, u \rangle \right]
\end{equation}
where the expectation is over $g \sim N(0,\mathbb{I}_{p \times p})$ which is a
vector of independent zero-mean unit-variance Gaussian random variables.
\end{defn}

\begin{lemm} \label{lemm:subg_bounded:sec:appendix}
For any bounded random variable $|X| \leq B$, then $X$ is a sub-Gaussian random
variable with $\|X\|_{\psi_2} \leq B$.
\end{lemm}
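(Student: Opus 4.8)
The plan is to work directly from the moment characterization of the sub-Gaussian norm in Definition~\ref{defn:sub-gaussian:sec:appendix}, which states that $\|X\|_{\psi_2}$ is the smallest constant $K$ such that $(\mathbb{E}|X|^p)^{1/p} \le K\sqrt{p}$ holds for every $p \ge 1$. So it suffices to exhibit $K = B$ as an admissible constant and then invoke minimality.

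First I would use the almost-sure bound $|X| \le B$ to control all moments at once: for every $p \ge 1$,
\begin{equation*}
\mathbb{E}|X|^p \le \mathbb{E}[B^p] = B^p,
\end{equation*}
hence $(\mathbb{E}|X|^p)^{1/p} \le B$. Next I would compare this with the target quantity $B\sqrt{p}$: since $p \ge 1$ implies $\sqrt{p} \ge 1$, we have $B \le B\sqrt{p}$, and therefore
\begin{equation*}
(\mathbb{E}|X|^p)^{1/p} \le B \le B\sqrt{p} \qquad \text{for all } p \ge 1.
\end{equation*}
This shows $X$ is sub-Gaussian with the constant $K = B$ valid in the definition. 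Finally, because $\|X\|_{\psi_2}$ is by definition the infimum over all such admissible constants $K$, we conclude $\|X\|_{\psi_2} \le B$, which is the claim.

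I do not expect a genuine obstacle here: the statement is an immediate consequence of the moment definition adopted in the paper, the only ingredients being the trivial bound $\mathbb{E}|X|^p \le B^p$ and the inequality $\sqrt{p} \ge 1$ on the admissible range $p \ge 1$. The one point worth a sentence of care is the direction of the final inequality — we verify $K = B$ is \emph{feasible} in the definition, so the sub-Gaussian norm, being the minimal feasible $K$, is at most $B$; no matching lower bound is claimed or needed.
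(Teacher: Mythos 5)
Your argument is correct: the trivial moment bound $\mathbb{E}|X|^p \le B^p$ together with $\sqrt{p}\ge 1$ for $p\ge 1$ shows $K=B$ is admissible in Definition~\ref{defn:sub-gaussian:sec:appendix}, and minimality of the sub-Gaussian norm gives $\|X\|_{\psi_2}\le B$. Note that the paper itself gives no proof of this lemma — it is quoted as a standard fact from the cited references — so your direct verification from the paper's moment characterization is exactly the expected (and standard) one-line argument, with the feasibility-versus-minimality point handled correctly.
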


\begin{lemm} \label{lemm:subg_vector:sec:appendix}
Consider a sub-Gaussian random vector $X$ with sub-Gaussian norm
$K = \max_i \subgnorm{X_i}$, then, for vector $a$, $Z = \langle X, a \rangle$
is a sub-Gaussian random variable with sub-Gaussian norm $\subgnorm{Z} \leq CK\|a\|_2$
for absolute constant $C$.
\end{lemm}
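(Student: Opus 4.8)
The plan is to reduce the claim to the standard \emph{sub-Gaussian rotation invariance}: for the linear combination $Z = \langle X, a \rangle = \sum_{i=1}^p a_i X_i$ of independent coordinates, the squared $\psi_2$-norms add, i.e.\ $\subgnorm{Z}^2 \leq C \sum_i \subgnorm{a_i X_i}^2$. By positive homogeneity of the norm in Definition~\ref{defn:sub-gaussian:sec:appendix} one has $\subgnorm{a_i X_i} = |a_i|\,\subgnorm{X_i} \leq |a_i| K$, so this additivity immediately yields $\subgnorm{Z}^2 \leq C K^2 \sum_i a_i^2 = C K^2 \|a\|_2^2$, and taking square roots gives the desired $\subgnorm{Z} \leq C' K \|a\|_2$ with an absolute constant $C'$. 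Independence of the coordinates is the structural assumption that makes the additivity hold (it is implicit in this setting; without it the bound fails, e.g.\ for perfectly correlated coordinates where $Z$ can have $\psi_2$-norm of order $\sqrt{p}\,\|a\|_2 K$), and the standard form of the result is stated for centered coordinates, which I adopt. The bulk of the work is therefore in proving the additivity of squared $\psi_2$-norms, which I would route through the moment-generating-function (MGF) characterization of sub-Gaussianity.

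First I would establish the two-sided equivalence between the moment definition of Definition~\ref{defn:sub-gaussian:sec:appendix} and an MGF bound: a centered variable $Y$ with $\subgnorm{Y} = \kappa$ satisfies $\mathbb{E}[\exp(\lambda Y)] \leq \exp(c_0 \lambda^2 \kappa^2)$ for all $\lambda \in \mathbb{R}$ and an absolute constant $c_0$, and conversely such an MGF bound forces $\subgnorm{Y} \leq c_1 \kappa$. The forward direction follows by Taylor-expanding $\exp(\lambda Y)$, bounding the $p$-th absolute moment via $[\mathbb{E}|Y|^p]^{1/p} \leq \kappa\sqrt{p}$, and summing the resulting series using $p! \geq (p/e)^p$; the reverse direction follows by applying Markov's inequality to the MGF and optimizing over $\lambda$ to recover the tail bound \eqref{eq:subg_prob:sec:appendix}, from which the moment bound is read off by integrating the tail. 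Since the definition only guarantees $\mathbb{E}|Y| \leq \kappa$ rather than zero mean, I would center each coordinate using $\subgnorm{Y - \mathbb{E}Y} \leq 2\subgnorm{Y}$, absorbing the extra factor into the absolute constants.

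With the MGF characterization in hand, the tensorization is immediate. After centering, each summand satisfies $\mathbb{E}[\exp(\lambda a_i X_i)] \leq \exp(c_0 \lambda^2 a_i^2 K^2)$, and by independence the joint MGF factorizes:
\begin{equation*}
\mathbb{E}[\exp(\lambda Z)] = \prod_{i=1}^p \mathbb{E}[\exp(\lambda a_i X_i)] \leq \prod_{i=1}^p \exp\left(c_0 \lambda^2 a_i^2 K^2\right) = \exp\left(c_0 \lambda^2 K^2 \|a\|_2^2\right).
\end{equation*}
Thus $Z$ satisfies the MGF bound with parameter $\kappa = K\|a\|_2$, and the reverse direction of the equivalence converts this back to $\subgnorm{Z} \leq c_1 K \|a\|_2 = C K \|a\|_2$, which is exactly the claimed bound.

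I expect the main obstacle to be the bookkeeping in the MGF equivalence of the first step: matching the paper's moment-based definition to the exponential-moment form while keeping every constant absolute (independent of $p$), and correctly handling the centering step. None of the individual estimates is hard, but this equivalence is precisely what upgrades the per-coordinate moment control into a usable product (tensorization) inequality, and it is the only place where genuine analytic content enters; the remainder is algebra together with the single invocation of independence.
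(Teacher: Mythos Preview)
The paper does not actually prove this lemma: it is listed in Appendix~A under the heading ``The following definitions and lemmas can be found in~[bcfs14,bcfs15,vers12]'' and is simply quoted from Vershynin's notes (Lemma~5.9 there), so there is no in-paper proof to compare against. Your argument---passing to the MGF characterization, tensorizing via independence, and converting back---is precisely the standard proof from that reference, and you correctly flag that independence of the coordinates is the implicit hypothesis (the paper's statement omits it, but the cited source states it explicitly). So your proposal is correct and is the same route the paper points to.
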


\section{Ellipsoid Bound} \label{sec:appendix_ellipsoid}
\begin{theorepeat}{Theorem~\ref{theo:ellipsoid_bound:sec:analysis}}
Assume that the RE condition is satisfied in the set $E_{r,t}$ with parameter $\kappa$
and $\lambda_t$ is suitably large. Then for any norm $R(\cdot)$ we have for
constant $c>0$
\bea
\| \hat{\theta}_t - \theta^* \|_{2,D_t} \leq c \psi(E_{r,t}) \frac{\lambda_t}{\kappa} \sqrt{t}~.
\eea
\end{theorepeat}

\begin{proof} Proof of Theorem~\ref{theo:ellipsoid_bound:sec:analysis}.

For any $\hat{\theta}_t-\theta^*_t \in E_{r,t}$ and by the definition of a convex function
\begin{equation*}
\mathcal{L}(\hat{\theta}_t) - \mathcal{L}(\theta^*) \geq \langle \nabla \mathcal{L}(\theta^*), \hat{\theta}_t-\theta^* \rangle~.
\end{equation*}
Moreover, by the definition of a dual norm we have
\begin{equation*}
|\langle \nabla \mathcal{L}(\theta^*), \hat{\theta}_t-\theta^* \rangle| \leq R^*(\nabla \mathcal{L}(\theta^*)) R(\hat{\theta}_t-\theta^*)~.
\end{equation*}
By construction following (\ref{eq:lambda_n:sec:est}) from Section~\ref{sec:est},
for any $\rho > 0$ (not just $\rho = 2$) we get
\begin{equation*}
R^*(\nabla \mathcal{L}(\theta^*)) \leq \frac{\lambda_t}{\rho}
\end{equation*}
which implies
\begin{align*}
|\langle \nabla \mathcal{L}(\theta^*), \hat{\theta}_t-\theta^* \rangle| &\leq \frac{\lambda_t}{\rho}  R(\hat{\theta}_t-\theta^*) \\
\Rightarrow \langle \nabla \mathcal{L}(\theta^*), \hat{\theta}_t-\theta^* \rangle &\geq -\frac{\lambda_t}{\rho}  R(\hat{\theta}_t-\theta^*)~.
\end{align*}
Therefore,
\begin{align*}
\mathcal{L}(\hat{\theta}_t) - \mathcal{L}(\theta^*) &\geq -\frac{\lambda_t}{\rho} R(\hat{\theta}_t-\theta^*) \\
\Rightarrow |\mathcal{L}(\hat{\theta}_t) - \mathcal{L}(\theta^*)| &\leq \frac{\lambda_t}{\rho} R(\hat{\theta}_t-\theta^*)~.
\end{align*}
By the definition of the norm compatibility constant $\psi(E_{r,t}) = \sup_{u \in E_{r,t}} \frac{R(u)}{\|u\|_2}$
we have $R(\hat{\theta}_t-\theta^*) \leq \|\hat{\theta}_t-\theta^*\|_2 \psi(E_{r,t})$ which implies 
\begin{equation*}
|\mathcal{L}(\hat{\theta}_t) - \mathcal{L}(\theta^*)| \leq \frac{\lambda_t}{\rho} \|\hat{\theta}_t-\theta^*\|_2 \psi(E_{r,t})~.
\end{equation*}
Therefore, for the squared loss, since 
$\mathcal{L}(\hat{\theta}_t) - \mathcal{L}(\theta^*) = \frac{1}{t}\|X_t (\hat{\theta}_t-\theta^*)\|_2^2$
we get
\begin{align*}
|\mathcal{L}(\hat{\theta}_t) - \mathcal{L}(\theta^*)| = \left|\frac{1}{t}\|X_t (\hat{\theta}_t-\theta^*)\|_2^2\right| = \frac{1}{t} \|X_t (\hat{\theta}_t-\theta^*)\|_2^2~.
\end{align*}
Therefore,
\begin{equation*}
\frac{1}{t}\|X_t (\hat{\theta}_t-\theta^*)\|_2^2 \leq \frac{\lambda_t}{\rho} \|\hat{\theta}_t-\theta^*\|_2 \psi(E_{r,t})~.
\end{equation*}

Using the bound in (\ref{eq:delta_n:sec:est}) from Section~\ref{sec:est} for
$\|\hat{\theta}_t-\theta^*\|_2$ we obtain
\begin{equation*}
\frac{1}{t} \|X_t (\hat{\theta}_t-\theta^*)\|_2^2 \leq \frac{\lambda_t}{\rho} \psi(E_{r,t}) \frac{\lambda_t}{\kappa} \psi(E_{r,t})~.
\end{equation*}
Finally, noting
$\|X_t (\hat{\theta}_t-\theta^*)\|_2^2 = \|\hat{\theta}_t-\theta^*\|_{2,D_t}^2$, multiplying each
side by $t$, and taking the square root of both sides we get the final bound
\begin{equation*}
\|\hat{\theta}_t-\theta^*\|_{2,D_t} \leq c \psi(E_{r,t}) \frac{\lambda_t}{\kappa} \sqrt{t}
\end{equation*}
for constant $c > 0$ which ends the proof.
\end{proof}
\qed

\section{Algorithm} \label{sec:appendix_alg}
In this section, we will show that selecting an $x_{t+1}$ following
Algorithm~\ref{alg:gen_struct_bandit:sec:alg}, that we can compute a
$\widetilde{\theta}_{t+1}$ such that the inequality
$\langle x_{t+1}, \widetilde{\theta}_{t+1} \rangle \leq \langle x^*, \theta^* \rangle$
holds. In Algorithm~\ref{alg:gen_struct_bandit:sec:alg}, we select $x_{t+1}$
by computing
$(x_{t+1}', \theta_{t+1}') = \argmin_{x \in \mathcal{X}, \theta \in C_t \cap S^{p-1}} \langle x, \theta \rangle$
and sampling uniformly at random from a closed $L_2$ ball centered at $x_{t+1}'$
with radius $\|x_{t+1}'\|_2/2$. In the following lemma, we show a specific way
of computing such a sample which shows why the radius
is set to such a value. Moreover, we will prove that we can deterministically
compute a $\widetilde{\theta}_{t+1}$ such that the inequality above holds.

\begin{lemm}
For a decision set $\mathcal{X}$ and a confidence ellipsoid $C_t$, if we compute
\begin{equation*}
(x_{t+1}', \theta_{t+1}') = \underset{\substack{x \in \mathcal{X} \\ \theta \in C_t \cap S^{p-1}}}{\argmin} \langle x, \theta \rangle
\end{equation*}
and set $x_{t+1}$ and $\widetilde{\theta}_{t+1}$ as
\begin{align*}
x_{t+1} &= x_{t+1}' + \xi_x v \\
\widetilde{\theta}_{t+1} &= \theta_{t+1}' + \xi_\theta u
\end{align*}
where $v$ is a random vector such that $\|v\|_2 \leq 1$ and
$x_{t+1}'+\xi_x v \in \mathcal{X}$, $u = -x_{t+1}'/\|x_{t+1}'\|_2$,
$\xi_x = \|x_{t+1}'\|_2/2$, and $\xi_\theta = 1$ then the inequality
$\langle x_{t+1}, \widetilde{\theta}_{t+1} \rangle \leq \langle x_{t+1}', \theta_{t+1}' \rangle \leq \langle x^*, \theta^* \rangle$
holds.
\end{lemm}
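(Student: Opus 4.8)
The plan is to split the chain into its two inequalities and handle them by two short, essentially independent arguments: the right-hand inequality by an optimism argument, and the left-hand one by a one-line expansion of the bilinear form.

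For the right-hand inequality $\langle x_{t+1}', \theta_{t+1}' \rangle \leq \langle x^*, \theta^* \rangle$, I would observe that $(x^*, \theta^*)$ is a feasible point of the minimization in Step~3. Indeed $x^* \in \mathcal{X}$ by definition, and on the high-probability event that $\theta^* \in C_t$ --- which is exactly what the choice of $\beta$ in~\eqref{eq:beta_val:sec:analysis} together with Theorem~\ref{theo:ellipsoid_bound:sec:analysis} guarantees --- we also have $\|\theta^*\|_2 = 1$ by Assumption~\ref{asmp:structure:sec:asmps_defns}, so $\theta^* \in C_t \cap S^{p-1}$. Since $(x_{t+1}', \theta_{t+1}')$ minimizes $\langle x, \theta \rangle$ over $\mathcal{X} \times (C_t \cap S^{p-1})$, its value cannot exceed the value at the feasible point $(x^*, \theta^*)$, which is the claim.

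For the left-hand inequality $\langle x_{t+1}, \widetilde{\theta}_{t+1} \rangle \leq \langle x_{t+1}', \theta_{t+1}' \rangle$, I would substitute $x_{t+1} = x_{t+1}' + \xi_x v$ and $\widetilde{\theta}_{t+1} = \theta_{t+1}' + \xi_\theta u$ and expand by bilinearity:
\[
\langle x_{t+1}, \widetilde{\theta}_{t+1} \rangle = \langle x_{t+1}', \theta_{t+1}' \rangle + \xi_\theta \langle x_{t+1}', u \rangle + \xi_x \langle v, \theta_{t+1}' \rangle + \xi_x \xi_\theta \langle v, u \rangle .
\]
With $u = -x_{t+1}'/\|x_{t+1}'\|_2$ and $\xi_\theta = 1$, the second term equals exactly $-\|x_{t+1}'\|_2$, a genuine ``gain''. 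For the last two terms, Cauchy--Schwarz together with $\|v\|_2 \leq 1$, $\|\theta_{t+1}'\|_2 = 1$ (since $\theta_{t+1}' \in S^{p-1}$), $\|u\|_2 = 1$, and $\xi_\theta = 1$ bounds each of them by $\xi_x = \|x_{t+1}'\|_2/2$. Summing the three contributions gives $-\|x_{t+1}'\|_2 + \|x_{t+1}'\|_2/2 + \|x_{t+1}'\|_2/2 = 0$, hence the inequality. The degenerate case $x_{t+1}' = 0$ is trivial (both sides vanish and $u$ need not be defined), so one may assume $x_{t+1}' \neq 0$.

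I do not anticipate a genuine obstacle --- the whole argument is an expansion plus two applications of Cauchy--Schwarz --- so the main thing to communicate is interpretive: the radius $\|x_{t+1}'\|_2/2$ is the largest one for which the two perturbation cross terms, each bounded by $\xi_x$, are still jointly absorbed by the gain $\|x_{t+1}'\|_2$ earned by nudging $\theta_{t+1}'$ along $-x_{t+1}'/\|x_{t+1}'\|_2$. I would also record, for consistency with Step~4 of Algorithm~\ref{alg:gen_struct_bandit:sec:alg}, that $\|x_{t+1} - x_{t+1}'\|_2 = \xi_x \|v\|_2 \leq \|x_{t+1}'\|_2/2$, so any $x_{t+1}$ of this form lies in $\mathcal{X} \cap \bar{B}_2^p(x_{t+1}', \|x_{t+1}'\|_2/2)$ as the algorithm requires. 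The only subtlety worth being careful about is tracking which of $v$, $\theta_{t+1}'$, $u$ are unit-norm when invoking Cauchy--Schwarz, and keeping in mind that the whole chain is conditioned on the event $\theta^* \in C_t$.
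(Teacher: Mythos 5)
Your proposal is correct and follows essentially the same route as the paper's proof: the right-hand inequality via feasibility of $(x^*,\theta^*)$ in the minimization (conditioned on $\theta^* \in C_t$ and $\|\theta^*\|_2=1$), and the left-hand inequality via the bilinear expansion with the $-\|x_{t+1}'\|_2$ gain absorbing the two cross terms, each controlled by Cauchy--Schwarz exactly as in the paper. Your added remarks on the degenerate case $x_{t+1}'=0$ and on $x_{t+1}$ lying in $\mathcal{X} \cap \bar{B}_2^p(x_{t+1}',\|x_{t+1}'\|_2/2)$ are harmless extras not present in the paper.
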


\begin{proof}
First, the inequality
$\langle x_{t+1}', \theta_{t+1}' \rangle \leq \langle x^*, \theta^* \rangle$
holds because we assume $\|\theta^*\|_2=1$ and the optimization is over both
$x$ and $\theta$. Second, conceptually, we sample $x_{t+1}$ from the intersection
of the decision set and a ball.
Above, we provided a specific example of how to set the radius of such a ball and
sample $x_{t+1}$ and $\widetilde{\theta}_{t+1}$.
Now, we will show that selecting
$x_{t+1}$ and $\widetilde{\theta}_{t+1}$ and setting the radii as above
satisfies the inequality
$\langle x_{t+1}, \widetilde{\theta}_{t+1} \rangle \leq \langle x_{t+1}', \theta_{t+1}' \rangle$.

First, observe
\begin{align*}
\langle x_{t+1}, \widetilde{\theta}_{t+1} \rangle &= \left\langle x_{t+1}'+\frac{\|x_{t+1}'\|_2}{2} v, \theta_{t+1}'-\frac{x_{t+1}'}{\|x_{t+1}'\|_2} \right\rangle \\
&= \langle x_{t+1}', \theta_{t+1}' \rangle - \|x_{t+1}'\|_2 + \frac{\|x_{t+1}'\|_2}{2} \langle v, \theta_{t+1}' \rangle - \frac{1}{2} \langle v, x_{t+1}' \rangle~.\\
\end{align*}
We need to show that
\begin{align}
&\langle x_{t+1}', \theta_{t+1}' \rangle - \|x_{t+1}'\|_2 + \frac{\|x_{t+1}'\|_2}{2} \langle v, \theta_{t+1}' \rangle - \frac{1}{2} \langle v, x_{t+1}' \rangle \leq \langle x_{t+1}', \theta_{t+1}' \rangle \notag \\
&\Rightarrow \frac{\|x_{t+1}'\|_2}{2} \langle v, \theta_{t+1}' \rangle \leq \|x_{t+1}'\|_2 + \frac{1}{2} \langle v, x_{t+1}' \rangle \notag \\
&\Rightarrow -\frac{\|x_{t+1}'\|_2}{2} \leq \frac{1}{2} \langle v, x_{t+1}' \rangle \quad \text{(since $|\langle v, \theta_{t+1}'\rangle| \leq 1$)} \notag \\
&\Rightarrow 0 \leq \langle v, x_{t+1}' \rangle + \|x_{t+1}'\|_2~.\label{eq:alg_ball_eq:sec:appendix_alg}
\end{align}
From the Cauchy-Schwarz inequality we have
\begin{align*}
&|\langle v, x_{t+1}' \rangle| \leq \|v\|_2 \|x_{t+1}'\|_2 \\
\Rightarrow &\langle v, x_{t+1}' \rangle \geq -\|v\|_2 \|x_{t+1}'\|_2 \\
\Rightarrow &\langle v, x_{t+1}' \rangle \geq - \|x_{t+1}'\|_2 \quad \text{(since $v$ is a unit vector)} \\
\end{align*}
\end{proof}
Plugging this in (\ref{eq:alg_ball_eq:sec:appendix_alg}) completes the proof.
\qed

\section{Bound on Regularization Parameter $\lambda_t$} \label{sec:appendix_lambda_t}
We will prove the following main theorem.

\begin{theorepeat}{Theorem~\ref{theo:lambda_t:sec:analysis}}
For any $\gamma > 0$ and for absolute constant $L>0$, with probability at least
$1-L\exp(-\gamma^2)$, the following bound holds uniformly for all
$t=1,\dots,T$:
\begin{equation}
R^*\left(\frac{1}{t} X_t^\top (y_t-X_t\theta^*)\right) \leq 2LKB\frac{\left(w(\Omega_R)+\sqrt{\gamma^2+\log{T}}\frac{\phi(\Omega_R)}{2}\right)}{\sqrt{t}}~.
\end{equation}
\end{theorepeat}

\begin{proof} Proof of Theorem~\ref{theo:lambda_t:sec:analysis}.

Recall the regularization parameter $\lambda_t$ needs to satisfy the inequality
\begin{equation} \label{eq:lambda_t_inequality:sec:analysis}
\lambda_t \geq \rho R^*(\nabla \mathcal{L}(\theta^*, Z_t)) = \rho R^*\left(\frac{1}{t}X_t^\top(y_t-X_t\theta^*)\right)
\end{equation}
for $\rho > 1$.
Two issues of the right hand side are (1) the expression depends on the unknown
parameter $\theta^*$ and (2) the expression is a random variable since it depends
on $n$ vectors selected uniformly at random from the decision set $\mathcal{X}$
and a sequence of random
noise terms $\eta_1, \dots, \eta_t$. We can remove the dependence on $\theta^*$
by observing that $y_t-X_t\theta^*$ is precisely the $t$-dimensional noise vector
$\omega_t = [\eta_1 \dots \eta_t]^\top$. Therefore,
\begin{equation}
R^*\left(\frac{1}{t}X_t^\top(y_t-X_t\theta^*)\right) = R^*\left(\frac{1}{t}X_t^\top \omega_t\right)~. 
\end{equation}

By the definition of the dual norm
$R^* \left( \frac{1}{t} X_t^\top \omega_t \right) =
\sup_{R(u) \leq 1} \frac{1}{t} \left\langle X_t^\top \omega_t, u \right\rangle$.
The proof involves showing that
$\frac{1}{t} \langle X_t^\top \omega_t, u \rangle$ is a martingale difference
sequence (MDS) which concentrates as a sub-Gaussian random variable. Then, using
a generic chaining argument, we show the supremum of such a quantity also
concentrates as a sub-Gaussian random variable. 

We begin by observing that
\begin{equation}
\frac{1}{t} \langle X_t^\top \omega_t, u \rangle = \frac{1}{\sqrt{t}} \frac{1}{\sqrt{t}} \langle X_t^\top \omega_t, u \rangle~.
\end{equation}
\noindent We will save one of the $\frac{1}{\sqrt{t}}$ terms for later and now
proceed to show how $\frac{1}{\sqrt{t}}\left\langle X_t^\top \omega_t, u \right\rangle$
concentrates.

\subsection{$\frac{1}{\sqrt{t}}\left\langle X_t^\top \omega_t, u \right\rangle$ Concentrates as a Sub-Gaussian}
First, let
\begin{equation}
\frac{1}{\sqrt{t}}\left\langle X_t^\top \omega_t, u \right\rangle
= \|u\|_2 \frac{1}{\sqrt{t}}\left\langle X_t^\top \omega_t, \frac{u}{\|u\|_2} \right\rangle
= \|u\|_2 \frac{1}{\sqrt{t}}\left\langle X_t^\top \omega_t, q \right\rangle
\end{equation}
where $q = \frac{u}{\|u\|_2}$. We focus on the term 
$\frac{1}{\sqrt{t}}\left\langle X_t^\top \omega_t, q \right\rangle$.
We can construct a martingale difference sequence (MDS) by observing that
\begin{align}
\left\langle X_t^\top \omega_t, q \right\rangle = \left\langle \omega_t, X_t q \right\rangle = \sum_{\tau=1}^t \eta_\tau \langle x_\tau, q \rangle = \sum_{\tau=1}^t z_\tau
\end{align}
for $z_\tau = \eta_t \langle x_\tau, q \rangle$. Recall from
Assumption~\ref{asmp:noise:sec:asmps_defns} the filtration is defined as
$F_t = \{x_1, \dots, x_{t+1}, \eta_1, \dots, \eta_t\}$.
Each $z_\tau$ can be seen as a MDS since
\begin{equation}
\mathbb{E}[z_\tau | F_{\tau-1}]
= \mathbb{E}[\eta_\tau \langle x_\tau, q \rangle | F_{\tau-1}]
= \langle x_\tau, q \rangle \cdot \mathbb{E}[\eta_\tau | F_{\tau-1}] = 0
\end{equation}
because $x_\tau$ is $F_{\tau-1}$ measurable and $\eta_\tau$ is
$F_\tau$
measurable. Additionally, each $z_\tau$ follows a sub-Gaussian distribution
with parameter $KB$ because
$\subgnorm{\eta_\tau \langle x_\tau, q \rangle} \leq KB$
(Assumption~\ref{asmp:noise:sec:asmps_defns} and
Definition~\ref{defn:subg_norm_constant:sec:asmps_defns}).
Since each $z_\tau$ is a bounded MDS, we can use the Azuma-Hoeffding
inequality to show that the sum $\sum_{\tau=1}^t z_\tau$ concentrates as a
sub-Gaussian with parameter $KB$. For all $\gamma \geq 0$
\begin{align} \label{eq:mds_concentration_unit:sec:analysis}
P \left( \left| \sum_{\tau=1}^t z_\tau \right| \geq \gamma \right) &= P \left(\left| \langle X_t^\top \omega_t, q \rangle \right| \geq \gamma \right) \notag \\
&= P \left(\left| \left\langle X_t^\top \omega_t, \frac{u}{\|u\|_2} \right\rangle \right| \geq \gamma \right) \leq 2\exp \left( \frac{-\gamma^2}{2tK^2B^2} \right) \notag \\
&= P \left(\frac{1}{\sqrt{t}} \left| \left\langle X_t^\top \omega_t, \frac{u}{\|u\|_2} \right\rangle \right| \geq \zeta \right) \leq 2\exp \left( \frac{-\zeta^2}{2K^2B^2} \right)
\end{align}
where $\zeta = \gamma/\sqrt{t}$ which implies $\gamma = \sqrt{t} \zeta$.
From (\ref{eq:mds_concentration_unit:sec:analysis})
and (\ref{eq:subg_prob:sec:appendix}) in
Definition~\ref{defn:sub-gaussian:sec:appendix} (Section~\ref{sec:appendix_defns})
we can see that the term
$\frac{1}{\sqrt{t}}\left\langle X_t^\top \omega_t, \frac{u}{\|u\|_2} \right\rangle$ concentrations
as a sub-Gaussian with
$\subgnorm{\langle X_t^\top \omega_t, \frac{u}{\|u\|_2} \rangle} \leq KB$.

Next, we show that the term 
$\frac{1}{\sqrt{t}}\left\langle X_t^\top \omega_t, u \right\rangle$
also concentrations as a sub-Gaussian with
$\subgnorm{\langle X_t^\top \omega_t, u \rangle} \leq \|u\|_2KB$ using
(\ref{eq:mds_concentration_unit:sec:analysis}) as
\begin{align} \label{eq:mds_concentration:sec:analysis}
&P \left(\frac{1}{\sqrt{t}} \left| \left\langle X_t^\top \omega_t, \frac{u}{\|u\|_2} \right\rangle \right| \geq \zeta \right) \notag \\
=&P \left( \|u\|_2 \frac{1}{\sqrt{t}}\left| \left\langle X_t^\top \omega_t, \frac{u}{\|u\|_2} \right\rangle \right| \geq \|u\|_2\zeta \right) \notag \\
=&P \left( \frac{1}{\sqrt{t}}\left| \left\langle X_t^\top \omega_t, u \right\rangle \right| \geq \epsilon \right) \leq 2\exp \left( \frac{-\epsilon^2}{2\|u\|_2^2K^2B^2} \right)
\end{align}
where $\epsilon = \|u\|_2 \zeta$ which implies $\zeta = \epsilon / \|u\|_2$. 
The reason we went through showing the above is because the generic chaining
argument we will invoke to bound
$\sup_{R(u) \leq 1} \frac{1}{\sqrt{t}} \left\langle X_t^\top \omega_t, u \right\rangle$
requires that 
$\frac{1}{\sqrt{t}} \left\langle X_t^\top \omega_t, u \right\rangle$
is a sub-Gaussian random variable.

\subsection{Bound on $\sup_{R(u) \leq 1} \frac{1}{\sqrt{t}} \langle X_t^\top \omega_t, u \rangle$ via Generic Chaining}
We obtain a high-probability bound on
$\sup_{R(u) \leq 1} \frac{1}{\sqrt{t}} \langle X_t^\top \omega_t, u \rangle$
using a generic chaining argument from~\cite{tala05,tala14}. This involves
(1) showing that the absolute difference of two sub-Gaussian processes
concentrates as a sub-Gaussian, (2) showing the expectation over the supremum
of the absolute difference of two sub-Gaussian processes is upper bounded by
the sub-Gaussian width of a set from which the processes are indexed from, and
(3) showing the supremum of a sub-Gaussian process is concentrated around its
expectation and therefore, around the sub-Gaussian width with high-probability.

\medskip
\noindent \textbf{(1) Sub-Gaussian Process Concentration}
\medskip

\noindent First, we show that the absolute difference of two sub-Gaussian processes
concentrates as a sub-Gaussian.
Let
$Y_u = \frac{1}{\sqrt{t}} \langle X_t^\top \omega_t, u \rangle$ indexed by $u \in \Omega_R$ and 
$Y_v = \frac{1}{\sqrt{t}} \langle X_t^\top \omega_t, v \rangle$ indexed by $v \in \Omega_R$ be
two zero-mean (since they are both a MDS sum), random symmetric processes (since
$(Y_u)_{u \in \Omega_r}$ has the same law as $(-Y_u)_{u \in \Omega_R}$
via (\ref{eq:mds_concentration_unit:sec:analysis}) and $\omega_t$ is symmetric
and similarly for $Y_v$).
Then by construction
\begin{align*}
|Y_u-Y_v| &= \frac{1}{\sqrt{t}}\left|\left\langle X_t^\top \omega_t, u-v \right\rangle \right|~.
\end{align*}
Using the bound we established in (\ref{eq:mds_concentration:sec:analysis}), we
obtain the following bound on the absolute difference of two sub-Gaussian
random processes $Y_u$ and $Y_v$ as
\begin{align} \label{eq:process:sec:analysis}
P\left( \frac{1}{\sqrt{t}}\left|\left\langle X_t^\top \omega_t, u-v \right\rangle \right| \geq \epsilon \right) \leq 2 \exp\left( \frac{-\epsilon^2}{2\|u-v\|_2^2K^2B^2} \right)
\end{align}
which shows $|Y_u-Y_v|$ concentrates as a sub-Gaussian random variable with
$\subgnorm{Y_u-Y_v} = \|u-v\|_2KB$.

\medskip
\noindent \textbf{(2) Bound on $\mathbb{E} \left[ \sup_{R(u) \leq 1} \frac{1}{t} \langle X_t^\top \omega_t, u \rangle \right]$}
\medskip

\noindent In order to establish a high-probability bound on
$\sup_{R(u) \leq 1} \frac{1}{t} \langle X_t^\top \omega_t, u \rangle$
we need to prove a bound on
$\mathbb{E} \left[ \sup_{R(u) \leq 1} \frac{1}{t} \langle X_t^\top \omega_t, u \rangle \right]$.
To prove such a bound, we will apply a generic chaining argument for upper
bounds on such sub-Gaussian processes. For the generic chaining argument, we
will need the result in (\ref{eq:process:sec:analysis}) 
and the following lemma.


\begin{lemm} \textbf{(\cite{tala05}, Theorem 2.1.5)} \label{theo:process_exp_bound:sec:analysis}
Consider two processes $(Y_u)_{u \in \Omega_R}$ and $(X_u)_{u \in \Omega_R}$
indexed by the same set. Assume that the process $(X_u)_{u \in \Omega_R}$ is
Gaussian and that the process $(Y_u)_{u \in \Omega_R}$ satisfies the condition
\begin{equation} \label{eq:increment_condition:sec:analysis}
\forall \epsilon > 0, \forall u,v \in \Omega_R, P(|Y_u-Y_v| \geq \epsilon) \leq 2\exp\left(-\frac{\epsilon^2}{d(u,v)^2}\right)
\end{equation}
where $d(u,v)$ is a distance function which we assume is $d(u,v) = \|u-v\|_2$
for the set $\Omega_R$.
Then we have
\begin{equation}
\mathbb{E}\left[ \sup_{u,v \in \Omega_R} |Y_u-Y_v| \right] \leq L \mathbb{E} \left[ \sup_{u \in \Omega_R} X_u \right]
\end{equation}
where $L$ is an absolute constant.
\end{lemm}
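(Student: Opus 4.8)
The plan is to derive the inequality from the two central theorems of generic chaining, both of which are in \cite{tala05,tala14}. I would introduce Talagrand's functional
\[
\gamma_2(\Omega_R, d) \;=\; \inf_{\{\mathcal{A}_n\}}\; \sup_{u \in \Omega_R}\; \sum_{n \geq 0} 2^{n/2}\, \mathrm{diam}\!\big(\mathcal{A}_n(u)\big),
\]
where the infimum is over admissible sequences of partitions $\{\mathcal{A}_n\}_{n \ge 0}$ of $\Omega_R$ (with $|\mathcal{A}_n| \le 2^{2^n}$), $\mathcal{A}_n(u)$ is the cell containing $u$, and the diameter is with respect to $d(u,v) = \|u-v\|_2$. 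The argument is then three short steps.

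First I would apply the generic chaining upper bound. Hypothesis \eqref{eq:increment_condition:sec:analysis} states precisely that $(Y_u)_{u \in \Omega_R}$ has sub-Gaussian increments for the metric $d$; building a chain along a near-optimal admissible sequence and summing the sub-Gaussian tail estimates (the generic chaining theorem of \cite{tala05}) gives $\mathbb{E}\big[\sup_{u \in \Omega_R}(Y_u - Y_{u_0})\big] \le L_1\, \gamma_2(\Omega_R, d)$ for a fixed base point $u_0$. Using $\sup_{u,v}|Y_u - Y_v| \le 2\sup_u|Y_u - Y_{u_0}|$ together with the symmetry of the process noted above (that $(Y_u)$ has the same law as $(-Y_u)$), this upgrades to $\mathbb{E}\big[\sup_{u,v \in \Omega_R}|Y_u - Y_v|\big] \le L_1'\, \gamma_2(\Omega_R, d)$.

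Second I would invoke the majorizing-measure (Fernique--Talagrand) theorem in the reverse direction for the Gaussian process $(X_u)$. In the application $X_u = \langle g, u\rangle$ with $g$ standard Gaussian, so its intrinsic pseudometric satisfies $\big(\mathbb{E}|X_u - X_v|^2\big)^{1/2} = \|u-v\|_2 = d(u,v)$, and the majorizing-measure theorem then yields the matching lower bound $\gamma_2(\Omega_R, d) \le L_2\, \mathbb{E}\big[\sup_{u \in \Omega_R} X_u\big]$. Composing the two inequalities gives $\mathbb{E}\big[\sup_{u,v}|Y_u - Y_v|\big] \le L_1' L_2\, \mathbb{E}\big[\sup_u X_u\big]$, i.e.\ the claim with $L = L_1' L_2$.

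The one routine point to check is the normalization of the exponent: \eqref{eq:increment_condition:sec:analysis} is written with $\exp(-\epsilon^2/d(u,v)^2)$, whereas the cleanest statements of the chaining bound use $\exp(-\epsilon^2/(2d^2))$; rescaling $d$ by an absolute constant absorbs the difference into $L$. The genuinely hard ingredient is the second step -- the majorizing-measure theorem is the deep half of the Fernique--Talagrand theory -- but since this lemma is quoted verbatim as \cite{tala05}, Theorem~2.1.5, I would simply cite it rather than reprove it; a self-contained but lossier substitute would bound $\gamma_2$ by Dudley's entropy integral via Sudakov--Fernique comparison, which costs an extra $\sqrt{\log}$ factor and is why the sharp $\gamma_2$ version is preferred here.
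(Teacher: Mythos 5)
Your sketch is correct and is exactly the standard argument behind the cited result: the paper itself offers no proof of this lemma (it is quoted verbatim as Talagrand's Theorem 2.1.5), and Talagrand's proof is precisely your two steps --- the generic chaining upper bound $\mathbb{E}\sup_{u,v}|Y_u-Y_v| \leq L\,\gamma_2(\Omega_R,d)$ for processes with sub-Gaussian increments, composed with the majorizing measure theorem $\gamma_2(\Omega_R,d) \leq L\,\mathbb{E}\sup_u X_u$ for the Gaussian process with matching canonical metric. Your remarks on absorbing the exponent normalization into $L$ and on not needing to reprove the majorizing measure theorem are both appropriate; nothing is missing.
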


First, notice that $\mathbb{E} \left[ \sup_{u \in \Omega_R} X_u \right]$ is
exactly the Gaussian width $w(\Omega_R)$ of the set $\Omega_R$
as seen by the Definition~\ref{defn:subg_width:sec:appendix} (Section~\ref{sec:appendix_defns}).
For our purposes, we make one modification to the above lemma similar
to~\cite[Theorem 8]{bcfs15}. In (\ref{eq:process:sec:analysis}), we see that
$|Y_u-Y_v|$ concentrates as a sub-Gaussian with parameter $\|u-v\|_2KB$. To
bound the expectation of two sub-Gaussian processes, we scale the Gaussian
width by the additional term $KB$ to get

\begin{equation} \label{eq:exp_subg_bound:sec:analysis}
\mathbb{E}\left[ \sup_{u,v \in \Omega_R} |Y_u-Y_v| \right] \leq L KB \mathbb{E} \left[ \sup_{u \in \Omega_R} X_u \right] = LKB w(\Omega_R)~.
\end{equation}

This shows for two sub-Gaussian processes $Y_u$ and $Y_v$, the expectation of
the supremum of their absolute difference is upper bounded by the Gaussian width
scaled by the sub-Gaussian norm, i.e., the sub-Gaussian width.

\noindent The second result we need is the following lemma.
\begin{lemm} \textbf{(\cite{tala05}, Lemma 1.2.8)} \label{lemm:symmetric:sec:analysis}
If the process $(Y_u)_{u \in \Omega_R}$ is symmetric then
\begin{equation}
\mathbb{E}\left[ \sup_{u,v \in \Omega_R} |Y_u-Y_v| \right] = 2\mathbb{E}\left[ \sup_{u \in \Omega_R} Y_u \right]~.
\end{equation}
\end{lemm}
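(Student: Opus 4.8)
The plan is to prove this as a pure symmetrization identity, doing all the work at the level of a pointwise (almost sure) identity for the suprema and invoking symmetry only once at the very end. First I would note that for every realization of the process,
\[
\sup_{u,v \in \Omega_R} |Y_u - Y_v| \;=\; \sup_{u,v \in \Omega_R} (Y_u - Y_v) \;=\; \sup_{u \in \Omega_R} Y_u \;+\; \sup_{v \in \Omega_R} (-Y_v) \;=\; \sup_{u \in \Omega_R} Y_u - \inf_{v \in \Omega_R} Y_v~.
\]
The first equality holds because the index set of pairs is invariant under the swap $(u,v)\mapsto(v,u)$, which sends $Y_u-Y_v$ to its negative, so taking the supremum of $|Y_u-Y_v|$ is the same as taking the supremum of $Y_u-Y_v$; the remaining equalities just split the double supremum over the two independent indices and rewrite $-\inf_v Y_v = \sup_v(-Y_v)$.

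Next I would take expectations of the extreme ends of that chain and use linearity of expectation to get
\[
\mathbb{E}\!\left[\sup_{u,v \in \Omega_R} |Y_u - Y_v|\right] \;=\; \mathbb{E}\!\left[\sup_{u \in \Omega_R} Y_u\right] + \mathbb{E}\!\left[\sup_{v \in \Omega_R} (-Y_v)\right]~.
\]
This is the only place the hypothesis is used: since $(Y_u)_{u \in \Omega_R}$ and $(-Y_u)_{u \in \Omega_R}$ have the same law as stochastic processes, and $\omega \mapsto \sup_{u \in \Omega_R} Y_u(\omega)$ is a measurable functional of the sample path, the two processes yield the same value of $\mathbb{E}[\sup_u\,\cdot\,]$; hence $\mathbb{E}[\sup_v(-Y_v)] = \mathbb{E}[\sup_u Y_u]$, and the right-hand side collapses to $2\,\mathbb{E}[\sup_u Y_u]$, which is the claim. (In the present application $Y_u = \frac{1}{\sqrt{t}}\langle X_t^\top \omega_t, u\rangle$ is linear in $u$ and $\Omega_R$ is origin-symmetric, so the symmetry hypothesis is automatic, as already noted before (\ref{eq:process:sec:analysis}).)

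The genuine content here is not the algebra but the measure-theoretic bookkeeping around ``$\sup$'', which I expect to be the only thing worth being careful about. For an uncountable index set one should either invoke separability of the process --- which holds since $u\mapsto Y_u$ is a.s.\ continuous (indeed linear) --- to replace each supremum by a supremum over a fixed countable dense subset of $\Omega_R$, or read all suprema in the lattice/essential sense; the pointwise identity above then holds on a set of full measure. One also wants $\mathbb{E}[\sup_u Y_u]$ to be finite so that the expectation may be split, and this is exactly what the sub-Gaussian increment bound (\ref{eq:process:sec:analysis}) together with the chaining estimate (\ref{eq:exp_subg_bound:sec:analysis}) provide (it reduces to $w(\Omega_R)<\infty$); if one prefers not to assume finiteness, the displayed identities remain valid verbatim in $[0,\infty]$, so the lemma is unconditional.
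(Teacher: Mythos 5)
Your argument is correct: the paper does not prove this statement but simply cites it as Lemma 1.2.8 of Talagrand, and your pointwise decomposition $\sup_{u,v}|Y_u-Y_v|=\sup_u Y_u+\sup_v(-Y_v)$ followed by the equality in law of $(Y_u)$ and $(-Y_u)$ is exactly the standard proof of that cited result. Your care about measurability and integrability is fine (and in the present application $0\in\Omega_R$, so both suprema are nonnegative and the expectation splits without further ado).
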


\noindent We know from above that our processes
$Y_u = \frac{1}{\sqrt{t}} \langle X_t^\top \omega_t, u \rangle$ and
$Y_v = \frac{1}{\sqrt{t}} \langle X_t^\top \omega_t, v \rangle$ are symmetric.
As such we get the following lemma.

\begin{lemm} \label{lemm:exp:sec:analysis}
From (\ref{eq:process:sec:analysis}) we can see that the condition of
Lemma~\ref{theo:process_exp_bound:sec:analysis} is satisfied in the sub-Gaussian
case so using Lemma~\ref{theo:process_exp_bound:sec:analysis}
and Lemma~\ref{lemm:symmetric:sec:analysis} for some absolute constant $L$ we obtain
\begin{align}
&\mathbb{E}\left[ \sup_{u,v \in \Omega_R} |Y_u-Y_v| \right] 
= 2\mathbb{E}\left[ \sup_{u \in \Omega_R} |Y_u| \right] 
\leq 2L KB w(\Omega_R) \\ \notag
&\Rightarrow 2\mathbb{E}\left[ \sup_{u \in \Omega_R} \frac{1}{\sqrt{t}} \left|\left\langle X_t^\top \omega_t, u\right\rangle \right| \right]
\leq 2L KB w(\Omega_R)~.
\end{align}
\end{lemm}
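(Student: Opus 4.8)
The plan is to obtain Lemma~\ref{lemm:exp:sec:analysis} by gluing together the two ingredients already assembled above: the generic‑chaining comparison bound of Lemma~\ref{theo:process_exp_bound:sec:analysis}, in the sub‑Gaussian‑rescaled form recorded in \eqref{eq:exp_subg_bound:sec:analysis}, and the symmetrization identity of Lemma~\ref{lemm:symmetric:sec:analysis}. No new probabilistic estimate is needed; the content is entirely in checking that the hypotheses of those two lemmas are satisfied by the process $Y_u = \frac{1}{\sqrt t}\langle X_t^\top \omega_t, u\rangle$ indexed by $u\in\Omega_R$, and then combining them.

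First I would verify the increment condition \eqref{eq:increment_condition:sec:analysis}. From \eqref{eq:process:sec:analysis} we already have, for every $\epsilon>0$ and all $u,v\in\Omega_R$, the tail bound $P(|Y_u-Y_v|\ge\epsilon)\le 2\exp(-\epsilon^2/(2K^2B^2\|u-v\|_2^2))$, so setting $d(u,v)=\sqrt{2}\,KB\,\|u-v\|_2$ puts the pair $(Y_u)$ exactly into the setting of Lemma~\ref{theo:process_exp_bound:sec:analysis} with a distance that is a fixed scalar multiple of the Euclidean metric on $\Omega_R$. Pairing $(Y_u)$ with the canonical Gaussian process $X_u=\langle g,u\rangle$ for $g\sim N(0,\mathbb{I}_{p\times p})$, whose supremum expectation is by Definition~\ref{defn:subg_width:sec:appendix} precisely the Gaussian width $w(\Omega_R)$, Lemma~\ref{theo:process_exp_bound:sec:analysis} together with the scaling of the width noted before \eqref{eq:exp_subg_bound:sec:analysis} yields $\mathbb{E}[\sup_{u,v\in\Omega_R}|Y_u-Y_v|]\le L\,KB\,w(\Omega_R)$, which is \eqref{eq:exp_subg_bound:sec:analysis}; the factor $\sqrt2$ and any other numerical slack are absorbed into the absolute constant $L$.

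Next I would invoke symmetry. As established above, $Y_u$ is a zero‑mean sum of a martingale difference sequence and the noise vector $\omega_t$ is symmetric, so $(Y_u)_{u\in\Omega_R}$ has the same law as $(-Y_u)_{u\in\Omega_R}$ and Lemma~\ref{lemm:symmetric:sec:analysis} applies, giving $\mathbb{E}[\sup_{u,v\in\Omega_R}|Y_u-Y_v|]=2\,\mathbb{E}[\sup_{u\in\Omega_R}Y_u]$. Since $\Omega_R$ is the unit $R(\cdot)$‑ball and hence centrally symmetric, $-u$ ranges over $\Omega_R$ together with $u$, so $Y_{-u}=-Y_u$ gives $\sup_{u\in\Omega_R}|Y_u|=\max(\sup_u Y_u,\sup_u(-Y_u))=\sup_{u\in\Omega_R}Y_u$, whence $\mathbb{E}[\sup_{u,v}|Y_u-Y_v|]=2\,\mathbb{E}[\sup_{u\in\Omega_R}|Y_u|]$. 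Chaining this equality with the bound from the previous step, $2\,\mathbb{E}[\sup_{u\in\Omega_R}|Y_u|]=\mathbb{E}[\sup_{u,v}|Y_u-Y_v|]\le L\,KB\,w(\Omega_R)\le 2L\,KB\,w(\Omega_R)$, and finally unfolding the definition $Y_u=\frac{1}{\sqrt t}\langle X_t^\top\omega_t,u\rangle$ produces the two displayed lines of the lemma.

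There is no genuine obstacle here — this is a bookkeeping step. The only points that need a moment's care are (i) matching the $\frac{1}{2K^2B^2\|u-v\|_2^2}$ constant in the sub‑Gaussian tail \eqref{eq:process:sec:analysis} to the metric $d(u,v)$ appearing in \eqref{eq:increment_condition:sec:analysis}, so that the comparison bound comes out with the correct $KB$ prefactor, and (ii) noting that the central symmetry of $\Omega_R$ is what permits passing freely between $\sup_u Y_u$ and $\sup_u|Y_u|$, so that the discrepancy between $L$ and $2L$ on the right‑hand side is harmless.
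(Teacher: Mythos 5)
Your proposal is correct and follows essentially the same route as the paper: verify the increment condition from (\ref{eq:process:sec:analysis}) with metric proportional to $KB\|u-v\|_2$, apply the Talagrand comparison bound in its rescaled form (\ref{eq:exp_subg_bound:sec:analysis}) against the canonical Gaussian process on $\Omega_R$, and then use the symmetrization identity of Lemma~\ref{lemm:symmetric:sec:analysis}. Your added observation that the central symmetry of $\Omega_R$ is what lets one pass from $\sup_u Y_u$ to $\sup_u |Y_u|$ is a detail the paper glosses over, and it is handled correctly.
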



\medskip
\noindent \textbf{(3) Concentration of $\sup_{R(u) \leq 1} \frac{1}{\sqrt{t}} \left\langle X_t^\top \omega_t, u \right\rangle$}
\medskip

\noindent To complete the argument, we need the following lemma.
\begin{lemm} \textbf{(\cite{tala14}, Theorem 2.2.27)} \label{theo:2.2.27:sec:analysis}
If the process $(Y_u)$ satisfies (\ref{eq:increment_condition:sec:analysis}) or
similarly (\ref{eq:process:sec:analysis}) for the sub-Gaussian
case then for $\epsilon > 0$ one has
\begin{equation}
P \left( \sup_{u,v \in \Omega_R} |Y_u-Y_v| \geq L\big(\gamma_2(\Omega_R,d(u,v)) + \epsilon \Delta(\Omega_R)\big) \right) \leq L\exp(-\epsilon^2)~.
\end{equation}
\end{lemm}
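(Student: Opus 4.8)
The plan is to derive this from the generic chaining method; since the displayed bound is precisely Theorem~2.2.27 of~\cite{tala14}, one could simply cite it, but let me sketch the argument one would reproduce. Fix an admissible sequence $(T_n)_{n\ge 0}$ of finite subsets of $\Omega_R$ with $|T_0|=1$ and $|T_n|\le 2^{2^n}$, chosen so that $\sup_{u\in\Omega_R}\sum_{n\ge 0}2^{n/2}\,d(u,T_n)\le 2\,\gamma_2(\Omega_R,d)$, where $d(u,T_n)=\inf_{v\in T_n}d(u,v)$ and $d$ is the $\ell_2$ metric. For $u\in\Omega_R$ let $\pi_n(u)$ be a nearest point of $T_n$ to $u$; then $\pi_0(u)\equiv u_0$ for a single fixed $u_0$, and $Y_u-Y_{u_0}=\sum_{n\ge 1}\bigl(Y_{\pi_n(u)}-Y_{\pi_{n-1}(u)}\bigr)$.

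The core step is a union bound, level by level, over the chaining links. At level $n$ there are at most $|T_n|\,|T_{n-1}|\le 2^{2^{n+1}}$ distinct pairs $(\pi_n(u),\pi_{n-1}(u))$, and the increment hypothesis~(\ref{eq:process:sec:analysis}) controls each such increment as a sub-Gaussian with scale $d(\pi_n(u),\pi_{n-1}(u))\le d(u,\pi_n(u))+d(u,\pi_{n-1}(u))$. Setting the level-$n$ deviation threshold proportional to $(2^{n/2}+\epsilon)\,d(\pi_n(u),\pi_{n-1}(u))$ and applying the sub-Gaussian tail, the probability that some level-$n$ link exceeds its threshold is at most $2\cdot 2^{2^{n+1}}\exp\!\bigl(-c(2^{n/2}+\epsilon)^2\bigr)\le 2\exp(-c'2^{n})\exp(-c'\epsilon^2)$ for suitable absolute constants $c,c'$; summing the geometric series over $n\ge 1$ bounds the total failure probability by $L\exp(-\epsilon^2)$. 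On the complementary event, adding the thresholds along every chain gives $\sup_u|Y_u-Y_{u_0}|\le L\bigl(\sum_{n\ge 1}2^{n/2}\sup_u d(u,T_n)+\epsilon\sum_{n\ge 0}\sup_u d(u,T_n)\bigr)$, where the first sum is at most $L\,\gamma_2(\Omega_R,d)$ by the choice of $(T_n)$ and the second is at most $L\,\Delta(\Omega_R)$ (the $n=0$ term is $d(u,u_0)\le\Delta(\Omega_R)$ and later terms are geometrically summable against it). The triangle inequality then gives $\sup_{u,v}|Y_u-Y_v|\le 2\sup_u|Y_u-Y_{u_0}|$, which yields the claimed bound after absorbing constants.

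The main obstacle is the balance inside that union bound: the nets grow doubly exponentially, $|T_n|\le 2^{2^n}$, so the per-level threshold has to scale like $2^{n/2}$ in order to dominate the entropy term $\log|T_n|\sim 2^n$ inside the sub-Gaussian tail, and it is exactly this $2^{n/2}$ weighting that produces the $\gamma_2$ functional rather than a cruder Dudley entropy integral. Carrying the additive $\epsilon$ through every level — so that the same $\exp(-\epsilon^2)$ factor emerges at each level and the geometric sum does not degrade it — and then checking that $\sum_n\sup_u d(u,T_n)$ is governed by the diameter $\Delta(\Omega_R)$, is the only other point requiring care; everything else is bookkeeping.
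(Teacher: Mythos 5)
The paper itself gives no proof of this lemma: it is imported verbatim from Talagrand's book (\cite{tala14}, Theorem 2.2.27), so simply citing it — as you note one could — is exactly what the paper does. The issue is with the chaining sketch you offer in place of the citation: as written it does not establish the stated bound. The flaw is in the step where you distribute the deviation $\epsilon$ over every level of the chain and then claim that $\sum_{n\ge 0}\sup_u d(u,T_n)\le L\,\Delta(\Omega_R)$ because ``later terms are geometrically summable against'' the $n=0$ term. There is no such geometric decay in general: an admissible sequence only guarantees decay in the weighted sense $\sum_n 2^{n/2} d(u,T_n)\le 2\gamma_2(\Omega_R,d)$, so the unweighted sum is controlled by $\gamma_2(\Omega_R,d)$, not by the diameter. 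Carrying your thresholds through therefore yields $P\big(\sup_{u,v}|Y_u-Y_v|\ge L(1+\epsilon)\,\gamma_2(\Omega_R,d)\big)\le L\exp(-\epsilon^2)$, i.e.\ a deviation term $\epsilon\,\gamma_2$ rather than $\epsilon\,\Delta(\Omega_R)$. Since $\Delta(\Omega_R)$ can be much smaller than $\gamma_2(\Omega_R,d)$ (one only has $\Delta\le 2\gamma_2$), this is a strictly weaker statement and does not imply the lemma.

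The standard proof gets the diameter by \emph{not} spreading $\epsilon$ over the chain. One picks a cutoff level $n_1$ with $2^{n_1}\asymp\epsilon^2$, jumps in a single step from the root $u_0$ to $\pi_{n_1}(u)$, and bounds that one increment by a threshold of order $\epsilon\,\Delta(\Omega_R)$: the union bound is over at most $|T_{n_1}|\le 2^{2^{n_1}}\le \exp(c\,\epsilon^2)$ points, which is absorbed by the sub-Gaussian tail $\exp(-L\epsilon^2)$ for $L$ large, and every such distance is at most $\Delta(\Omega_R)$. For levels $n>n_1$ one chains with pure $2^{n/2}d(\cdot,\cdot)$ thresholds, whose total failure probability is $\sum_{n>n_1}\exp(-2^n)\le L\exp(-\epsilon^2)$ by the choice of $n_1$, and whose deterministic contribution is $L\gamma_2(\Omega_R,d)$. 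That cutoff-and-single-jump device is the missing idea; with it (or simply with the citation, which is all the paper relies on) the statement follows.
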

Note, the function $\Delta(\Omega_R)=\sup_{u,v \in \Omega_R} d(u,v)$ is the
diameter of the set $\Omega_R$. For our setting, $d(u,v) = \|u-v\|_2$ so we
replace $\Delta(\Omega_R)$ with $\phi(\Omega_R)$ as detailed in
Definition~\ref{defn:diameter:sec:asmps_defns} in Section~\ref{ssec:assumptions}.
The specifics of the $\gamma_2(\cdot,\cdot)$ function are not necessary for this
work since we can bound it and simplify Lemma~\ref{theo:2.2.27:sec:analysis}
by using the following lemma.
\begin{lemm} \textbf{(\cite{tala14}, Theorem 2.4.1)} \label{theo:majorizing_measure:sec:analysis}
For some universal constant $L$ we have
\begin{equation}
\frac{1}{L} \gamma_2(\Omega_R, d(u,v)) \leq \mathbb{E}\left[\sup_{u \in \Omega_R} Y_u \right]\leq L \gamma_2(\Omega_R, d(u,v))~.
\end{equation}
\end{lemm}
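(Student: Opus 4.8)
The statement is Talagrand's majorizing measure theorem for the canonical Gaussian process $Y_u = \langle g, u\rangle$ on the compact set $T = \Omega_R$, whose canonical metric is exactly $d(u,v) = (\mathbb{E}|Y_u - Y_v|^2)^{1/2} = \|u-v\|_2$. The plan is to prove the two inequalities separately after recalling that $\gamma_2(T,d) = \inf \sup_{u \in T} \sum_{n \ge 0} 2^{n/2}\phi(A_n(u))$, where the infimum runs over \emph{admissible} sequences of nested partitions $(\mathcal{A}_n)_{n\ge0}$ with $|\mathcal{A}_0| = 1$ and $|\mathcal{A}_n| \le 2^{2^n}$, $A_n(u)$ denotes the cell of $\mathcal{A}_n$ containing $u$, and $\phi(\cdot)$ is the diameter as in Definition~\ref{defn:diameter:sec:asmps_defns}. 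The upper bound $\mathbb{E}\sup_u Y_u \le L\gamma_2$ is the generic chaining estimate and is routine; the lower bound $\gamma_2 \le L\,\mathbb{E}\sup_u Y_u$ is the genuine content, and that is where essentially all the difficulty lies.

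For the upper bound I would fix an admissible sequence attaining $\gamma_2(T,d)$ up to a factor $2$, pick a representative $\pi_n(u) \in A_n(u)$ for each cell, and chain along $Y_u - Y_{\pi_0(u)} = \sum_{n\ge1}(Y_{\pi_n(u)} - Y_{\pi_{n-1}(u)})$. Each increment is subgaussian with scale $d(\pi_n(u),\pi_{n-1}(u)) \le \phi(A_{n-1}(u))$; taking a union bound over the at most $|\mathcal{A}_n|\,|\mathcal{A}_{n-1}| \le 2^{2^{n+1}}$ distinct links at level $n$ against the deviation $x\,2^{n/2}\phi(A_{n-1}(u))$ makes the subgaussian tail $2\exp(-x^2 2^n/2)$ dominate the cardinality once $x$ exceeds an absolute constant. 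Integrating the resulting tail gives $\mathbb{E}\sup_u(Y_u - Y_{\pi_0(u)}) \le L\sup_u \sum_n 2^{n/2}\phi(A_{n-1}(u)) \le L\,\gamma_2(T,d)$, and since $\mathbb{E}Y_{\pi_0(u)} = 0$ this yields the claim. I expect to dispatch this direction quickly.

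The lower bound is the hard half, and I would prove it by constructing, \emph{from the process itself}, an admissible sequence whose cost is controlled by $F(T) := \mathbb{E}\sup_{u\in T}Y_u$. The two quantitative inputs are Sudakov minoration --- if $A$ contains $m$ points that are pairwise $a$-separated then $F(A)\ge c\,a\sqrt{\log m}$, proved from Slepian/Gordon comparison --- and Gaussian concentration of the supremum about its mean (Borell--TIS), which lets well-separated pieces contribute almost independently. The crux is a \emph{growth condition} for $F$: if $A$ contains subsets $H_1,\dots,H_m$ of diameter $\lesssim a2^{n/2}$ that are mutually separated at scale $a2^{n/2}$, then
\[
F\Big(\textstyle\bigcup_i H_i\Big) \;\ge\; c\,a\,2^{n/2}\sqrt{\log m} \;+\; \min_i F(H_i) \;-\; (\text{controlled error}),
\]
i.e. separation genuinely adds a $\sqrt{\log m}$ term on top of the worst sub-piece. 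Establishing this inequality with constants sharp enough for the recursion to close is, in my estimation, the main obstacle of the entire theorem.

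Granting the growth condition, I would run Talagrand's recursive partitioning: at each level split every surviving cell by first extracting a maximal $a2^{n/2}$-separated family of sub-balls; the growth condition forces all but a bounded (exponential in $2^n$, hence admissible) number of children to have $F$ smaller than the parent by a definite amount comparable to $2^{n/2}\phi(A_n(u))$, so the chaining cost along any fixed $u$ can be charged against the decrease of $F$ and telescopes to $\sum_n 2^{n/2}\phi(A_n(u)) \le L\,F(T)$. The delicate bookkeeping is to keep $|\mathcal{A}_n|\le 2^{2^n}$ throughout while guaranteeing the per-level decrease, and Sudakov minoration is exactly what bounds the branching. Finally, I note that for the present paper only the upper bound is on the critical path --- it is what converts the generic chaining estimate for $\sup_{R(u)\le1}\frac{1}{\sqrt t}\langle X_t^\top\omega_t,u\rangle$ into the clean $w(\Omega_R)$ dependence used in Theorem~\ref{theo:lambda_t:sec:analysis} --- so the deep lower half, while required for the stated equivalence, is not needed for the downstream regret analysis.
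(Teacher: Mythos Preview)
The paper does not prove this lemma at all; it is simply quoted as Theorem 2.4.1 from Talagrand's monograph and invoked as a black box in the proof of Lemma~\ref{theo:prob_sup:sec:analysis} to replace $\gamma_2(\Omega_R,d)$ by the expected supremum. Your proposal, by contrast, sketches the actual proof of the majorizing measure theorem, and the sketch follows Talagrand's own argument closely: the upper bound via chaining along an admissible sequence with a union bound over the $2^{2^{n+1}}$ links at each level, and the lower bound via Sudakov minoration combined with Borell--TIS concentration to establish the growth condition, followed by the recursive partitioning that charges the chaining cost against the decrease of $F$. Your identification of the growth inequality as the place where the constants have to be tuned for the recursion to close is accurate.

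One clarification is worth recording. In the paper's surrounding text $Y_u$ denotes the \emph{sub-Gaussian} process $\frac{1}{\sqrt{t}}\langle X_t^\top\omega_t,u\rangle$, whereas Talagrand's Theorem 2.4.1 is stated for Gaussian processes; the two-sided equivalence with $\gamma_2$ genuinely requires Gaussianity for the lower bound. You handled this correctly by interpreting $Y_u$ as the canonical Gaussian process $\langle g,u\rangle$, which is what the citation intends. Your closing observation is also correct and worth emphasizing: the only direction the paper actually uses in passing from line~2 to line~3 of the proof of Lemma~\ref{theo:prob_sup:sec:analysis} is the upper bound $\mathbb{E}[\sup_u Y_u]\le L\gamma_2$, which is generic chaining and holds already under the sub-Gaussian increment condition~(\ref{eq:process:sec:analysis}); the hard Sudakov/growth-condition half is not needed anywhere in the downstream regret analysis.
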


\noindent Combining Lemma~\ref{theo:2.2.27:sec:analysis} with
Lemma~\ref{theo:majorizing_measure:sec:analysis}, using
Lemma~\ref{lemm:exp:sec:analysis},
and our definitions of $Y_u$ and $Y_v$
for any $\epsilon > 0$ we get
\begin{lemm} \label{theo:prob_sup:sec:analysis}
\begin{align}
P \left( \sup_{R(u) \leq 1} \frac{1}{\sqrt{t}} \left| \left\langle X_t^\top \omega_t, u \right\rangle \right| \geq 2LKBw(\Omega_R)+\epsilon \right) \leq L\exp\left(-\left(\frac{\epsilon}{LKB\phi(\Omega_R)}\right)^2\right)~.
\end{align}
\end{lemm}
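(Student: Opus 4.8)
The plan is to assemble the pieces already in place: the sub-Gaussian increment bound \eqref{eq:process:sec:analysis}, Talagrand's generic-chaining tail inequality (Lemma~\ref{theo:2.2.27:sec:analysis}), and the majorizing-measure comparison (Lemma~\ref{theo:majorizing_measure:sec:analysis}) together with the expectation bound of Lemma~\ref{lemm:exp:sec:analysis}; then rescale the free parameter so the constants land in the stated form.

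First I would reduce $\sup_{R(u)\le1}|Y_u|$ to the increment supremum. Since $R(\mathbf 0)=0\le1$ we have $\mathbf 0\in\Omega_R$ and $Y_{\mathbf 0}=\frac{1}{\sqrt t}\langle X_t^\top\omega_t,\mathbf 0\rangle=0$, so $\sup_{u\in\Omega_R}|Y_u|=\sup_{u\in\Omega_R}|Y_u-Y_{\mathbf 0}|\le\sup_{u,v\in\Omega_R}|Y_u-Y_v|$, and it therefore suffices to control the tail of the right-hand side.

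Next I would apply Lemma~\ref{theo:2.2.27:sec:analysis} to $(Y_u)_{u\in\Omega_R}$. By \eqref{eq:process:sec:analysis} this process satisfies the increment hypothesis \eqref{eq:increment_condition:sec:analysis} with canonical metric $d(u,v)=cKB\|u-v\|_2$ (with $c=\sqrt2$, read off from \eqref{eq:process:sec:analysis}), so for any $\epsilon'>0$,
\[ P\Big(\sup_{u,v\in\Omega_R}|Y_u-Y_v|\ge L\big(\gamma_2(\Omega_R,d)+\epsilon'\,\Delta(\Omega_R)\big)\Big)\le L\exp(-\epsilon'^2). \]
It then remains to replace the two deterministic quantities by the claimed expressions. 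The diameter with respect to $d$ is $\Delta(\Omega_R)=cKB\,\phi(\Omega_R)$ by Definition~\ref{defn:diameter:sec:asmps_defns}, and for the $\gamma_2$ functional I would invoke the majorizing-measure upper bound $\gamma_2(\Omega_R,d)\le L\,\mathbb E[\sup_{u\in\Omega_R}Y_u]$ from Lemma~\ref{theo:majorizing_measure:sec:analysis} and then Lemma~\ref{lemm:exp:sec:analysis} (which already packages the Gaussian comparison of Lemma~\ref{theo:process_exp_bound:sec:analysis} with the symmetrization of Lemma~\ref{lemm:symmetric:sec:analysis}) to obtain $\mathbb E[\sup_{u\in\Omega_R}Y_u]\le LKB\,w(\Omega_R)$, hence $\gamma_2(\Omega_R,d)\le LKB\,w(\Omega_R)$.

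Finally I would substitute these bounds, consolidate every occurrence of the absolute constant into a single $L$, and reparametrize by $\epsilon=L\epsilon'KB\phi(\Omega_R)$, i.e.\ $\epsilon'=\epsilon/(LKB\phi(\Omega_R))$; combined with the first-step reduction this yields exactly
\[ P\Big(\sup_{R(u)\le1}\tfrac{1}{\sqrt t}\big|\langle X_t^\top\omega_t,u\rangle\big|\ge 2LKB\,w(\Omega_R)+\epsilon\Big)\le L\exp\!\Big(-\big(\tfrac{\epsilon}{LKB\phi(\Omega_R)}\big)^2\Big), \]
the coefficient $2$ in front of $w(\Omega_R)$ being an artifact of the symmetrization step. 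I expect the only delicate point to be the bookkeeping of constants: checking that the $KB$ scaling of the canonical metric $d$ propagates homogeneously through $\gamma_2$ and $\Delta$, and collapsing the several universal constants $L$ so that the final tail matches the displayed form; all the probabilistic content is contained in the already-cited chaining results.
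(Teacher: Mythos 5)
Your proposal is correct and takes essentially the same route as the paper's proof: both rest on Lemma~\ref{theo:2.2.27:sec:analysis}, bound $\gamma_2(\Omega_R,d)$ via Lemma~\ref{theo:majorizing_measure:sec:analysis} combined with the expectation bound of Lemma~\ref{lemm:exp:sec:analysis}, identify the diameter of $\Omega_R$ under the scaled metric with $KB\,\phi(\Omega_R)$, and reparametrize the free chaining parameter as $\epsilon/(LKB\phi(\Omega_R))$, handling absolute constants at the same level of looseness as the paper. The only cosmetic difference is that you reduce $\sup_{u}|Y_u|$ to the increment supremum via $Y_{\mathbf{0}}=0$ and $\mathbf{0}\in\Omega_R$, whereas the paper uses the symmetry identity of Lemma~\ref{lemm:symmetric:sec:analysis}; either step is valid.
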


\begin{proof}
Proof of Lemma~\ref{theo:prob_sup:sec:analysis}.
        \begin{align*}
        &P \left( \sup_{u,v \in \Omega_R} |Y_u-Y_v| \geq L\big(\gamma_2(\Omega_R,d(u,v)) + \zeta \Delta(\Omega_R)\big) \right) \notag \\ 
        &=P \left( \sup_{u,v \in \Omega_R} |Y_u-Y_v| \geq L\gamma_2(\Omega_R,d(u,v)) + \epsilon \right) \notag \\ 
        &\leq P \left( \sup_{u,v \in \Omega_R} |Y_u-Y_v| \geq \mathbb{E} \left[ \sup_{u,v \in \Omega_R} |Y_u - Y_v| \right] + \epsilon \right) \notag \\ 
        &= P \left( \sup_{u \in \Omega_R} |Y_u| \geq 2\mathbb{E} \left[ \sup_{u \in \Omega_R} |Y_u| \right] + \epsilon \right) \notag \\ 
        &= P \left( \sup_{R(u) \leq 1} \frac{1}{\sqrt{t}} \left| \left\langle X_t^\top \omega_t, u \right\rangle \right| \geq 2LKBw(\Omega_R) + \epsilon \right) \leq L\exp\left(-\left(\frac{\epsilon}{LKB\phi(\Omega_R)}\right)^2\right)~.
        \end{align*}
where the first line comes from the left-hand side of Lemma~\ref{theo:2.2.27:sec:analysis},
the second line comes from the fact that $\Delta(\Omega_R) \leq \gamma_2(\Omega_R, d(u,v))$ from~\cite{tala14} Definition 2.2.19,
the third line comes from Lemma~\ref{theo:majorizing_measure:sec:analysis},
the fourth line comes from Lemma~\ref{lemm:symmetric:sec:analysis},
the fifth line comes from Lemma~\ref{lemm:exp:sec:analysis}, 
and the last line follows from our construction of the process $Y_u$ and the
right-hand side of Lemma~\ref{theo:2.2.27:sec:analysis}.
\end{proof}
\qed


Dividing the other $\sqrt{t}$ through and setting $\epsilon / \sqrt{t} = \alpha 2LKBw(\Omega_R)/\sqrt{t}$
we get
\begin{lemm} \label{theo:sup_gauss_bound:sec:analysis}
\begin{equation}
P \left( R^*\left(\frac{1}{t} X_t^\top \omega_t\right) \geq 2LKB(1+\alpha)\frac{w(\Omega_R)}{\sqrt{t}} \right) \leq L\exp \left( -\left(\frac{2\alpha w(\Omega_R)}{\phi(\Omega_R)}\right)^2 \right)~. 
\end{equation}
\end{lemm}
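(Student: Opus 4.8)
The plan is to obtain Lemma~\ref{theo:sup_gauss_bound:sec:analysis} from Lemma~\ref{theo:prob_sup:sec:analysis} by a simple rescaling and change of variables; no new probabilistic argument is required, since the concentration of the supremum has already been established. The first step is to identify the dual norm with the symmetrized supremum that appears in Lemma~\ref{theo:prob_sup:sec:analysis}:
\[
R^*\!\left(\tfrac{1}{t} X_t^\top \omega_t\right) \;=\; \sup_{R(u)\le 1}\tfrac{1}{t}\langle X_t^\top\omega_t, u\rangle \;=\; \sup_{R(u)\le 1}\tfrac{1}{t}\bigl|\langle X_t^\top\omega_t, u\rangle\bigr| \;=\; \tfrac{1}{\sqrt t}\,\sup_{R(u)\le 1}\tfrac{1}{\sqrt t}\bigl|\langle X_t^\top\omega_t, u\rangle\bigr|,
\]
where the second equality holds because $\Omega_R = \{u : R(u)\le 1\}$ is symmetric, so inserting the absolute value does not change the supremum, and the third equality only splits $1/t = (1/\sqrt t)(1/\sqrt t)$, leaving the inner quantity in precisely the form controlled by Lemma~\ref{theo:prob_sup:sec:analysis}.

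Next I would divide the threshold in Lemma~\ref{theo:prob_sup:sec:analysis} through by $\sqrt t$, which (using the identity above) gives, for every $\epsilon > 0$,
\[
P\!\left(R^*\!\left(\tfrac1t X_t^\top\omega_t\right)\;\ge\; \frac{2LKB\,w(\Omega_R)+\epsilon}{\sqrt t}\right)\;\le\; L\exp\!\left(-\Bigl(\tfrac{\epsilon}{LKB\,\phi(\Omega_R)}\Bigr)^{2}\right).
\]
This step is legitimate for each positive $\epsilon$ because Lemma~\ref{theo:prob_sup:sec:analysis} was stated for arbitrary positive $\epsilon$. Finally, for an arbitrary $\alpha > 0$ I would set $\epsilon = 2\alpha LKB\, w(\Omega_R)$. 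Then $2LKB\,w(\Omega_R) + \epsilon = 2LKB(1+\alpha)w(\Omega_R)$, so the threshold becomes $2LKB(1+\alpha)w(\Omega_R)/\sqrt t$, and the exponent becomes $\bigl(\tfrac{2\alpha LKB\,w(\Omega_R)}{LKB\,\phi(\Omega_R)}\bigr)^{2} = \bigl(\tfrac{2\alpha\,w(\Omega_R)}{\phi(\Omega_R)}\bigr)^{2}$, with the $LKB$ factors cancelling. This is exactly the stated bound.

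I do not expect any real obstacle here: the entire analytical weight — the martingale/sub-Gaussian concentration of $\tfrac{1}{\sqrt t}\langle X_t^\top\omega_t, u\rangle$ and the generic-chaining bound on its supremum — is already carried in Lemmas~\ref{lemm:exp:sec:analysis} and~\ref{theo:prob_sup:sec:analysis}. The only points deserving a line of care are the symmetrization identity in the first display (which relies on $\Omega_R$ being a norm ball) and verifying that the constants cancel cleanly in the exponent after the substitution, both of which are routine.
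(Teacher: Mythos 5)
Your proposal is correct and follows essentially the same route as the paper: start from Lemma~\ref{theo:prob_sup:sec:analysis}, rescale by $1/\sqrt{t}$, and substitute $\epsilon = 2\alpha LKB\,w(\Omega_R)$ so the $LKB$ factors cancel in the exponent. Your explicit remark that $R^*\bigl(\tfrac1t X_t^\top\omega_t\bigr)$ equals the supremum of the absolute inner product over the symmetric set $\Omega_R$ is a detail the paper leaves implicit, but the argument is otherwise identical.
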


\begin{proof}
Proof of Lemma~\ref{theo:sup_gauss_bound:sec:analysis}
\begin{align*}
P \left( R^*\left(\frac{1}{\sqrt{t}} X_t^\top \omega_t\right) \geq 2LKB w(\Omega_R) + \epsilon \right) &\leq L\exp \left( -\left(\frac{\epsilon}{LKB\phi(\Omega_R)}\right)^2 \right) \\
P \left( R^*\left(\frac{1}{t} X_t^\top \omega_t\right) \geq 2LKB \frac{w(\Omega_R)}{\sqrt{t}} + \gamma \right) &\leq L\exp \left( -\left(\frac{\sqrt{t}\gamma}{LKB \phi(\Omega_R)}\right)^2 \right) \\
P \left( R^*\left(\frac{1}{t} X_t^\top \omega_t\right) \geq 2LKB \frac{w(\Omega_R)}{\sqrt{t}} + 2LKB\alpha\frac{w(\Omega_R)}{\sqrt{t}} \right) &\leq L\exp \left( -\left(\frac{\sqrt{t}\alpha 2LKB \frac{w(\Omega_R)}{\sqrt{t}}}{LKB \phi(\Omega_R)}\right)^2 \right) \\
P \left( R^*\left(\frac{1}{t} X_t^\top \omega_t\right) \geq 2LKB(1+\alpha)\frac{w(\Omega_R)}{\sqrt{t}} \right) &\leq L\exp \left( -\left(\frac{2\alpha w(\Omega_R)}{\phi(\Omega_R)}\right)^2 \right)~. 
\end{align*}
where the first inequality is from Lemma~\ref{theo:prob_sup:sec:analysis},
the second inequality is from multiplying both sides by $\frac{1}{\sqrt{t}}$
and setting $\gamma = \frac{\epsilon}{\sqrt{t}}$,
and the third inequality is from setting $\gamma = \alpha 2LKB\frac{w(\Omega_R)}{\sqrt{t}}$.
\end{proof}
\qed

Lemma~\ref{theo:sup_gauss_bound:sec:analysis} gives a high-probability bound
on the value of $R^*\left(X_t^\top \omega_t\right)$ for round $t$ but to complete
the proof of Theorem~\ref{theo:lambda_t:sec:analysis} we need a bound which
holds simultaneously for all rounds $T$ with high-probability. 
%
%
%
To obtain such a bound, we can set
$\alpha^2 = (\gamma^2+\log{T}) \left(\frac{\phi(\Omega_R)}{2w(\Omega_R)}\right)^2$
and apply a union bound for all $t$
\begin{align*}
&\bigcup_{t=1}^T P \left( R^*\left(\frac{1}{t} X_t^\top \omega_t\right) \geq 2LKB\left(1+\sqrt{\gamma^2+\log{T}} \left(\frac{\phi(\Omega_R)}{2w(\Omega_R)}\right)\right)\frac{w(\Omega_R)}{\sqrt{t}} \right) \\
        &\leq \sum_{t=1}^T L\exp \left( -(\gamma^2+\log{T}) \left(\frac{\phi(\Omega_R)}{2w(\Omega_R)}\right)^2 \left(\frac{2 w(\Omega_R)}{\phi(\Omega_R)}\right)^2 \right) \\
        &= L\sum_{t=1}^T \exp \left( -\gamma^2 - \log{T} \right) \\
&= L\sum_{t=1}^T \exp \left( -\gamma^2 \right) \times \frac{1}{T} \\
&= L\exp \left( -\gamma^2 \right)~.
\end{align*}
Rearranging the terms ends the proof of Theorem~\ref{theo:lambda_t:sec:analysis}.
\end{proof}
\qed
%

\section{Restricted Eigenvalue (RE) Condition} \label{sec:appendix_re}
We will prove the following theorem.

\begin{theorepeat}{Theorem~\ref{theo:re:sec:analysis}}
For constants $c_0,c_1,c_2,c_3,c_4,c_5,c_6,c_7>0$ and any $\epsilon > 0$, with probability at least
$1-c_0\exp(-w^2(A_{\max})\epsilon^2)$ the following will hold
uniformly for all rounds $t=1,\dots,T$:
\begin{equation} \label{eq:re_bound:sec:re}
\inf_{u \in E_{r,t}} \frac{1}{t} \|X_t u\|_2^2 
\geq c_1\left(1-c_2\frac{w(A_{\max})\sqrt{c_3\epsilon^2 + c_4\log{T}}}{\sqrt{t}}\right)
- c_5\frac{w(A_{\max})\left(2+\sqrt{c_6\epsilon^2+c_7\log{T}}\right)}{\sqrt{t}}~. \notag
\end{equation}
After $t \geq c'w^2(A_{\max})(\epsilon^2+\log{T})$, the quantity will be positive for some constant $c'$.
\end{theorepeat}

\begin{proof} Proof of Theorem~\ref{theo:re:sec:analysis}.

For a design matrix $X_t$ with $t$ rows, a response vector $y_t$, and parameter
$\kappa$ the RE condition is
\begin{align} \label{eq:re:sec:analysis}
&\frac{1}{t} \|y_t-X_t\hat{\theta}_t\|_2^2 - \frac{1}{t}\|y_t-X_t\theta^*\|_2^2 - \frac{1}{t} \left\langle X_t^\top(y_t-X_t\theta^*),\hat{\theta}_t-\theta^* \right\rangle \geq \kappa \|\hat{\theta}_t-\theta^*\|_2^2 \notag \\
&\Rightarrow \frac{1}{t}\|X_t(\hat{\theta}_t-\theta^*)\|_2^2 \geq \kappa \|\hat{\theta}_t-\theta^*\|_2^2~.
\end{align}

We need the above equation to be satisfied $\forall \hat{\theta}_t-\theta^* \in E_{r,t}$ for
Theorem~\ref{theo:ellipsoid_bound:sec:analysis}
to hold. Note, the restricted error set has dependence on $t$
because at each round we compute a new estimate $\hat{\theta}_t$. Refer to
(\ref{eq:E_r:sec:est}) in Section~\ref{sec:est} to review the definition and
see why we need to make this distinction. To that end, we
consider the following problem
\begin{equation} \label{eq:re_cone:sec:analysis}
\inf_{\hat{\theta}_t-\theta^* \in \text{cone}(E_{r,t})} \frac{1}{t} \|X_t (\hat{\theta}_t-\theta^*)\|_2^2 \geq \kappa \|\hat{\theta}_t-\theta^*\|_2^2~.
\end{equation}
Clearly if (\ref{eq:re_cone:sec:analysis}) is true then it is true for all
$\hat{\theta}_t-\theta^* \in E_{r,t}$ since $E_{r,t} \subseteq \text{cone}(E_{r,t})$.
Additionally, since only the direction matters
and not the magnitude we consider just the vectors on the spherical cap
$A_t \defeq \text{cone}(E_{r,t}) \cap S^{p-1}$
\begin{equation}
\inf_{u \in A_t} \frac{1}{t} \|X_t u\|_2^2 \geq \kappa \|u\|_2^2
\end{equation}
where $S^{p-1}$ is the unit sphere in $\mathbb{R}^p$.
Since $\|u\|_2 = 1$ for all $u \in A_t$ we simply focus on
\begin{equation}
\inf_{u \in A_t} \frac{1}{t} \|X_t u\|_2^2 \geq \kappa
\end{equation}
which suffices in proving the RE condition for the restricted error set.

Now, to show a bound,
we perform the following decomposition. Let $X_t = [x_1, \dots, x_t]^\top$, then
\begin{align*}
\frac{1}{t} \|X_t u\|_2^2 = \frac{1}{t} \sum_{i=1}^{t} \langle x_i,u \rangle^2 
 &= \frac{1}{t} \sum_{i=1}^t \langle x_i - \mu_i,u \rangle^2 - \frac{1}{t} \sum_{i=1}^t \langle \mu_i,u \rangle^2 + \frac{2}{t} \sum_{i=1}^t \langle x_i,u \rangle \langle \mu_i,u \rangle \\
 &= \frac{1}{t} \sum_{i=1}^t \langle x_i - \mu_i,u \rangle^2 + \frac{2}{t} \sum_{i=1}^t \langle \mu_i,u \rangle \langle x_i-\mu_i,u \rangle + \frac{1}{t} \sum_{i=1}^t \langle \mu_i,u \rangle^2
\end{align*}
where $\mu_i = \mathbb{E}[x_i|F_{i-1}]$ and we define the filtration to be
$F_{i-1}=\{x_1,\dots,x_{i-1},\eta_1,\dots,\eta_{i-1}\}$.
Taking the infimum
\begin{align}
\label{eq:xt_mut_inf_bound}
\inf_{u \in A_t} \frac{1}{t} \|X_t u\|_2^2 &= \inf_{u \in A_t} \frac{1}{t} \sum_{i=1}^t \langle x_i - \mu_i,u \rangle^2 + \inf_{u \in A_t} \frac{2}{t} \sum_{i=1}^t \langle \mu_i,u \rangle \langle x_i-\mu_i,u \rangle + \frac{1}{t} \sum_{i=1}^t \langle \mu_i,u \rangle^2 \notag \\
 &\geq \inf_{u \in A_t} \frac{1}{t} \sum_{i=1}^t \langle x_i - \mu_i,u \rangle^2 - \sup_{u \in A_t} \frac{2}{t} \sum_{i=1}^t \langle x_i-\mu_i,u \rangle
\end{align}
where the inequality follows from $|\langle \mu_i,u \rangle| \leq 1$ due to
our assumption that $\mathcal{X} \subseteq B_2^p$ to avoid scaling factors.
Suitable scaling modifications can be made to remove the assumption.
To obtain the bounds we have to bound the quantities
$\inf_{u\in A_t} \frac{1}{t} \sum_{i=1}^t \langle x_i - \mu_i,u \rangle^2$ and
$\sup_{u\in A_t} \frac{2}{t} \sum_{i=1}^t \langle x_t-\mu_i,u \rangle$.

\medskip

\noindent \textbf{1. Bound for $\sup_{u \in A_t} \frac{2}{t} \sum_{i=1}^{t} \langle x_i - \mu_i,u \rangle$}

Observe, for all $i$ that $x_i - \mu_i$ is a bounded vector-valued MDS such that
$\subgnorm{x_i-\mu_i} \leq K$ (see Definition~\ref{defn:subg_norm_constant:sec:asmps_defns}).
Therefore, by the Azuma-Hoeffding inequality we obtain
\begin{align}
P \left ( \frac{1}{\sqrt{t}}\left | \sum_{i=1}^t \langle x_i - \mu_i,u \rangle \right | \geq \gamma \right ) \leq 2 \exp \left ( \frac{-\gamma^2}{2 \|u\|_2^2K^2} \right )~.
\end{align}
Therefore, for $u,v \in A_t$
\begin{equation}
P \left ( \frac{1}{\sqrt{t}}\left | \sum_{i=1}^t \langle x_i - \mu_i,u - v \rangle \right | \geq \gamma \right ) \leq 2 \exp \left ( \frac{-\gamma^2}{2\|u - v\|_2^2K^2} \right )~.
\label{eq:xt-mut_subg_conc}
\end{equation}
From $\myref{eq:xt-mut_subg_conc}$ and using the generic chaining
argument~\citep{tala14} similar to our $\lambda_t$ analysis (Section~\ref{sec:appendix_lambda_t})
it follows that for an absolute constant $L>0$,
\begin{equation}
2\mathbb{E} \left [ \sup_{u \in A_t} \frac{1}{\sqrt{t}} \sum_{i=1}^{t} \langle x_i - \mu_i,u \rangle \right ] \leq 2LKw(A_t)~.
\end{equation}
Therefore,
\begin{equation}
P \left ( \sup_{u \in A_t} \left | \frac{1}{\sqrt{t}} \sum_{i=1}^t \langle x_i - \mu_i,u \rangle \right | \geq 2LKw(A_t) + \alpha \right ) \leq L \exp \left ( - \left ( \frac{\alpha}{LK\phi(A_t)}  \right )^2 \right )~.
\end{equation}
Setting $\alpha = LKw(A_t) \zeta$ gives
\begin{equation}
P \left( \sup_{u \in A_t} \left| \frac{1}{\sqrt{t}} \sum_{i=1}^t \langle x_i - \mu_i,u \rangle \right | \geq LKw(A_t)(2+\zeta) \right) \leq L \exp \left( -\left(\frac{\zeta w(A_t)}{\phi(A_t)}\right)^2 \right)~.
\end{equation}

Now, since the set $A_t$ changes each round and the bound must hold across
all rounds, we put the bound in terms of the largest spherical cap $A_{\max}$
which is defined in Definition~\ref{defn:A:sec:asmps_defns}.
Then, setting
$\zeta^2 = \left(\epsilon^2\phi^2(A_{\max})+\log{T}\left(\frac{\phi^2(A_{\max})}{w^2(A_{\max})}\right)\right)$,
and taking a union bound such that across all rounds we have
\begin{align*}
&\bigcup_{t=1}^T P \left( \sup_{u \in A_t} \left| \frac{1}{\sqrt{t}} \sum_{i=1}^t \langle x_i - \mu_i,u \rangle \right | \geq LKw(A_{\max})\left(2+\sqrt{\epsilon^2\phi^2(A_{\max})+\log{T}\left(\frac{\phi^2(A_{\max})}{w^2(A_{\max})}\right)} \; \right) \right) \\
&\leq L \sum_{t=1}^T \exp \left( -\epsilon^2\phi^2(A_{\max})\left(\frac{w^2(A_{\max})}{\phi^2(A_{\max})}\right) - \log{T}\left(\frac{\phi^2(A_{\max})}{w^2(A_{\max})}\right) \left(\frac{w^2(A_{\max})}{\phi^2(A_{\max})}\right) \right) \\
&= L \sum_{t=1}^T \exp \left( -\epsilon^2w^2(A_{\max}) - \log{T} \right) \\
&= L \sum_{t=1}^T \exp \left( -\epsilon^2w^2(A_{\max})\right) \times \frac{1}{T} \\
&= L \exp \left( -\epsilon^2w^2(A_{\max})\right)~.
\end{align*}

Dividing the other $\sqrt{t}$ through and multiplying by $2$ we obtain
\begin{equation}
\label{eq:xt_mut_linear_bound}
\sup_{u \in A_t} \frac{2}{t} \sum_{i=1}^t \langle x_i - \mu_i,u \rangle \leq \frac{2LKw(A_{\max})\left(2+\sqrt{\epsilon^2\phi^2(A_{\max})+\log{T}\left(\frac{\phi^2(A_{\max})}{w^2(A_{\max})}\right)}\right)}{\sqrt{t}}
\end{equation}
which holds uniformly across all rounds with probability at least 
$1-L \exp \left( -\epsilon^2w^2(A_{\max})\right)$.

\medskip

\noindent \textbf{2. Bound for $\inf_{u \in A_t} \frac{1}{t} \sum_{i=1}^t \langle x_i - \mu_i,u \rangle^2$} \\

To prove a bound on $\inf_{u \in A_t} \frac{1}{t} \sum_{i=1}^t \langle x_i-\mu_i,u \rangle^2$
we use the following result from~\citep{bcfs15} which we extend to martingales
in Section~\ref{ssec:appendix_mend}.
\begin{lemm} \textbf{(\citep{bcfs15}, Theorem 12)}
\label{theo:mds_anisotropic_re}
Let $Z_t \in \mathbb{R}^{t \times p}$ be a design matrix with bounded
martingale difference sequence, anisotropic sub-Gaussian rows, i.e.,
$\mathbb{E}[z_i] < \infty, \mathbb{E}[z_i | F_{i-1}] = 0$ where $F_{i-1}$ is a filtration, 
$\mathbb{E}[z_i z_i^\top] = \Sigma$, and
$\subgnorm{z_i \Sigma^{-1/2}} \leq K \; \forall i$.
Then, for absolute constants $c_0,c > 0$, and any $\gamma > 0$ with probability
at least $1-2\exp(-c_0 w^2(A_t)\gamma^2)$, we have
\begin{equation}
\inf_{u \in A_t} \frac{1}{t} \|Z_t u\|^2_2 \geq \lambda_{\min} (\Sigma|A_t) \left ( 1 - c \frac{w(A_t)\gamma}{\sqrt{t}} \right )
\end{equation}
where $\lambda_{\min}(\Sigma|A_t) = \inf_{u \in A_t} u^\top \Sigma u$
is the restricted minimum eigenvalue of $\Sigma$ restricted to $A_t \subseteq S^{p-1}$.
\end{lemm}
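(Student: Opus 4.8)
The plan is to adapt the i.i.d.\ proof of Theorem~12 in~\citep{bcfs15}, replacing each i.i.d.\ concentration step with a martingale-difference analogue: Azuma--Hoeffding in the sub-Gaussian regime and a Freedman/Bernstein-type martingale inequality in the sub-exponential regime. Everything reduces to a uniform lower bound for $\frac1t\|Z_t u\|_2^2 = \frac1t\sum_{i=1}^t\langle z_i,u\rangle^2$ over $u\in A_t$, which I would obtain by splitting off the conditional mean. Writing $\Sigma_i\defeq\mathbb{E}[z_i z_i^\top\mid F_{i-1}]$ (equal to $\Sigma$ in the homogeneous case, with $\inf_i\lambda_{\min}(\Sigma_i\mid A_t)\ge\lambda_{\min}(\Sigma\mid A_t)$ serving in general), I have $\frac1t\sum_i\mathbb{E}[\langle z_i,u\rangle^2\mid F_{i-1}] = u^\top\big(\frac1t\sum_i\Sigma_i\big)u \ge \lambda_{\min}(\Sigma\mid A_t)$ for all $u\in A_t$, hence
\[
\inf_{u\in A_t}\frac1t\|Z_t u\|_2^2 \;\ge\; \lambda_{\min}(\Sigma\mid A_t) \;-\; \sup_{u\in A_t}|G_t(u)|,
\qquad
G_t(u)\defeq\frac1t\sum_{i=1}^t\Big(\langle z_i,u\rangle^2-\mathbb{E}[\langle z_i,u\rangle^2\mid F_{i-1}]\Big).
\]
It then remains to show $\sup_{u\in A_t}|G_t(u)|$ is small with high probability.

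For the centered quadratic process $G_t$ I would use generic chaining. Its increments factor as $\langle z_i,u\rangle^2-\langle z_i,u'\rangle^2 = \langle z_i,u-u'\rangle\langle z_i,u+u'\rangle$, a product of two sub-Gaussians; since $\subgnorm{z_i\Sigma^{-1/2}}\le K$ gives $\|\langle z_i,a\rangle\|_{\psi_2}\le K\|\Sigma^{1/2}a\|_2$, each increment is a conditionally centered sub-exponential variable with $\subexpnorm{\cdot}\lesssim K^2\lambda_{\max}(\Sigma)\|u-u'\|_2$ (using $\|u+u'\|_2\le 2$ on the sphere). Summing over $i$ via a martingale Bernstein (Freedman) inequality gives a mixed increment tail of the form $2\exp(-c\,t\min\{\varepsilon^2/(K^4\lambda_{\max}^2\|u-u'\|_2^2),\ \varepsilon/(K^2\lambda_{\max}\|u-u'\|_2)\})$, which is exactly the Bernstein-type increment condition of Talagrand's chaining theorem for such processes (Theorem~2.2.27 of~\citep{tala14}, now with both a $\gamma_2$ and a $\gamma_1$ term). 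Bounding these chaining functionals of $A_t$ under the $\ell_2$ metric by $w(A_t)$ and $w^2(A_t)$ respectively yields, for any $\gamma>0$, with probability at least $1-2\exp(-c_0 w^2(A_t)\gamma^2)$,
\[
\sup_{u\in A_t}|G_t(u)| \;\le\; c\,K^2\lambda_{\max}(\Sigma)\Big(\frac{w(A_t)}{\sqrt t}+\frac{w^2(A_t)}{t}\Big)(1+\gamma).
\]
Since the rows are bounded ($\lambda_{\max}(\Sigma)=O(1)$), since we operate in the regime $t\gtrsim w^2(A_t)(\epsilon^2+\log T)$ where $w^2(A_t)/t$ is dominated by $w(A_t)/\sqrt t$, and since the regime of interest has $\gamma\gtrsim1$, this is at most $c'\,\lambda_{\min}(\Sigma\mid A_t)\,w(A_t)\gamma/\sqrt t$; combined with the first display this gives $\inf_{u\in A_t}\frac1t\|Z_t u\|_2^2 \ge \lambda_{\min}(\Sigma\mid A_t)(1-c\,w(A_t)\gamma/\sqrt t)$, which is the claim.

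To complete the proof of Theorem~\ref{theo:re:sec:analysis}, I would apply this with $z_i=x_i-\mu_i$ (a bounded MDS whose per-round conditional covariance restricted to $A_t$ stays bounded below because Step~4 of Algorithm~\ref{alg:gen_struct_bandit:sec:alg} samples $x_{i+1}$ uniformly from an $\ell_2$ ball of radius $\|x_{i+1}'\|_2/2$, spreading the arms in every direction of $A_t$), substitute into the decomposition~(\ref{eq:xt_mut_inf_bound}) together with the linear-term bound~(\ref{eq:xt_mut_linear_bound}), and then choose $\gamma^2\asymp\epsilon^2+\log T$ and take a union bound over $t=1,\dots,T$ exactly as in the $\lambda_t$ analysis of Section~\ref{sec:appendix_lambda_t}; this delivers the stated lower bound for $\inf_{u\in E_{r,t}}\frac1t\|X_t u\|_2^2$ and its positivity once $t\ge c'w^2(A_{\max})(\epsilon^2+\log T)$. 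The main obstacle I anticipate is the uniform control of the centered quadratic process $\sup_u|G_t(u)|$ in the martingale setting: one cannot symmetrize or decouple freely, so a Freedman-type martingale inequality for sums of conditionally sub-exponential terms must be married to a chaining argument tracking the mixed sub-Gaussian/sub-exponential tail, and verifying that the $w^2(A_t)/t$ contribution is genuinely lower order is the quantitative point that fixes the burn-in length $n\asymp w^2(A_{\max})(\epsilon^2+\log T)$. A secondary but essential step is establishing the uniform positive lower bound on $\lambda_{\min}$ of the per-round conditional covariance restricted to $A_t$ --- precisely what the ball-sampling in Step~4 is designed to guarantee.
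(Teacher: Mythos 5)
Your plan diverges from the paper's proof precisely at the step that carries all the difficulty, and that step as written does not go through. After the Freedman/Bernstein step you have mixed sub-Gaussian/sub-exponential increments, and Talagrand-type chaining under such a Bernstein condition yields a bound of the form $\gamma_2(A_t,\ell_2)/\sqrt{t}+\gamma_1(A_t,\ell_2)/t$; you then assert that these functionals are bounded by $w(A_t)$ and $w^2(A_t)$ respectively. The first is the majorizing measure theorem, but the second is false in general for subsets of $S^{p-1}$: for a spherical cap of Euclidean radius $\delta$ one has $\gamma_1(A,\ell_2)\asymp\delta p$ while $w^2(A)\asymp\delta^2 p$, so the claimed inequality fails by a factor $1/\delta$. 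The familiar $w^2(A)/t$ term for quadratic/chaos processes in the i.i.d.\ case is not obtained from $\gamma_1$ of the index set but from decoupling and chaos-chaining arguments (Krahmer--Mendelson--Rauhut, Mendelson--Paouris) that exploit the product structure $\langle z_i,u-u'\rangle\langle z_i,u+u'\rangle$ --- exactly the tools you yourself flag as unavailable in the martingale setting. So the ``quantitative point that fixes the burn-in length'' is left unproved. Two further mismatches with the statement: your additive split bounds $\sup_u|G_t(u)|$ by $cK^2\lambda_{\max}(\Sigma)\,(\cdots)$, so converting to the multiplicative form $\lambda_{\min}(\Sigma|A_t)\bigl(1-c\,w(A_t)\gamma/\sqrt{t}\bigr)$ costs a factor of the restricted condition number $K^2\lambda_{\max}(\Sigma)/\lambda_{\min}(\Sigma|A_t)$, whereas the lemma claims absolute constants $c_0,c$; and your argument never uses the boundedness of the martingale differences, which is in fact the hypothesis the paper's proof leans on.

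The paper avoids all of this by not redoing the chaining at all. It works with the normalized functionals $f_u(\cdot)=\langle\cdot,u\rangle/\sqrt{u^\top\Sigma_i u}$ from Theorem D of Mendelson et al.\ \cite{mept07}, observes that the only place i.i.d.-ness enters that proof is their Lemma 1.2, and re-proves that lemma for bounded martingale difference rows via Azuma--Hoeffding (the three concentration bounds for $W_{f_u-f_v}$, $Z_{f_u}$ and $Z_{f_u}-Z_{f_v}$ in Lemma~\ref{lemm:main}, each a sum of bounded, conditionally centered terms precisely because $\|x_i\|_2\le A$). The rest of Theorem D's chaining is reused as a black box, and the RE bound then follows exactly as in Theorem 12 of \cite{bcfs15}, with the multiplicative form and absolute constants intact. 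To salvage your direct route you would need a genuine martingale analogue of the chaos-process bound (or normalize per-row as the paper does and exploit boundedness); the shortcut $\gamma_1(A_t,\ell_2)\lesssim w^2(A_t)$ is not available.
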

Note, we have made a slight change in the theorem as stated in~\cite{bcfs15} to
put the probability in terms of a parameter $\gamma$. We did this by setting
$\theta = c_1c_4\kappa^2 \frac{w(A_t)\gamma}{\sqrt{t}}$ at the bottom of page 31 and
the rest follows through with some algebra. See~\cite{bcfs15} for more details.

Given Lemma~\ref{theo:mds_anisotropic_re}, let $z_i = x_i - \mu_i$ which is
an MDS for all $i$ and then the design matrix $Z_t = [z_1, \dots, z_t]^\top$.
Each row $z_i \in Z_t$ will have a covariance matrix $\mathbb{E}[z_i z_i^\top] = \Sigma_i$
such that $\subgnorm{z_i \Sigma_i^{-1/2}} \leq \|z_i \Sigma_i^{-1/2}\|_2 \leq K$
for some constant $c_1$.

We can immediately apply Lemma~\ref{theo:mds_anisotropic_re} to get a result in terms of
$\lambda_{\min}(\Sigma_i|A_t)$ however, we must be careful to ensure that
$\lambda_{\min}(\Sigma_i|A_t) \neq 0$. Next, we will argue that for all rows of
$Z_t$ that $\lambda_{\min}(\Sigma_i|A_t) > 0$. The following two paragraphs can
be skipped if it is clear that the minimum eigenvalue of such a centered convex
subset with non-empty interior of an $L_2$ ball is positive.

Under the assumption that the decision set $\mathcal{X}$ is compact convex with
non-empty interior
(Assumption~\ref{asmp:decision_set:sec:asmps_defns}), at each round
$t=1,2,\dots,T$ a single solution $x_t'$ is computed via
(\ref{eq:arm_select_round_t:sec:alg}) in Section~\ref{sec:setting} over which
a closed $L_2$ ball with radius $\xi_x>0$ is centered $\bar{B}_2^p(x_t',\xi_x)$.
A single solution $x_t$ is drawn uniformly at random from the set
$\mathcal{X} \cap \bar{B}_2^p(x_t',\xi_x)$ which is a subset of an $L_2$ ball.
Now, if we define the set $\mathcal{Z}_t = \{x-\mu_t: x \in \mathcal{X} \cap \bar{B}_2^p(x_t', \xi_x)\}$
then $\mathcal{Z}_t$ is a subset of an $L_2$ ball centered at the origin.

Given this, we will use a proof by contradiction. If we do not restrict
ourselves to the set $A_t$, clearly
$\lambda_{\min}(\Sigma_t | A_t) \geq \lambda_{\min}(\Sigma_t)$, then we desire a
bound of the form for some $\nu$
\begin{equation*}
\lambda_{\min}(\Sigma_t) = \inf_{u \in S^{p-1}} u^\top \Sigma_t u \geq \nu > 0~.
\end{equation*}
Assume for a moment that $\lambda_{\min}(\Sigma_t) = 0$. Then, compute the
eigenvalue decomposition of $\Sigma_t$ as $\Sigma_t = V \Lambda V^\top$ where
$V = [v_1, \dots, v_p]$ are the eigenvectors of $\Sigma_t$. If we can believe
our assumption that $\lambda_{\min}(\Sigma_t) = 0$ this implies that
$\mathbb{E}_{z \sim \mathcal{Z}_t}[\langle z, v_p \rangle^2] = 0$. If we define
the set $Z_{v_p} = \{z \in \mathcal{Z}_t: \langle z, v_p \rangle = 0 \}$ then for our assumption
to be true it must be true that $P(z \in Z_{v_p}) = 1$ a.s. However, since there
is zero probability density outside of $\mathcal{Z}_t$ this implies the density
is concentrated on a subspace. Such an implication cannot be true because the
span of $\mathcal{Z}_t$ is $\mathbb{R}^p$, i.e., the set contains all directions.
Therefore, our assumption is false and $\lambda_{\min}(\Sigma_t) \neq 0$ which
implies there exists some constant $\nu$ such that
$\lambda_{\min}(\Sigma_t|A_t) \geq \lambda_{\min}(\Sigma_t) \geq \nu > 0$ for
all $t$.

Now, given the argument that $\lambda_{\min}(\Sigma_i)>0 \; \forall i$, we
define $\lambda_{\min}(\Sigma_{1:t}) = \min\{\lambda_{\min}(\Sigma_1), \dots, \lambda_{\min}(\Sigma_t)\}$.
Then, we can use Lemma~\ref{theo:mds_anisotropic_re} with the largest spherical
cap $A_{\max}$ to obtain the following bound which holds for any of the $t$ rounds and
any $\zeta > 0$.
\begin{equation}
P\left( \inf_{u \in A_t} \frac{1}{t} \sum_{i=1}^t \langle x_i - \mu_t, u \rangle^2 \leq \lambda_{\min}(\Sigma_{1:t}) \left(1-c \frac{w(A_{\max})\zeta}{\sqrt{t}} \right) \right) \leq 2 \exp(-c_0 w^2(A_{\max})\zeta^2)~.
\end{equation}

Setting $\zeta^2 = \frac{\epsilon^2}{c_0} + \frac{\log{T}}{c_0w^2(A_{\max})}$ and applying a union bound we
get the following bound which holds simultaneously
for all rounds $t=1,\dots,T$ and any $\epsilon > 0$.
\begin{align}
\label{eq:xt_mut_square_bound}
&\bigcup_{t=1}^T P \left( \inf_{u \in A_t} \frac{1}{t} \sum_{i=1}^t \langle x_i - \mu_t, u \rangle^2 \leq \lambda_{\min}(\Sigma_{1:t}) \left(1-c \frac{w(A_{\max})\sqrt{\frac{\epsilon^2}{c_0} + \frac{\log{T}}{c_0w^2(A_{\max})}}}{\sqrt{t}} \right) \right) \\
&\leq \sum_{t=1}^T 2 \exp\left(-c_0 w^2(A_{\max})\left(\frac{\epsilon^2}{c_0} + \frac{\log{T}}{c_0w^2(A_{\max})}\right)\right) \notag \\
&= \sum_{t=1}^T 2 \exp\left(-w^2(A_{\max})\epsilon^2 - c_0 w^2(A_{\max})\left(\frac{\log{T}}{c_0w^2(A_{\max})}\right)\right) \notag \\
&= \sum_{t=1}^T 2 \exp\left(-w^2(A_{\max})\epsilon^2\right) \times \frac{1}{T} \notag \\
&= 2 \exp\left(-w^2(A_{\max})\epsilon^2\right)~. \notag
\end{align}

Combining (\ref{eq:xt_mut_inf_bound}), (\ref{eq:xt_mut_linear_bound}), and
(\ref{eq:xt_mut_square_bound}) with probability at least
$1- 2L\exp(-w^2(A_{\max})\epsilon^2)$ we obtain
\begin{align}
&\inf_{u \in A_t} \frac{1}{t} \sum_{i=1}^t \|X_t u\|_2^2 \\
&\geq \lambda_{\min}(\Sigma_{1:t}) \left(1-c\frac{w(A_{\max})\sqrt{\frac{\epsilon^2}{c_0} + \frac{\log{T}}{c_0w^2(A_{\max})}}}{\sqrt{t}} \right)
- 2LK\frac{w(A_{\max})\left(2+\sqrt{\epsilon^2\phi^2(A_{\max})+\log{T}\left(\frac{\phi^2(A_{\max})}{w^2(A_{\max})}\right)}\right)}{\sqrt{t}}~. \notag
\end{align}

For some constant $C>0$ it is true that $C-\frac{C}{2}-\frac{C}{4} > 0$
(where we have chosen to divide $C$ by $2$ and $4$ somewhat arbitrarily) therefore,
setting $C = \lambda_{\min}(\Sigma_{1:t})$, if we can show when
\begin{equation} \label{eq:c_2:sec:re}
c\lambda_{\min}(\Sigma_{1:t})\frac{w(A_{\max})\sqrt{\frac{\epsilon^2}{c_0} + \frac{\log{T}}{c_0w^2(A_{\max})}}}{\sqrt{t}} \leq \frac{C}{2}
\end{equation}
and when
\begin{equation} \label{eq:c_4:sec:re}
2LK\frac{w(A_{\max})\left(2+\sqrt{\epsilon^2\phi^2(A_{\max})+\log{T}\left(\frac{\phi^2(A_{\max})}{w^2(A_{\max})}\right)}\right)}{\sqrt{t}} \leq \frac{C}{4}
\end{equation}
then
$\inf_{u \in A_t} \frac{1}{t} \sum_{i=1}^t \|X_t u\|_2^2 > 0$ will be satisfied.

With some algebraic manipulations, we can see that (\ref{eq:c_2:sec:re}) is satisfied
when
\begin{equation*}
t \geq 4c^2\lambda_{\min}^2(\Sigma_{1:t})w^2(A_{\max})\left(\frac{\epsilon^2}{c_0}+\frac{\log{T}}{c_0w^2(A_{\max})}\right)/C^2
\end{equation*}
and (\ref{eq:c_4:sec:re}) is satisfied when
\begin{equation*}
t \geq 64L^2K^2w^2(A_{\max})\left(2+\sqrt{\epsilon^2\phi^2(A_{\max})+\log{T}\left(\frac{\phi^2(A_{\max})}{w^2(A_{\max})}\right)}\right)^2 / C^2~.
\end{equation*}

Therefore, the RE condition will be satisfied when $t \geq c'w^2(A_{\max})(\epsilon^2+\log{T})$
for some constant $c'>0$ which completes the proof.
\end{proof}
\qed

\subsection{Anisotropic Sub-Gaussian Design Extension to Martingales} \label{ssec:appendix_mend}
We extend~\cite[Theorem 12]{bcfs15} to martingale difference samples
which is an application of Theorem D from~\cite{mept07}. Theorem D relies on
Lemma 1.2~\cite{mept07} which
shows concentrations for i.i.d random samples and is the only part of the
proof which needs to be modified for martingales. As such, we present an extension
of Lemma 1.2 to martingales which generalizes Theorem D to martingales which
can be applied to prove Theorem~\ref{theo:mds_anisotropic_re}. Note, the following
result can be considered independent from the rest of the paper and, as such,
the notation is not inherited but will be re-defined here.
Theorem D is as follows.
\begin{lemm} \textbf{(Mendelson et al. Theorem D)} \label{theo:mend_thrmD}
There exists absolute constants $c_1,c_2$ for which the following holds.
Let $(\Omega,\mu)$ be a probability space, set $F$ be a subset of the unit sphere
of $L_2(\mu)$,i.e., $F \subseteq S_{L_2} = \{f: |\!\|f\|\!|_{L_2} = 1\}$, and
assume that $\emph{diam}(F, \|\cdot\|_{\psi_2}) = \alpha$.
Then, for any $\theta > 0$ and $n \geq 1$ satisfying
\begin{equation}
c_1 \alpha \gamma_2(F,\|\cdot\|_{\psi_2}) \leq \theta \sqrt{n}
\end{equation}
with probability at least $1-\exp\left(-c_2 \frac{\theta^2 n}{\alpha^4}\right)$,
\begin{equation}
\sup_{f \in F} \left|\frac{1}{n} \sum_{i=1}^n f^2(x_i) - \mathbb{E}[f^2] \right| \leq \theta~.
\end{equation}
\end{lemm}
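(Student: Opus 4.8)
The statement is Theorem~D of Mendelson, Pajor and Tomczak-Jaegermann~\cite{mept07}, reproduced here verbatim, so the plan is to recall its proof as a generic-chaining bound for a quadratic empirical process while isolating the single-function concentration (their Lemma~1.2) that the next subsection will replace by a martingale analogue. Write
\[
\Psi_n \defeq \sup_{f \in F}\left|\frac{1}{n}\sum_{i=1}^{n} f^2(x_i) - \mathbb{E}[f^2]\right|,
\]
the quantity to be bounded by $\theta$. Because $F \subseteq S_{L_2}$ we have $\mathbb{E}[f^2]=1$ for every $f$, and the hypothesis $\mathrm{diam}(F,\|\cdot\|_{\psi_2})=\alpha$ means every increment $f-g$ is sub-Gaussian with $\|f-g\|_{\psi_2}\le\alpha$; this geometry drives the whole estimate.

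First I would analyze the centered process $G_f \defeq \frac{1}{n}\sum_{i=1}^n\big(f^2(x_i)-\mathbb{E}[f^2]\big)$, whose increments are the real object of the chaining. The key identity is $f^2-g^2=(f-g)(f+g)$: a product of two sub-Gaussian functions is sub-exponential, and $\|(f-g)(f+g)\|_{\psi_1}\le\|f-g\|_{\psi_2}\,\|f+g\|_{\psi_2}\lesssim \alpha\,\|f-g\|_{\psi_2}$. Hence, for fixed $f,g$, the summands $(f^2-g^2)(x_i)-\mathbb{E}[f^2-g^2]$ are i.i.d., mean-zero, and sub-exponential with $\psi_1$-parameter $\lesssim\alpha\|f-g\|_{\psi_2}$, and Lemma~1.2 of~\cite{mept07} (a Bernstein inequality for i.i.d. sub-exponential sums) yields the two-regime increment bound
\[
P\big(|G_f-G_g|\ge s\big)\le 2\exp\!\left(-c\,n\min\left\{\frac{s^2}{\alpha^2\|f-g\|_{\psi_2}^2},\ \frac{s}{\alpha\|f-g\|_{\psi_2}}\right\}\right).
\]
This is precisely the step that uses the i.i.d. structure of the samples, and the only one to be re-derived for martingale rows in Section~\ref{ssec:appendix_mend}.

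Next I would feed this increment bound into a high-probability generic-chaining estimate for processes with Bernstein-type (mixed sub-Gaussian/sub-exponential) increments~\cite{tala14}, measuring $F$ in the metric $\|\cdot\|_{\psi_2}$ scaled by $\alpha$. The deterministic part of the resulting bound scales as $C\,\alpha\,\gamma_2(F,\|\cdot\|_{\psi_2})/\sqrt n$ (the sub-Gaussian regime), up to a second-order sub-exponential correction that the sample-size hypothesis forces to smaller order, while the stochastic part gives a deviation $s$ outside an event of probability $\exp\!\big(-c\min\{s^2 n/\alpha^4,\ s\,n/\alpha^2\}\big)$. Setting $s=\theta/2$ and invoking $c_1\alpha\gamma_2(F,\|\cdot\|_{\psi_2})\le\theta\sqrt n$ (with $c_1$ absorbing the chaining constant) makes the deterministic part at most $\theta/2$, so $\Psi_n\le\theta$; for the small $\theta$ of interest the minimum in the exponent is attained by $s^2 n/\alpha^4$, yielding the claimed failure probability $\exp(-c_2\theta^2 n/\alpha^4)$.

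I expect the main obstacle to be exactly this chaining step: correctly tracking the powers of $\alpha$ and $n$ through the two tail regimes so that the condition emerges as $\alpha\,\gamma_2(F,\|\cdot\|_{\psi_2})$ and the exponent as $\alpha^4$, and verifying that the second-order (sub-exponential) contribution to the supremum is genuinely dominated by the sub-Gaussian term under the stated hypothesis rather than degrading the rate. All of this is standard once the pairwise tail is in hand; the role of Section~\ref{ssec:appendix_mend} is to supply the martingale version of that single concentration lemma, after which the identical chaining argument delivers the martingale analogue of Theorem~D invoked in Lemma~\ref{theo:mds_anisotropic_re}.
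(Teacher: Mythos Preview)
Your proposal is correct in substance, but note that the paper does not prove this lemma at all: it is stated verbatim as a citation of Theorem~D in~\cite{mept07}, and the paper's actual work is confined to Lemma~\ref{theo:sup_mds} and Lemma~\ref{lemm:main} in Section~\ref{ssec:appendix_mend}, where the i.i.d. concentration (Lemma~1.2 of~\cite{mept07}) is replaced by an Azuma--Hoeffding argument for bounded MDS rows. You have correctly identified this structure---that the chaining skeleton is reused unchanged and only the pairwise tail lemma needs a martingale substitute---so your sketch of the original proof, while more detailed than anything the paper supplies, is an accurate account of what is being invoked and why the extension goes through.
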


In~\cite{mept07}, the proof assumes the samples $x_1, \dots, x_n \in \mathbb{R}^p$
are i.i.d. isotropic sub-Gaussian random vectors. For the problem we are
considering, the samples $x_1, \dots, x_n \in \mathbb{R}^p$ are a bounded
martingale difference sequence (MDS), i.e., $\|x_i\|_2 \leq A$,
$\mathbb{E}[x_i] \leq \infty, \mathbb{E}[x_i | \mathcal{F}_{i-1}] = 0$ where
$\mathcal{F}_i = \{x_1, \dots, x_i\}$ is a filtration. Also we consider the
following class of functions:
\begin{equation}
F \defeq \left \{ f_u: f_u(\cdot) = \frac{1}{\sqrt{\mathbb{E}[\langle x_i,u \rangle^2|\mathcal{F}_{i-1}]}} \langle \cdot,u \rangle = \frac{1}{\sqrt{u^T \Sigma_i u}} \langle \cdot,u \rangle, u \in A \subseteq S^{p-1} \right\}~.
\end{equation}

We allow the covariance matrix
$\Sigma_i = \mathbb{E}[x_i x_i^\top|\mathcal{F}_{i-1}]$ to be different for each
sample $x_i$. The main lemma we will prove is as follows.
\begin{lemm} \label{theo:sup_mds}
For a bounded martingale difference sequence $x_1,\dots,x_n \in \mathbb{R}^p$ where
each $x_i$ is bounded as $\|x_i\|_2 \leq A$, there exists absolute constants
$c_1,c_2 > 0$ for which the following holds. Let
$F$ be the set of linear functionals over the unit sphere
$F \defeq \left\{\frac{1}{\sqrt{u^T \Sigma_i u}}\langle \cdot, u \rangle: u \in A \subseteq S^{p-1} \right\}$
and assume that
$\emph{diam}(F, \|\cdot\|_{\psi_2}) = \alpha$. Then, for any $\theta > 0$ and
$n \geq 1$ satisfying
\begin{equation}
c_1 \alpha \gamma_2(F,\|\cdot\|_{\psi_2}) \leq \theta \sqrt{n}
\end{equation}
with probability at least 
$1-\exp\left(-c_2 \frac{\theta^2 n}{\alpha^4}\right)$,
\begin{equation}
\sup_{u \in S^{p-1}} \left|\frac{1}{n} \sum_{i=1}^n \frac{\langle x_i, u \rangle^2}{u^T \Sigma_i u} - 1 \right| \leq \theta
\end{equation}
where $X = [x_1, \dots x_n]^\top \in \mathbb{R}^{n \times p}$ is the design matrix
and $\Sigma_i = \mathbb{E}[x_i x_i^\top] \in \mathbb{R}^{p \times p}$ is the
population covariance matrix for sample $x_i$.
\end{lemm}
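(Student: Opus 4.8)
The plan is to follow the proof of Theorem~D in~\cite{mept07} essentially line by line, isolating the single place where the i.i.d.\ assumption is used — the concentration estimate for a sum of independent sub-exponential variables (Lemma~1.2 there) — and to replace it by a Bernstein/Freedman-type inequality for martingales. Everything else in that argument, namely the generic-chaining construction of admissible sequences and the deterministic telescoping over the chain, is independence-free and carries over unchanged; so the whole task reduces to producing the right two-regime tail bound for martingale-difference sums and feeding it into the same chaining machinery.

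First I would record the algebraic fact that makes the substitution possible. For fixed $u \in A$ set $D_i^{(u)} \defeq \langle x_i, u\rangle^2 / (u^\top \Sigma_i u) - 1$. Since $\Sigma_i = \mathbb{E}[x_i x_i^\top \mid \mathcal{F}_{i-1}]$, we have $\mathbb{E}[\langle x_i, u\rangle^2 \mid \mathcal{F}_{i-1}] = u^\top \Sigma_i u$, hence $\mathbb{E}[D_i^{(u)} \mid \mathcal{F}_{i-1}] = 0$: the quantity to be controlled, $\frac{1}{n}\sum_{i=1}^n D_i^{(u)} = \frac{1}{n}\sum_i f_u^2(x_i) - 1$, is a normalized martingale-difference sum for \emph{every} $u$. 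The hypothesis $\mathrm{diam}(F,\|\cdot\|_{\psi_2}) = \alpha$ together with $\|x_i\|_2 \le A$ gives that each $f_u$ has sub-Gaussian norm $O(\alpha)$ uniformly in $i$, so $D_i^{(u)}$ is sub-exponential with $\subexpnorm{D_i^{(u)}} = O(\alpha^2)$; and for $u,v \in A$ the increment $f_u^2(x_i) - f_v^2(x_i) = (f_u - f_v)(x_i)\,(f_u + f_v)(x_i)$ is a product of two sub-Gaussians, hence sub-exponential with $\psi_1$-norm $\lesssim \alpha\,\|f_u - f_v\|_{\psi_2}$.

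Next I would establish the one-dimensional tails that drive the chaining. For fixed $u$, $(\sum_{i\le k} D_i^{(u)})_k$ is a martingale with conditionally centered sub-exponential increments and predictable quadratic variation $O(n\alpha^4)$ (using boundedness to bound $\mathbb{E}[(D_i^{(u)})^2\mid\mathcal{F}_{i-1}]$), so a Bernstein-type martingale inequality yields $\mathbb{P}\big(|\tfrac1n\sum_i D_i^{(u)}| \ge t\big) \le 2\exp\!\big(-c\,n\min(t^2/\alpha^4,\ t/\alpha^2)\big)$, and the same computation applied to $f_u^2 - f_v^2$ gives $\mathbb{P}\big(|\tfrac1n\sum_i (f_u^2-f_v^2)(x_i)|\ge t\big) \le 2\exp\!\big(-c\,n\min(t^2/(\alpha^2 d^2),\ t/(\alpha d))\big)$ with $d = d(u,v) = \|f_u - f_v\|_{\psi_2}$. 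These are exactly the mixed-tail hypotheses required by the generic-chaining bound for Bernstein-type processes (Talagrand~\cite{tala14}, the $\gamma_2/\gamma_1$ chaining), so applying it to $(F,\|\cdot\|_{\psi_2})$ produces $\mathbb{E}\sup_{u\in A}|\tfrac1n\sum_i D_i^{(u)}| \lesssim \alpha\gamma_2(F,\|\cdot\|_{\psi_2})/\sqrt n + \alpha^2\gamma_1(F,\|\cdot\|_{\psi_2})/n$ plus a deviation term of the same two-regime form. Under the stated condition $c_1\alpha\gamma_2(F,\|\cdot\|_{\psi_2}) \le \theta\sqrt n$ (and bounding $\gamma_1$ by the diameter so the $1/n$ term is dominated), the expectation and the deviation both collapse to $\le \theta$ on an event of probability at least $1-\exp(-c_2\theta^2 n/\alpha^4)$, which is the claim.

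The main obstacle is the last bookkeeping step: in the martingale setting the ``variance proxy'' appearing in the Bernstein/Freedman bound is the predictable quadratic variation $\sum_i \mathbb{E}[(D_i^{(u)})^2\mid\mathcal{F}_{i-1}]$ rather than a deterministic quantity, and one must verify — uniformly along the chain — that it is $O(n\alpha^4)$ so that the exponent matches $\theta^2 n/\alpha^4$ exactly; the boundedness $\|x_i\|_2\le A$ and the $\psi_2$-diameter assumption are precisely what make this controllable, but matching the constants with~\cite{mept07} (and, downstream, with Lemma~\ref{theo:mds_anisotropic_re}) is the delicate part. An alternative that sidesteps a separate variance estimate is to run the chaining directly with a martingale moment-generating-function argument on the sub-exponential increments, which reproduces the same two-regime tail without ever isolating the quadratic variation.
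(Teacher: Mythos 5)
Your opening plan -- isolate the single place where independence is used (Lemma 1.2 of Mendelson et al.), replace it with a martingale concentration estimate, and leave the rest of the Theorem D argument untouched -- is precisely what the paper does: it proves a martingale analogue of Lemma 1.2 (tail bounds for the three quantities $W_{f_u-f_v}$, $Z_{f_u}$, and $Z_{f_u}-Z_{f_v}$) via Azuma--Hoeffding, exploiting $\|x_i\|_2\le A$ so that the increments $z_i$ are bounded and conditionally centered, and then re-runs the proof of Theorem D verbatim with the new lemma. Your observation that $\mathbb{E}\left[\langle x_i,u\rangle^2\mid\mathcal{F}_{i-1}\right]=u^\top\Sigma_i u$ makes every $D_i^{(u)}$ a martingale difference, and your sub-exponential increment estimates via the factorization $f_u^2-f_v^2=(f_u-f_v)(f_u+f_v)$, are sound and consistent with that plan.

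The gap is in how you finish. Rather than feeding the replaced lemma back into Mendelson et al.'s own chaining, you invoke the generic mixed-tail ($\gamma_2/\gamma_1$) chaining bound for Bernstein-type processes and then propose to dispose of the resulting $\gamma_1$ term by ``bounding $\gamma_1$ by the diameter.'' That step fails: the diameter is, up to a constant, a \emph{lower} bound on $\gamma_1(F,\|\cdot\|_{\psi_2})$, not an upper bound, and $\gamma_1$ is not controlled by $\gamma_2$ in general (it can be infinite while $\gamma_2$ is finite). Since the hypothesis of the lemma constrains only $\alpha\gamma_2(F,\|\cdot\|_{\psi_2})/\sqrt{n}$, the $\gamma_1/n$ contribution is simply uncontrolled on your route. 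The reason Theorem D gives a bound in terms of $\gamma_2$ alone -- and the reason the paper keeps its proof structure intact -- is the chain-splitting argument in Mendelson et al., which uses the tail bound on the square-root process $W_{f_u-f_v}$ together with the product structure of $f_u^2-f_v^2$ to eliminate any $\gamma_1$ dependence; your sketch drops the $W$-process bound entirely. To close the argument, either prove the martingale analogues of all three estimates in Lemma 1.2 (including $W_{f_u-f_v}$, which follows from Azuma--Hoeffding exactly as for $Z_{f_u}$) and then cite the unchanged Theorem D proof, as the paper does and as your own first paragraph suggests, or carry out the product-process chaining explicitly; the generic $\gamma_2/\gamma_1$ bound with a diameter estimate for $\gamma_1$ does not yield the stated conclusion.
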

The proof follows using analogous arguments from~\cite{mept07}.
For $f_u \in F$, we define the random variables $Z_{f_u}$ and $W_{f_u}$ as
\begin{equation}
Z_{f_u} = \frac{1}{n} \|u\|_2^2 \sum_{i=1}^n f^2(x_i) - \mathbb{E}[f^2] = \frac{1}{n} \|u\|_2^2 \sum_{i=1}^n \left( \frac{\langle x_i,u \rangle^2}{u^T \Sigma_i u} -1 \right )~,
\end{equation}
\begin{equation}
W_{f_u} = \left( \frac{1}{n} \|u\|_2^2 \sum_{i=1}^N f^2(x_i) \right)^{1/2} = \left ( \frac{1}{n} \|u\|_2^2 \sum_{i=1}^n \frac{\langle x_i,u \rangle^2}{u^T \Sigma_i u} \right )^{1/2}~.
\end{equation}

We prove the following lemma, which is an analogous result to Lemma 1.2
in~\cite{mept07}.

\begin{lemm}\label{lemm:main}
There exists an absolute constant $c_1 > 0$ for which the following holds. For every $f_u,f_v \in F$ and every $\epsilon > 2$ we have,
\begin{equation} \label{eq:wu_wv}
P(W_{f_u - f_v} \geq \epsilon \|f_u - f_v\|_{\psi_2} ) \leq 2 \exp(-c_1 n \epsilon^4)~.
\end{equation}
Also, for every $u >0 $,
\begin{equation}
\label{eq:sq_eq}
P \left ( |Z_{f_u}| \geq \epsilon \alpha^2  \right ) \leq 2 \exp (-c_1 n \epsilon^2)~,
\end{equation}
\begin{equation}
\label{eq:sq_eq_diff}
P(|Z_{f_u} - Z_{f_v}| \geq \epsilon \alpha \|f_u - f_v\|_{\psi_2}) \leq 2 \exp(-c_1 n \epsilon^2)~.
\end{equation}
\end{lemm}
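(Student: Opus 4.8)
The plan is to reproduce the proof of Lemma~1.2 in~\cite{mept07} essentially verbatim, the only structural change being that every appeal to a concentration inequality for sums of \emph{independent} sub-Gaussian or sub-exponential variables is replaced by the corresponding Freedman/Bernstein-type inequality for bounded martingale difference sequences. Two elementary observations legitimize this substitution. First, each $f_u(x_i) = \langle x_i,u\rangle/\sqrt{u^\top\Sigma_i u}$ is, conditionally on $\mathcal{F}_{i-1}$, a bounded random variable: $|\langle x_i,u\rangle|\le A$ while $u^\top\Sigma_i u \ge \lambda_{\min}(\Sigma_i\mid A)\ge\nu>0$ by the non-degeneracy argument already used in the proof of Theorem~\ref{theo:re:sec:analysis}, so $|f_u(x_i)|\le A/\sqrt{\nu}$ and $f_u(x_i)$ is conditionally sub-Gaussian by Lemma~\ref{lemm:subg_bounded:sec:appendix}; the hypothesis $\mathrm{diam}(F,\|\cdot\|_{\psi_2}) = \alpha$ gives the quantitative control of the relevant $\psi_2$-norms in terms of $\alpha$. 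Second, by the very normalization defining $F$ we have $\mathbb{E}[f_u^2(x_i)\mid\mathcal{F}_{i-1}] = 1$ for every $i$ and every $u\in A$, so the increments below are already conditionally centered.

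With this in place I would first dispatch (\ref{eq:sq_eq}) and (\ref{eq:sq_eq_diff}), which are the cleanest. For (\ref{eq:sq_eq}), write $Z_{f_u} = \frac1n\sum_{i=1}^n (f_u^2(x_i)-1)$; each summand is a martingale difference (conditionally centered by the normalization above) which, being the square of a conditionally sub-Gaussian variable, is conditionally sub-exponential with $\psi_1$-norm $\lesssim\alpha^2$. A Bernstein inequality for sub-exponential martingale differences then gives $P(|Z_{f_u}|\ge\epsilon\alpha^2)\le 2\exp(-c_1 n\epsilon^2)$ in the sub-Gaussian regime. For (\ref{eq:sq_eq_diff}), the key remark is that $Z_{f_u}-Z_{f_v} = \frac1n\sum_{i=1}^n (f_u^2(x_i)-f_v^2(x_i))$ is again a sum of conditionally centered increments, since $\mathbb{E}[f_u^2(x_i)\mid\mathcal{F}_{i-1}] = \mathbb{E}[f_v^2(x_i)\mid\mathcal{F}_{i-1}] = 1$; factoring $f_u^2-f_v^2 = (f_u-f_v)(f_u+f_v)$ as a product of conditionally sub-Gaussian variables shows each increment is conditionally sub-exponential with $\psi_1$-norm $\lesssim \alpha\,\|f_u-f_v\|_{\psi_2}$, and the same martingale Bernstein bound produces $2\exp(-c_1 n\epsilon^2)$.

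The inequality (\ref{eq:wu_wv}) is the one where the quadratic structure is essential. Here $W_{f_u-f_v}^2 = \frac1n\sum_{i=1}^n (f_u-f_v)^2(x_i)$ is \emph{not} conditionally centered; I would split it as $\frac1n\sum_i \big[(f_u-f_v)^2(x_i) - \mathbb{E}[(f_u-f_v)^2(x_i)\mid\mathcal{F}_{i-1}]\big] + \frac1n\sum_i \mathbb{E}[(f_u-f_v)^2(x_i)\mid\mathcal{F}_{i-1}]$, and observe that the conditional $L_2$-norm of $f_u-f_v$ is dominated by its $\psi_2$-norm so the second term is at most $\|f_u-f_v\|_{\psi_2}^2$. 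Consequently, for $\epsilon>2$ the event $W_{f_u-f_v}^2 \ge \epsilon^2\|f_u-f_v\|_{\psi_2}^2$ forces the centered martingale part to exceed a fixed multiple of $\epsilon^2\|f_u-f_v\|_{\psi_2}^2$; applying the martingale Bernstein inequality to that part, whose increments are conditionally sub-exponential with $\psi_1$-norm $\lesssim \|f_u-f_v\|_{\psi_2}^2$, and propagating the deviation through the square yields the claimed $2\exp(-c_1 n\epsilon^4)$, with the restriction $\epsilon>2$ being precisely what places us in the regime where this term is binding.

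The main obstacle is the martingale concentration step itself: I need a Freedman/Bernstein-type inequality for sums of conditionally sub-exponential martingale differences stated in terms of the \emph{conditional} $\psi_1$-norms of the increments (so that $\alpha$ and $\|f_u-f_v\|_{\psi_2}$ — and not a crude uniform constant — appear in the bounds), together with the verification that the conditional mean and variance proxies are uniformly controlled over $i$. A secondary subtlety, which matters once this lemma is fed back into Lemma~\ref{theo:sup_mds} and Theorem~\ref{theo:mds_anisotropic_re}, is that $f_u$ is genuinely an $i$-indexed family because $\Sigma_i$ varies with $i$; one must check that taking $\|f_u-f_v\|_{\psi_2}$ to be the supremum over $i$ of the per-index $\psi_2$-distances still defines a legitimate metric on $A\subseteq S^{p-1}$ under which the generic-chaining argument of~\cite{mept07} reproduced in Lemma~\ref{theo:sup_mds} closes. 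Once the martingale Bernstein inequality is nailed down, the remaining manipulations are identical to the i.i.d.\ case.
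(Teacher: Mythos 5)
Your overall plan—keep the structure of Lemma 1.2 of Mendelson et al.\ and swap the i.i.d.\ concentration step for a martingale one, using the normalization $\mathbb{E}[f_u^2(x_i)\mid\mathcal{F}_{i-1}]=1$ to center the increments—is exactly the paper's strategy. But the concentration inequality you chose does not deliver the stated tail exponents, and your regime bookkeeping is backwards. If you treat the squared increments only as conditionally sub-exponential with $\psi_1$-norm $K$ and apply a Freedman/Bernstein bound, a deviation of size $t$ costs $\exp\bigl(-cn\min(t^2/K^2,\,t/K)\bigr)$. For (\ref{eq:wu_wv}) the relevant deviation is $t\asymp\epsilon^2 K$ with $K\asymp\|f_u-f_v\|_{\psi_2}^2$, so for $\epsilon>2$ the minimum is the \emph{linear} term $\epsilon^2$, not $\epsilon^4$: the restriction $\epsilon>2$ pushes you out of the sub-Gaussian regime rather than into it, and your argument yields only $2\exp(-c_1n\epsilon^2)$, weaker than the claimed $2\exp(-c_1n\epsilon^4)$. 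Similarly, for (\ref{eq:sq_eq}) and (\ref{eq:sq_eq_diff}) the Bernstein route gives $\exp(-c_1n\min(\epsilon^2,\epsilon))$, i.e.\ the claimed $\exp(-c_1n\epsilon^2)$ only for $\epsilon\lesssim 1$ (you concede this with "in the sub-Gaussian regime"), whereas the lemma asserts it for every $\epsilon>0$, and the chaining argument downstream does use the large-deviation scales.

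The fix is already contained in your own first observation and is what the paper does: since $\|x_i\|_2\le A$ and $u^\top\Sigma_i u\ge\nu>0$, the increments $z_i=\frac{\langle x_i,u-v\rangle^2}{(u-v)^\top\Sigma_i(u-v)}-\mathbb{E}[\,\cdot\mid\mathcal{F}_{i-1}]$ (and the analogous ones for $Z_{f_u}$ and $Z_{f_u}-Z_{f_v}$) are \emph{bounded} martingale differences, so Azuma--Hoeffding applies directly and gives a Gaussian-type tail $\exp(-nt^2/2B^2)$ at \emph{every} deviation scale $t$. Taking $t\asymp\epsilon^2$ (after translating $\|f_u-f_v\|_{\psi_2}^2\asymp B^2\|u-v\|_2^2$ and using $\epsilon>2$ to absorb the subtracted conditional mean) produces the $\exp(-c_1n\epsilon^4)$ rate in (\ref{eq:wu_wv}), and taking $t\asymp\epsilon$ produces $\exp(-c_1n\epsilon^2)$ in (\ref{eq:sq_eq}) and (\ref{eq:sq_eq_diff}) uniformly in $\epsilon$; no sub-exponential Bernstein inequality is needed. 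Your secondary concern about the $i$-dependence of $\Sigma_i$ (hence of $f_u$) is legitimate but is shared by the paper's own treatment; it does not affect the concentration step, only the bookkeeping of the metric fed into the chaining in Lemma~\ref{theo:sup_mds}.
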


\begin{proof}
First, we prove (\ref{eq:wu_wv}). The process $W_{f_u - f_v}$ is defined as follows:
\begin{equation}
W_{f_u - f_v} = \left ( \frac{1}{n} \|u-v\|_2^2  \sum_{i=1}^n \frac{\langle x_i,u-v \rangle^2}{(u-v)^T \Sigma_i (u-v)} \right )^{1/2}   ~.
\end{equation}
We define the random variable
$z_i = \frac{\langle x_i,u-v \rangle^2}{(u-v)^T \Sigma_i (u-v)} - \mathbb{E} \left [ \frac{\langle x_i,u-v \rangle^2}{(u-v)^T \Sigma_i (u-v)} |  \mathcal{F}_{i-1} \right ]$
which implies $z_i$ is a bounded MDS as $\mathbb{E}[z_i|\mathcal{F}_{i-1}] = 0$ and
$|z_i| \leq c A^2 = B$ for some constant $c$. Therefore, by the Azuma-Hoeffding
inequality,
\begin{align*}
&P \left ( \frac{1}{n} \left | \sum_{i=1}^n z_i \right | \geq t \right ) \leq 2 \exp \left ( \frac{nt^2}{2B^2} \right ) \\
= &P \left ( \frac{1}{n} \left | \sum_{i=1}^n \frac{\langle x_i,u-v \rangle^2}{(u-v)^T \Sigma_i (u-v)} - \mathbb{E} \left [ \frac{\langle x_i,u-v \rangle^2}{(u-v)^T \Sigma_i (u-v)} \Big|  \mathcal{F}_{i-1} \right ] \right | \geq t \right ) \\
= &P \left ( \frac{1}{n} \left | \sum_{i=1}^n \frac{\langle x_i,u-v \rangle^2}{(u-v)^T \Sigma_i (u-v)} - 1 \right | \geq t \right ) \\
= &P \left ( \frac{1}{n} \|u-v\|_2^2 \left | \sum_{i=1}^n \frac{\langle x_i,u-v \rangle^2}{(u-v)^T \Sigma_i (u-v)} - 1 \right | \geq t\|u-v\|_2^2 \right ) \\
= &P \left ( W_{f_u-f_v}^2 - \|u-v\|_2^2 \geq t\|u-v\|_2^2 \right ) \\
= &P \left ( W_{f_u-f_v}^2 \geq c_2B^2\epsilon^2\|u-v\|_2^2 \right ) \quad \text{(Setting $t=c_2B^2\epsilon^2-1$)}\\
= &P \left ( W_{f_u-f_v}^2 \geq \epsilon^2\|f_u-f_v\|_{\psi_2}^2 \right ) \\
= &P \left ( W_{f_u-f_v} \geq \epsilon\|f_u-f_v\|_{\psi_2} \right ) \leq 2\exp \left( -c_1 n \epsilon^4 \right) ~.
\end{align*}
The first equality follows by replacing the value of $z_i$,
the second by noting that
$\mathbb{E} \left [ \frac{\langle x_i,u-v \rangle^2}{(u-v)^T \Sigma_i (u-v)} |  \mathcal{F}_{i-1} \right ] = 1$,
the third by multiplying both sides by $\|u-v\|_2^2$,
the fifth by setting $t = c_2 \epsilon^2 B^2 - 1$ for some constant $c_2 > 0$
and noting that
$\|f_u - f_v \|_{\psi_2}^2 = \|u-v\|_2^2 \sup_{\frac{u-v}{\|u-v\|_2} \in S^{p-1}} \left \langle x_i,\frac{u-v}{\|u-v\|_2} \right \rangle^2 = c_2 B^2\|u-v\|_2^2$,
and then taking the square root.

We use similar arguments to prove (\ref{eq:sq_eq}).
Let $z_i = \frac{\langle x_i,u \rangle^2}{u^T \Sigma_i u} - \mathbb{E}\left[\frac{\langle x_i,u \rangle^2}{u^T \Sigma_i u} | F_{i-1} \right]$.
By the argument given earlier, $z_i$ is a bounded MDS and $|z_i| \leq cB$ for
some constant $c$. Using the Azuma-Hoeffding inequality we obtain
\begin{align*}
&P \left ( \frac{1}{n} \|u\|_2^2 \left | \sum_{i=1}^n z_i \right | \geq t \right ) \\
= &P \left ( \frac{1}{n} \|u\|_2^2 \left | \sum_{i=1}^n \frac{\langle x_i,u \rangle^2}{u^T \Sigma_i u} - \mathbb{E}\left[\frac{\langle x_i,u \rangle^2}{u^T \Sigma_i u} | F_{i-1} \right] \right| \geq t \right ) \\
= &P \left ( \frac{1}{n} \|u\|_2^2 \left | \sum_{i=1}^n \frac{\langle x_i,u \rangle^2}{u^T \Sigma_i u} - 1 \right| \geq t \right ) \\
= &P \left (| Z_{f_u} | \geq t \right ) \leq 2 \exp \left ( - \frac{nt^2}{2c^2B^2} \right )~.
 \end{align*}
Let $t = \epsilon cB$ and noting that $\alpha^2 = c B$ we get the following bound
for some constant $c_1 > 0$
\begin{equation}
P \left ( |Z_{f_u}| \geq \epsilon \alpha^2  \right ) \leq 2 \exp (-c_1 n \epsilon^2)~.
\end{equation}
For the proof of~\myref{eq:sq_eq_diff} note that
\begin{equation}
Z_{f_u} - Z_{f_v} = \frac{1}{n} \sum_{i=1}^n \frac{\langle x_i,u \rangle^2}{u^T \Sigma_i u} - \frac{\langle x_i,v \rangle^2}{v^T \Sigma_i v}~.
\end{equation}
Let
$z_i = \frac{\langle x_i,u \rangle^2}{u^T \Sigma_i u} - \frac{\langle x_i,v \rangle^2}{v^T \Sigma_i v} - \mathbb{E}\left[\frac{\langle x_i,u \rangle^2}{u^T \Sigma_i u} - \frac{\langle x_i,v \rangle^2}{v^T \Sigma_i v}| F_{i-1}\right]$.
Again $z_i$ is a bounded MDS with $|z_i| \leq \alpha \|f_u - f_v\|_{\psi_2}$.
\begin{align*}
&P\left(\frac{1}{n} \left| \sum_{i=1}^n z_i \right| \geq t \right) \\
= &P\left(\frac{1}{n} \left| \sum_{i=1}^n \frac{\langle x_i,u \rangle^2}{u^T \Sigma_i u} - \frac{\langle x_i,v \rangle^2}{v^T \Sigma_i v} - \mathbb{E}\left[\frac{\langle x_i,u \rangle^2}{u^T \Sigma_i u} - \frac{\langle x_i,v \rangle^2}{v^T \Sigma_i v}| F_{i-1}\right] \right| \geq t \right) \\
= &P\left(\frac{1}{n} \left| \sum_{i=1}^n \frac{\langle x_i,u \rangle^2}{u^T \Sigma_i u} - \frac{\langle x_i,v \rangle^2}{v^T \Sigma_i v} \right| \geq t \right) \\
= &P(|Z_{f_u} - Z_{f_v}| \geq \epsilon \alpha \|f_u - f_v\|_{\psi_2}) \leq 2 \exp(-c_1 n \epsilon^2)~.
\end{align*}

The second inequality follows since 
$\mathbb{E}\left[\frac{\langle x_i,u \rangle^2}{u^T \Sigma_i u} - \frac{\langle x_i,v \rangle^2}{v^T \Sigma_i v}| F_{i-1}\right] = 0$
and the last equality follows by setting $t = c_2 B \alpha \|f_u-f_v\|_{\psi_2}\epsilon$.
This concludes the proof of Lemma~\ref{lemm:main} and following the same proof
of Theorem D using Lemma~\ref{lemm:main} instead of Lemma 1.2~\cite{mept07} we
prove Lemma~\ref{theo:sup_mds}.
\end{proof}
\qed

Finally, for the proof of Lemma~\ref{theo:mds_anisotropic_re}, we follow the
same application as in~\cite[Theorem 12]{bcfs15}. This concludes the proof of
Lemma~\ref{theo:mds_anisotropic_re}.


\bibliographystyle{plainnat}
\bibliography{library,structured_bandits}

\end{document}